\documentclass{article}

\usepackage[preprint]{neurips_2026}

\usepackage[utf8]{inputenc} % allow utf-8 input
\usepackage[T1]{fontenc}    % use 8-bit T1 fonts
\usepackage{hyperref}       % hyperlinks
\usepackage{url}            % simple URL typesetting
\usepackage{booktabs}       % professional-quality tables
\usepackage{amsfonts}       % blackboard math symbols
\usepackage{nicefrac}       % compact symbols for 1/2, etc.
\usepackage{microtype}      % microtypography
\usepackage{xcolor}         % colors
\usepackage{enumitem}
\usepackage{multirow}
\usepackage{placeins}

\usepackage{siunitx}
\usepackage{pgfplots}
\pgfplotsset{compat=1.18}

\usepackage{bm}
\newcommand{\br}{\bm{r}}

\newcommand{\bI}{\bm{I}}
\newcommand{\bu}{\bm{u}}
\newcommand{\bz}{\bm{z}}
\newcommand{\bx}{\bm{x}}
\newcommand{\by}{\bm{y}}
\newcommand{\bw}{\bm{w}}
\newcommand{\bb}{\bm{b}}
\newcommand{\bp}{\bm{p}}
\newcommand{\bP}{\bm{P}}
\newcommand{\bR}{\bm{R}}
\newcommand{\bA}{\bm{A}}

\newcommand{\bg}{\bm{g}}
\newcommand{\bk}{\bm{K}}
\newcommand{\bQ}{\bm{Q}}

\newcommand{\bDelta}{\bm{\Delta}}

\newcommand{\melo}{\textup{\textsc{MELO}}}

\usepackage{amsmath, amssymb, amsthm}

\newtheorem{theorem}{Theorem}

\newtheorem{lemma}[theorem]{Lemma}
\newtheorem{corollary}[theorem]{Corollary}
\newtheorem{remark}{Remark}

\newcommand{\appendixtheorems}{%
  \setcounter{theorem}{0}%
  \setcounter{assumption}{0}%
  \setcounter{remark}{0}%
  \renewcommand{\thetheorem}{A.\arabic{theorem}}%
  \renewcommand{\thelemma}{A.\arabic{theorem}}%
  \renewcommand{\thecorollary}{A.\arabic{theorem}}%
  \renewcommand{\theassumption}{A.\arabic{assumption}}%
  \renewcommand{\theremark}{A.\arabic{remark}}%
}

\usepackage{algorithm}
\usepackage{algorithmic}

\usepackage{graphicx}

\usepackage{tikz}
\usetikzlibrary{positioning}
\usetikzlibrary{calc}

\title{Hedging Memory Horizons for Non-Stationary Prediction via Online Aggregation}

\author{%
  Yutong Wang \\
  Department of Statistics, LSE \\
  London, UK \\
  \texttt{y.wang487@lse.ac.uk} \\
  \And
  Yannig Goude \\
  \'{E}lectricit\'{e} de France R\&D \\
  Palaiseau, France \\
  \texttt{yannig.goude@edf.fr} \\
  \And
  Qiwei Yao \\
  Department of Statistics, LSE \\
  London, UK \\
  \texttt{q.yao@lse.ac.uk} \\
}

\begin{document}

\maketitle

\begin{abstract}
    We study online prediction under distribution shift, where
    inputs arrive chronologically and outcomes are revealed only after prediction. In this setting, 
    predictors must remain stable in quiet regimes yet adapt when regimes shift,
    and the right adaptation memory is unknown in advance. % the problem
    We propose \melo{} (Memory-hedged Exponentially Weighted Least-Squares Online
    aggregation), a model-agnostic method that hedges across adaptation scales: it wraps any non-anticipating base-predictor pool
    with exponentially weighted least-squares (EWLS) adaptation experts at multiple
    forgetting factors, and aggregates raw and EWLS-adapted forecasts with MLpol which is 
    a parameter-free online aggregation rule. % method
    Under boundedness conditions, we establish deterministic oracle inequalities showing that it competes with both the best raw predictor and the best bounded, time-varying affine combinations of the base predictions, up to a path-length-dependent tracking cost and a sublinear aggregation overhead.  % theory
    We evaluate \melo{} on French national
    electricity-load forecasting through the COVID-19 lockdown using no
    regime indicators, lockdown dates, or policy covariates. 
    \melo{} reduces overall
    RMSE by \(34.7\%\) relative to base-only MLpol 
    and achieves lower overall RMSE than a TabICL reference supplied with an external COVID
    policy-response covariate.  
    \melo{} requires only lightweight per-step recursive updates without model retraining. % empirical performance and computation efficiency
\end{abstract}

% -----------------------------------------------------------------------------
\section{Introduction}
% -----------------------------------------------------------------------------
Operational forecasters can fail abruptly when reality changes faster than
their update mechanism. The COVID-19 lockdown sharply disrupted French national
electricity load within days. Models calibrated on years of pre-pandemic data
incurred large short-run errors, while reliable retraining required enough
post-shift labels and therefore could not provide an immediate correction.
This pattern recurs across deployed forecasting settings, including short-term
energy load, retail demand, financial quantities, and user behaviour. In these
settings, inputs arrive chronologically and models are evaluated on future
periods; the deployment distribution may also shift after training. Throughout,
\emph{causal} means non-anticipating in time: the prediction at time \(t\) may
use only inputs and labels revealed before \(t\), and not future outcomes. The
appropriate adaptation memory is therefore both unknown in advance and itself
non-stationary: a horizon well tuned to one regime can be badly miscalibrated
for the next.

Existing approaches address only part of this trade-off.
Offline-trained models, such as tree ensembles \citep{chen2016xgboost,ke2017lightgbm}, GAMs \citep{hastie1986generalized, wood2017generalized}, deep tabular
networks \citep{gorishniy2021revisiting}, and tabular foundation models
\citep{hollmann2023tabpfn,qu2025tabicl},
encode rich historical structure but cannot adapt without explicit retraining
or reconditioning, which requires labels not yet available immediately after a
regime shift.
Adaptive filters update online, including recursive least-squares methods \citep{hayes1996statistical,sayed2003fundamentals}, and Kalman-style forecast corrections for load forecasting \citep{obst2021adaptive,de2022state}.
However, a single adaptation memory typically governs their behaviour: long
memory is stable but slow at breaks, whereas short memory adapts quickly but is
noisy in quiet periods.
Adaptive Kalman variants
\citep{huang2018vbakf,devilmarest2024viking} attempt to infer adaptation
scales online, but each does so within its own state-space parametrisation and
update mechanism.
Classical forecast combination and online expert aggregation
\citep{bates1969combination,stock2004combination,cesa2006prediction,
gaillard2014second,devaine2013forecasting} weight heterogeneous forecasters by
realised loss. This gives robustness to the available expert pool, but the aggregate cannot move outside their convex hull with
simplex weights over raw forecasts. A shared post-shift bias is therefore inherited rather than
corrected.

We propose \melo{} (Memory-hedged Exponentially Weighted Least-Squares Online
aggregation), a causal adaptation layer for online prediction under
deployment-time shift. Rather than committing to a single memory horizon,
\melo{} exposes several horizons as competing experts and lets realised
forecasting loss decide how much to trust each. Concretely, the entire
base-prediction vector is fed into a time-varying affine combination layer.
Unlike convex aggregation over the raw forecasts, this layer can move outside
their convex hull when post-shift bias must be corrected. It maintains online
corrections at multiple adaptation speeds in parallel, while keeping the raw
forecasts as conservative fallbacks. The choice of which correction speed to
trust is then deferred to a loss-driven outer aggregation rule. The resulting
procedure requires no regime labels, change-point annotations, or base-model
retraining.

\paragraph{Contributions.}
\begin{itemize}[leftmargin=*]
    \item We introduce \melo{}, a model-agnostic causal adaptation layer motivated by residual-aware correction. It treats base forecasts as nonlinear representations with potentially complementary errors, wraps them
    with multi-scale EWLS experts, and aggregates raw and corrected forecasts
    with MLpol using lightweight recursive updates.

    \item We prove deterministic oracle inequalities, under explicit boundedness/stability conditions, showing that \melo{} competes with both the best raw predictor and the best correction memory scale in the EWLS grid, balancing prediction robustness with the cost of adapting to a changing affine comparator. % finite-memory error against comparator drift.

    \item On COVID-era French electricity-load forecasting, \melo{} outperforms base-only aggregation and matched adaptive-filter baselines, and also beats a stronger-information TabICL+GRI reference on overall RMSE.
\end{itemize}

% -----------------------------------------------------------------------------
\section{Related Work}
% -----------------------------------------------------------------------------

\paragraph{Online aggregation and prediction with expert advice.}
Prediction with expert advice gives an online-learning perspective on classical
forecast combination~\citep{bates1969combination,timmermann2006forecast,cesa2006prediction}:
a forecaster combines a finite pool of experts sequentially and is evaluated by
regret against hindsight benchmarks, from the best fixed expert~\citep{cesa2006prediction},
to shifting experts~\citep{herbster1998tracking,bousquet2002tracking} and, more
broadly, a time-varying comparator sequence in dynamic
regret~\citep{zinkevich2003online,cesabianchi2012mirror,zhang2018adaptive}. Second-order
aggregation methods such as BOA~\citep{wintenberger2017optimal} and
MLpol~\citep{gaillard2014second} use data-dependent updates that adapt to the
realised loss sequence; we use MLpol as a parameter-free aggregation rule.
Conceptually closer to our multi-scale design, \citet{bousquet2002tracking}
introduce mixing past posteriors as an implicit form of memory-scale hedging,
and \citet{cesabianchi2012mirror} provide a unified mirror-descent analysis of
fixed-share, weight-sharing, and related tracking schemes; MELO makes the
memory hedging explicit by exposing a grid of EWLS adaptation horizons as
competing experts. Online expert aggregation has also been applied to financial
nonstationarity~\citep{miao2023online}, where a multiplicative-weights
aggregator over a heterogeneous model pool (linear, tree, neural) achieves
robust performance through COVID; MELO differs in hedging across memory
horizons of the same learner family rather than across model classes, and in
maintaining recursive expert state rather than refitting on rolling windows.
Within aggregation theory, Q-aggregation~\citep{dai2012deviation,dai2014aggregation}
and $\phi$-divergence FTRL~\citep{alquier2021non} sharpen the
deviation behaviour and unboundedness handling of EWA in the batch and
unbounded-loss settings respectively; MLpol's polynomial potential plays an
analogous role online. Our contribution lies at the level of the expert pool:
we augment raw predictors with a multi-scale family of online residual-correction
experts, so the aggregate can choose, by realised squared loss, between stable
historical predictors and rapidly adapting combinations.

\paragraph{Tabular prediction under temporal distribution shift.}
Modern tabular prediction is dominated by strong tree ensembles
\citep{chen2016xgboost,ke2017lightgbm}, with growing interest in deep tabular
models and tabular foundation models
\citep{gorishniy2021revisiting,hollmann2023tabpfn,qu2025tabicl}. Much of this 
literature evaluates on random splits, where training and test samples are 
approximately exchangeable; the recent chronological benchmark TabReD
\citep{rubachev2024tabred} shows that this can hide large performance gaps 
under temporal distribution shift. Our goal is orthogonal to tabular 
architecture design: any causal tabular predictor can be used as a base 
expert, while the EWLS expert learns online combinations from realised 
forecast errors.

\paragraph{Concept drift and deployment-time adaptation.}
Streaming and concept-drift methods address non-stationarity through sliding 
windows, drift detection, online ensembles, or adaptive retraining
\citep{gama2014survey,bifet2007learning,gomes2017adaptive}. Test-time and 
deployment-time adaptation methods modify model statistics, representations, or 
predictions during inference, with or without labels
\citep{kim2025battling,liu2025improving,lee2025lightweight}. We study the 
label-revealed-after-prediction setting, where adaptation can be driven directly 
by realised forecast errors. The proposed EWLS layer therefore provides a 
model-agnostic supervised adaptation mechanism, without committing to a single 
drift detector, retraining rule, or hand-chosen window schedule. Unlike test-time adaptation methods that modify base-model internals, \melo{} leaves base predictors untouched and operates entirely as an outer aggregation layer.

\paragraph{Adaptive filtering and adaptive Kalman methods.}
EWLS/RLS adaptive filters track time-varying linear relationships through
least-squares recursions \citep{hayes1996statistical,sayed2003fundamentals}, while Kalman filtering
gives the classical recursive prediction--correction framework for linear
state-space models \citep{kalman1960new}.
Adaptive Kalman-style variants can estimate process uncertainty or adaptation scales online
\citep{huang2018vbakf,devilmarest2024viking}, and related state-space corrections 
have also been developed for electricity-load forecasting through 
Kalman-style updates of forecasting-model coefficients 
\citep{obst2021adaptive,de2022state}. These approaches are usually tied to a 
particular state-space or model-update parametrisation. We instead combine 
the entire base-prediction pool, run several fixed-scale filters as experts 
in parallel, and use an outer loss-driven aggregator to hedge the unknown
adaptation time scale.

% -----------------------------------------------------------------------------
\section{Methodology} 
\label{sec:method}
% -----------------------------------------------------------------------------

\begin{figure}[ht]
    \setlength{\abovecaptionskip}{0pt}   
    \setlength{\belowcaptionskip}{0pt}
    \centering
    \includegraphics[width=0.95\linewidth]{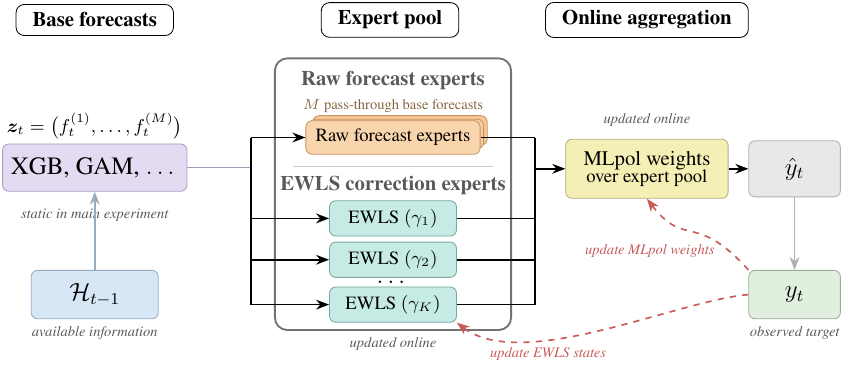}
    \caption{Overview. Causal base predictions \(\bz_t\) enter the
    expert pool directly and through \(K\) EWLS experts with different forgetting
    factors. MLpol aggregates the \(M+K\) raw and adapted candidates into
    \(\hat y_t\); after \(y_t\) is observed, both EWLS states and MLpol weights are
    updated online.}
    \label{fig:architecture}
\end{figure}

We study sequential one-step-ahead prediction. Let \(\mathcal H_{t-1}\) denote all information
available to the learner before $y_t$ is revealed.
By convention, $\mathcal{H}_{t-1}$ subsumes the entire past (previous covariates, predictions, and revealed outcomes) as well as any current period inputs known prior to predicting $y_t$, such as calendar variables, lagged outcomes, or exogenous covariates whose values for time $t$ are already observed or forecast.

At round $t$, the learner uses $\mathcal{H}_{t-1}$ to form a prediction $\hat y_{t}$, then observes \(y_t\) and
incurs the one-step squared loss $\ell_t(\hat y_t) = (\hat y_t-y_t)^2$. For the prediction sequence \(\hat y=(\hat y_1,\ldots,\hat y_T)\), the objective is to minimize the cumulative loss 
\begin{equation}
    \label{eq:loss}
    L_T(\hat y)=\sum_{t=1}^T \ell_t(\hat y_t)
    = \sum_{t=1}^T(\hat y_t-y_t)^2 .
\end{equation}

Let \( z_{t,m} = f^{(m)}_t(\mathcal H_{t-1}) \) be the prediction produced by base forecaster $m \in [M]$,
and collect the base predictions as
$\bz_t \equiv (z_{t,1},\ldots,z_{t,M})^\top \in \mathbb{R}^M$.
This formulation is deliberately general:
a rolling-retrain model corresponds to an $f^{(m)}_t$ that is refitted
each round using training data contained in $\mathcal H_{t-1}$;
a static model corresponds to a fixed map $f^{(m)}_t \equiv f^{(m)}$
evaluated on the current-period inputs encoded in $\mathcal H_{t-1}$;
and physics-based or rule-based forecasters arise as deterministic
mappings of $\mathcal H_{t-1}$.

The base forecasters are treated as black-box causal prediction streams. We only require their forecasts to be available before $y_t$. They may be fixed or causally updated, provided the expert identities remain coherent across rounds. In the main experiments they are trained once and kept fixed, isolating MELO's adaptation from any benefit of base retraining.

\paragraph{Step 1. EWLS experts.} Put \(\tilde{\bz}_t=(\bz_t^\top,1)^\top\). For a 
forgetting factor $\gamma \in [1/2,1)$, the EWLS expert gives a prediction $\check{y}_{t}^{(\gamma)} = \tilde{\bz}_{t}^\top \bw_{t}^{(\gamma)} $, where the coefficient vector is the exponentially weighted ridge solution
based only on past observations:
\begin{equation}
    \bw_{t}^{(\gamma)} \in \arg\min_{\bw} 
    \Big\{ \sum_{s=1}^{t-1} \gamma^{(t-1)-s} (y_s - \bw^\top \tilde{\bz}_s)^2 + \gamma^{t-1} \delta {\|\bw\|}_2^2  \Big\}.
    \label{eq:ewls}
\end{equation}
Here \(\delta>0\) is a small ridge ensuring the discounted Gram matrix is invertible.

Thus each EWLS expert learns a time-varying affine combination of the base
predictions, with an intercept. 
The constant $1$ appended to \(\bz\) gives each EWLS expert a time-varying intercept, allowing it to 
absorb level shifts in \(y_t\), such as a sudden drop in baseline demand, without forcing the 
adaptation through the base coefficients.
The forgetting factor controls the adaptation scale via nominal memory length
$h(\gamma)={1}/{(1-\gamma)}$: smaller \(h(\gamma)\) reacts faster but is
noisier, whereas larger \(h(\gamma)\) is more stable but adapts more slowly.
Since the appropriate adaptation scale is unknown and may itself change over
time, \melo{} maintains \(K\) EWLS experts with forgetting factors
$\Gamma=\{\gamma_1,\ldots,\gamma_K\}$.

\paragraph{Step 2. MLpol aggregation over base and EWLS experts.} At time $t$, \melo{} forms a pool of $N=M+K$ candidate predictions by concatenating the raw base predictors and the EWLS experts:
\[
\tilde \by_{t} \equiv (\tilde y_{t,1}, \cdots, \tilde y_{t,N})^\top
=\big( z_{t,1}, \cdots, z_{t,M},
\check{y}_{t}^{(\gamma_1)}, \cdots,
\check{y}_{t}^{(\gamma_K)}\big)^\top.
\]
The final predictor is MLpol aggregation~\citep{gaillard2014second}
$\hat y_{t} = \bp_t^\top \tilde \by_{t}$. 
MLpol assigns weights according to positive cumulative pseudo-regrets: 
\begin{equation} \label{eq:mlpol-update}
    p_{t,j} = \frac{[R_{t-1,j}]_+}{\sum_{i=1}^{N} [R_{t-1,i}]_+}, \qquad 
    R_{t-1,j} = \sum_{s=1}^{t-1} 2(\hat y_s - y_s) (\hat y_s - \tilde y_{s,j}),
\end{equation}
where \([x]_+=\max\{x,0\}\). If all positive parts vanish, uniform weights apply. 
This is the gradient-trick form for squared loss: an expert receives positive pseudo-regret
when it would have moved the aggregate prediction toward the observed target.
Raw base predictors are included directly in the pool, so the aggregate can
fall back to them when online correction is unnecessary.
Each candidate prediction is an affine function of $\tilde \bz_t$, so the aggregate can equivalently be analysed against time-varying affine combinations.

\paragraph{Grid design.}
We place forgetting factors on a geometric grid in nominal memory length,
\(h_k=1/(1-\gamma_k)\), over \([h_{\min},h_{\max}]\). This uses only
\(K=O(\log(h_{\max}/h_{\min}))\) correction experts while ensuring that, if the
oracle memory length lies in the range, some grid point approximates it up to a
constant factor. Section~\ref{sec:theory} makes this precise.

\paragraph{Implementation and Complexity.} EWLS solutions are updated by the standard recursive
least-squares (RLS) recursion, with causal initialization and a small covariance
inflation for numerical stability, see Algorithm~\ref{alg:main} in Appendix~\ref{app:algo}. Let \(d=M+1\). Each EWLS expert costs $O(d^2)$ per step via the RLS recursion, giving 
total per-step cost $O(Kd^2+N)$. This is independent of $T$ and of the base predictors' own training cost.

% -----------------------------------------------------------------------------
\section{Theoretical guarantee}
\label{sec:theory}
% -----------------------------------------------------------------------------

We summarise the main deterministic, sequence-wise guarantees that motivate
the two design choices: maintaining several EWLS time scales, and aggregating
raw base forecasts together with their online corrections. The statements in
this section are informal main-text versions; the formal oracle inequalities,
including constants, clipping conventions, and proofs, are given in
Appendix~\ref{app:deterministic-theory}.

\paragraph{Setup.}
Write \(L_T(a_{1:T})=\sum_{t=1}^T(a_t-y_t)^2\) for any 
prediction sequence $a_{1:T}=(a_t)_{t=1}^T$. For a time-varying affine combination path 
\(\bu_{1:T}=(\bu_1,\ldots,\bu_T)\), define
\[
    L_T(\bu_{1:T})
    = \sum_{t=1}^T \bigl(\tilde{\bz}_t^\top \bu_t-y_t\bigr)^2,
    \qquad
    P_T(\bu_{1:T})
    = \sum_{t=2}^T \|\bu_t-\bu_{t-1}\|_2 .
\]
The path length \(P_T\) measures the total movement of the comparator path and 
accommodates both gradual drift and abrupt changes. Write 
\(\mathcal U_T(R)=\{\bu_{1:T}:\|\bu_t\|_2\le R\text{ for all }t\}\).

\paragraph{Assumptions.}
We assume bounded data, \(\|\tilde{\bz}_t\|_2\le B_z\) and \(|y_t|\le B_y\), and 
bounded EWLS iterates along the realised sequence: for the EWLS expert with 
forgetting factor \(\gamma\), the unprojected forgotten-RLS coefficients satisfy
$\|\bw_t^{(\gamma)}\|_2 \le R$ for $t\in [T]$, applied to the relevant EWLS expert(s) in each statement below. This is an explicit stability condition on the realised recursion, not a consequence of forgetting alone.

\paragraph{Single-scale tracking.}
Let \(\check y^{(\gamma)}_{1:T} = (\check y^{(\gamma)}_{t})_{t=1}^T\) denote the EWLS prediction sequence generated by the 
idealised recursion in \eqref{eq:ewls}. Denote $h_{\gamma} = 1/(1-\gamma)$.

\begin{theorem}[Single-scale tracking; informal, see Theorem~\ref{thm:rls-dynamic-app}]
\label{thm:rls-dynamic}
Under the assumptions above, for any \(\gamma\in[1/2,1)\) and any comparator 
path \(\bu_{1:T}\in\mathcal U_T(R)\),
\[
    L_T\bigl(\check y^{(\gamma)}_{1:T}\bigr)
    \;\le\;
    L_T(\bu_{1:T})
    \;+\;
    \frac{C_1\,dT}{h_\gamma}
    \;+\;
    C_2\,h_\gamma\,P_T(\bu_{1:T})
    \;+\;
    \text{l.o.t.},
\]
where \(C_1,C_2\) hide only boundedness and initialisation constants, and l.o.t. collects $\delta$-initialisation and  logarithmic-in-$h$ terms.
\end{theorem}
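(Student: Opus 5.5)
We view the EWLS expert as follow-the-regularized-leader with discounted losses and run a discounted online-Newton / Vovk--Azoury--Warmuth style potential argument, comparing against a \emph{locally fixed} comparator rather than the whole path at once. Write $A_t=\gamma^{t-1}\delta\bI+\sum_{s<t}\gamma^{t-1-s}\tilde\bz_s\tilde\bz_s^\top$ for the discounted Gram matrix and $\Phi_t(\bw)=\sum_{s<t}\gamma^{t-1-s}(y_s-\bw^\top\tilde\bz_s)^2+\gamma^{t-1}\delta\|\bw\|_2^2$, so that $\bw_t^{(\gamma)}=\arg\min\Phi_t$. The first step is the standard one-step FTL/be-the-leader identity for quadratics:
\[
\Phi_{t+1}(\bw_{t+1})-\gamma\Phi_t(\bw_t)=\ell_t(\check y_t)\,\bigl(1-\tilde\bz_t^\top A_{t+1}^{-1}\tilde\bz_t\bigr).
\]
Rearranging gives $\ell_t(\check y_t)=\Phi_{t+1}(\bw_{t+1})-\gamma\Phi_t(\bw_t)+\ell_t(\check y_t)\,\tilde\bz_t^\top A_{t+1}^{-1}\tilde\bz_t$.

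The second step controls the leverage term. The bounded-iterate assumption gives $\ell_t(\check y_t)\le (RB_z+B_y)^2$, so it suffices to bound $\sum_t \tilde\bz_t^\top A_{t+1}^{-1}\tilde\bz_t$. In the discounted case the log-det telescoping leaks: $\log\det A_{t+1}-\log\det(\gamma A_t)\ge \tilde\bz_t^\top A_{t+1}^{-1}\tilde\bz_t$, and each round loses $d\log(1/\gamma)\approx d/h_\gamma$. Summing over $t$ yields $d\,T\log(1/\gamma)+\log\det A_{T+1}/\det A_1$. The logarithmic term is $O(d\log(h_\gamma B_z^2/\delta))$, since $A_t\preceq (\delta+h_\gamma B_z^2)\bI$. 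This produces the $C_1dT/h_\gamma$ term plus the logarithmic l.o.t., using $\log(1/\gamma)\le 2(1-\gamma)$ for $\gamma\ge 1/2$.

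The third step is the dynamic comparator, which I expect to be the main obstacle. Summing the identity with discounting, $\sum_t\bigl[\Phi_{t+1}(\bw_{t+1})-\gamma\Phi_t(\bw_t)\bigr]$ does not telescope to a single fixed-comparator quantity. Instead I would use $\Phi_{t+1}(\bw_{t+1})\le\Phi_{t+1}(\bu_t)$ together with the recursion $\Phi_{t+1}(\bu)=\gamma\Phi_t(\bu)+\ell_t(\bu)$. Chaining then gives cumulative loss at most $\sum_t\ell_t(\bu_t)$ plus the switching cost $\sum_t \gamma\bigl[\Phi_t(\bu_t)-\Phi_t(\bu_{t-1})\bigr]$. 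A mean-value bound handles each increment: $\|\nabla\Phi_t\|\le 2h_\gamma B_z(RB_z+B_y)+2\delta R$ on the ball of radius $R$ (the discounted weights sum to at most $h_\gamma$), so the increment is at most $C_2h_\gamma\|\bu_t-\bu_{t-1}\|_2$. Summing gives $C_2h_\gamma P_T$. The initial term $\gamma\Phi_1(\bu_1)=\delta\|\bu_1\|^2\le\delta R^2$ enters the l.o.t.

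The care needed is in making the chaining exact, which hinges on where the comparator changes. Define the comparator-dependent potential $\Psi_t=\Phi_t(\bw_t)$ and compare $\Psi_{t+1}\le\Phi_{t+1}(\bu_t)=\gamma\Phi_t(\bu_t)+\ell_t(\bu_t)$. Adding and subtracting $\gamma\Phi_t(\bu_{t-1})$ then lets the remaining terms be absorbed by induction, because $\Phi_t(\bu_{t-1})$ is what the previous step bounds $\Psi_t$ with. If this bookkeeping gives $\Phi$ rather than $\Psi$ on the wrong side, I would fall back to the cruder but safe route. That route is a block decomposition into blocks of length $h_\gamma$ with fixed comparator on each block (costing $h_\gamma\times$ variation within each block, again $O(h_\gamma P_T)$) plus a warm-start penalty per block. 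However, that approach risks an extra $T/h_\gamma$ term, which is fortunately of the same order as the $C_1$ term.
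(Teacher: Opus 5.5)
Your argument is correct. It runs on a different potential from the paper's, though the two are closely linked.

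\textbf{How the two proofs relate.} The paper uses the Newton form \(\bw_{t+1}=\bw_t-\tfrac12\bA_t^{-1}\bg_t\) and the distance potential of Lemma~\ref{lem:bregman-step}. You use the discounted-FTRL value \(\Psi_t=\min\Phi_t\). Since \(\Phi_t(\bu)-\Psi_t=\|\bu-\bw_t\|_{\bA_{t-1}}^2\) in the paper's indexing, your one-step identity is algebraically equivalent to that lemma. Your leverage sum \(\sum_t\ell_t(\check y_t)\,\tilde\bz_t^\top\bA_t^{-1}\tilde\bz_t\) is exactly the paper's \(S_2\), and it is handled by the same leaking log-det bound (Lemma~\ref{lem:logdet}).

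\textbf{Your tracking bookkeeping closes exactly.} With \(\Psi_1=0\), the sum \(\sum_{t=1}^T(\Psi_{t+1}-\gamma\Psi_t)=\Psi_{T+1}+(1-\gamma)\sum_{t=2}^T\Psi_t\) has nonnegative coefficients. So each \(\Psi_t\) may be replaced by \(\Phi_t(\bu_{t-1})\), and \(\Phi_{t+1}(\bu)=\gamma\Phi_t(\bu)+\ell_t(\bu)\) gives the identity
\[
    \Phi_{T+1}(\bu_T)+(1-\gamma)\sum_{t=2}^T\Phi_t(\bu_{t-1})
    =\sum_{t=1}^T\ell_t(\bu_t)+\gamma\delta\|\bu_1\|_2^2
    +\gamma\sum_{t=2}^T\bigl[\Phi_t(\bu_t)-\Phi_t(\bu_{t-1})\bigr].
\]
If you prefer an induction, carry the gap \(G_t:=\Phi_t(\bu_{t-1})-\Psi_t\ge0\), with \(\bu_0:=\bu_1\). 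Exactly, \(\Psi_{t+1}-\gamma\Psi_t=\ell_t(\bu_t)+\gamma[\Phi_t(\bu_t)-\Phi_t(\bu_{t-1})]+\gamma G_t-G_{t+1}\), and \(\sum_t(\gamma G_t-G_{t+1})\le\gamma G_1=\gamma\delta\|\bu_1\|_2^2\). Discarding \(G_{t+1}\) step by step, as your sketch suggests, would leave the \(\gamma G_t\) terms uncancelled.

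\textbf{Where the routes genuinely differ.} The paper's bracket in \(S_1\) is precisely your switching term minus \((1-\gamma)G_t\), so the two decompositions coincide; they differ in how the switching term is bounded.
\begin{itemize}
\item The paper uses the cross term \(2|\bDelta_t^\top\bA_{t-1}\bx_t|\le 4R(\delta+B_z^2h)\|\bDelta_t\|_2\). This needs \(\|\bw_t-\bu_t\|_2\le 2R\), hence (B4).
\item Your Lipschitz bound for \(\Phi_t\) on the \(R\)-ball uses only (B1)--(B3) and gives the constant \(2hB_z(B_y+B_zR)+2\delta R\). In your route, (B4) is needed only to bound the residuals in the leverage term.
\end{itemize}
Both give the same \(O(dT/h+hP_T)\) shape with different constants. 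The paper's potential makes the telescoping a one-line index shift, in the style of online-Newton analyses. Yours exposes the discounted-FTRL structure and decouples the path-length term from the iterate bound.

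\textbf{Drop the block fallback.} It would cost more than an extra \(T/h\). A fresh fixed comparator on each block pays a warm-start term \(\gamma\|\bu-\bw_{t_0}\|_{\bA_{t_0-1}}^2\) of order \(R^2(\delta+B_z^2h)\). Over \(T/h\) blocks that is order \(R^2B_z^2T\), unless consecutive blocks are chained, which is just the argument above.
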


The two leading terms display a stability--adaptivity trade-off: short memory 
reduces the tracking term \(h_\gamma P_T\) but inflates the finite-memory term 
\(dT/h_\gamma\); long memory does the opposite. When \(P_T(\bu_{1:T})>0\), the 
balancing scale is
\(h^\star \asymp \sqrt{dT/P_T(\bu_{1:T})}\),
which depends on the \emph{unknown} amount of comparator movement; when 
\(P_T(\bu_{1:T})=0\), the bound reduces to $ L_T(\bu_{1:T}) + C_1 dT/h_\gamma
$, which favours the longest available memory. 

\paragraph{Multi-scale aggregation.}
\melo{} sidesteps the choice of \(h\) by exposing a grid of EWLS scales as 
competing experts and letting MLpol decide by realised loss. 
Let \(\Gamma\subset[1/2,1)\) be a finite-forgetting grid with
\(|\Gamma|=K\), and let
\(\mathcal E_\Gamma=\{f^{(1)},\ldots,f^{(M)}\}\cup\{\check y^{(\gamma)}:\gamma\in\Gamma\}\).
Let \(f^{(j)}_{1:T}\) denote the raw prediction sequence of base expert \(j\) and let \(\hat y_{1:T}\) denote the resulting MLpol aggregate over
\(\mathcal E_\Gamma\).

\begin{theorem}[\melo{} oracle inequality; informal, see Theorem~\ref{thm:mlpol-oracle-app}]
\label{thm:mlpol-oracle}
Under the same assumptions and the bounded-loss MLpol convention formalised in
Appendix~\ref{app:deterministic-theory}, suppressing logarithmic and initialisation terms, the MLpol aggregate over \(\mathcal E_\Gamma\) satisfies, for every 
comparator path \(\bu_{1:T}\in\mathcal U_T(R)\),
\[
    L_T(\hat y_{1:T})
    \;\le\;
    \min\!\left\{
        \underbrace{\min_{1\le j\le M} L_T(f_{1:T}^{(j)})}_{\text{best raw predictor}},
        \;\;
        \underbrace{L_T(\bu_{1:T})
        + \min_{\gamma\in\Gamma}\!\left(
            \tfrac{C_1 dT}{h_\gamma}
            + C_2\,h_\gamma\,P_T(\bu_{1:T})
        \right)}_{\text{best EWLS scale tracking } \bu_{1:T}}
    \right\}
    \;+\;
    C_3\sqrt{(M+K)T},
\]
where \(C_1,C_2,C_3\) hide only boundedness and initialisation constants.
\end{theorem}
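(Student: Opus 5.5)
The plan is to combine the single-scale guarantee (Theorem~\ref{thm:rls-dynamic}) with the standard MLpol regret bound of \citet{gaillard2014second}. The key observation is that the outer layer is an ordinary finite-expert aggregation problem. Its \(N=M+K\) experts produce bounded predictions, and the loss is the convex squared loss. The comparator structure, whether a single raw base forecaster or a single EWLS scale tracking \(\bu_{1:T}\), enters only through the cumulative loss of one expert. The \(\min\) over the two brackets will therefore follow from a bound of the form \(L_T(\hat y_{1:T})\le \min_{j\in[N]} L_T(\tilde y_{\cdot,j})+C_3\sqrt{NT}\), after substituting the appropriate upper bound on each expert's loss.

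\textbf{Step 1 (boundedness of the pool).} Each base prediction satisfies \(|z_{t,m}|\le B_z\). Each EWLS prediction satisfies \(|\check y_t^{(\gamma)}|\le \|\tilde\bz_t\|_2\|\bw_t^{(\gamma)}\|_2\le B_zR\), using the stability assumption. The aggregate \(\hat y_t=\bp_t^\top\tilde\by_t\) is a convex combination, so it is bounded by \(B:=\max\{B_z,B_zR\}\). Consequently the linearised losses \(g_{t,j}=2(\hat y_t-y_t)\tilde y_{t,j}\) lie in an interval of width at most \(E:=4(B+B_y)B\). If the formal version uses clipping, the clipped predictions satisfy the same bound, and clipping to an interval containing \(y_t\) can only decrease squared loss.

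\textbf{Step 2 (MLpol regret via the gradient trick).} By convexity of \(a\mapsto(a-y_t)^2\), for every \(j\) we have
\[
(\hat y_t-y_t)^2-(\tilde y_{t,j}-y_t)^2\le 2(\hat y_t-y_t)(\hat y_t-\tilde y_{t,j}) = r_{t,j}.
\]
Summing over \(t\) gives \(L_T(\hat y)-L_T(\tilde y_{\cdot,j})\le R_{T,j}\). The polynomial-potential analysis of MLpol for bounded linear losses then gives \(\max_j R_{T,j}\le \sqrt{N\sum_t\sum_j r_{t,j}^2}\le E\sqrt{NT}\). I would cite this as the second-order MLpol bound of \citet{gaillard2014second}, Theorem~3 there, specialised with a crude range bound; this yields \(C_3=E\). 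This is also the point where the uniform-weight convention matters: when all positive parts vanish, the potential argument still goes through because \(\sum_j [R_{t-1,j}]_+ r_{t,j}=0\) holds by construction whenever the weights are proportional to the positive regrets, and it holds trivially otherwise.

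\textbf{Step 3 (plugging in experts).} For \(j\le M\), the bound directly yields the best-raw-predictor term. For each \(\gamma\in\Gamma\), Theorem~\ref{thm:rls-dynamic} gives \(L_T(\check y^{(\gamma)})\le L_T(\bu_{1:T})+C_1dT/h_\gamma+C_2h_\gamma P_T+\text{l.o.t.}\). This holds because the stability assumption is imposed on every \(\gamma\in\Gamma\) and \(\bu_{1:T}\in\mathcal U_T(R)\). Taking the minimum over \(\gamma\) and then over the two families concludes the argument, with the l.o.t.\ suppressed as the statement allows.

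\textbf{Main obstacle.} The main obstacle is making sure that the EWLS experts the aggregator actually uses coincide with those covered by Theorem~\ref{thm:rls-dynamic}, in particular when projection or clipping is applied for the bounded-loss convention. I would first try to argue that clipping to \([-B_y,B_y]\) only reduces each expert's loss and preserves the range \(E\). If the formal appendix instead projects the coefficients, I would use the nonexpansiveness of Euclidean projection onto the ball of radius \(R\), which contains \(\bu_t\). This nonexpansiveness would be invoked inside the single-scale proof, and the obstacle is not expected to affect the outer MLpol step.
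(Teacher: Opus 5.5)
Your proposal is correct and takes essentially the paper's route: the point-mass case of the MLpol potential bound (the first-order term vanishes because the weights are proportional to the positive regrets), then the clipping lemma, then Theorem~\ref{thm:rls-dynamic} for each $\gamma\in\Gamma$, minimised over the grid. Two small corrections: the potential argument gives $\max_j R_{T,j}\le\bigl(\sum_{t}\sum_{j} r_{t,j}^2\bigr)^{1/2}\le E\sqrt{NT}$, so the extra factor $N$ inside your root should be dropped, since it does not lead to $E\sqrt{NT}$. And the Euclidean-projection fallback is not needed and would not work, because the paper analyses the unprojected recursion under (B4) and only clips predictions, whereas Euclidean projection is nonexpansive in the Euclidean norm but not in the $\bA_t$-norm the single-scale potential argument relies on.
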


The aggregate competes with two benchmarks simultaneously. The first is a safety comparison: the bounded-loss MLpol aggregate over the
enlarged pool remains competitive with the best individual raw predictor up to
the sublinear aggregation overhead. The 
second captures distribution shifts through the best online affine correction at 
the best memory scale in the grid.

\paragraph{Geometric grids.}
A geometric grid over \([h_{\min},h_{\max}]\) with ratio \(\rho\) contains a 
point within factor \(\rho\) of \(h^\star\) whenever \(h^\star\) lies in the 
range. Therefore, up to the suppressed logarithmic and initialisation terms, the
inner minimum in Theorem~\ref{thm:mlpol-oracle} is of order
\(C_\rho\sqrt{dT\,P_T(\bu_{1:T})}\) when \(P_T(\bu_{1:T})>0\), using only 
\(K=O(\log(h_{\max}/h_{\min}))\) experts. The precise corollary is in
Appendix~\ref{app:deterministic-theory}.

% -----------------------------------------------------------------------------
\section{Forecasting French daily electricity load}
\label{sec:experiments}
% -----------------------------------------------------------------------------

\paragraph{Task and protocol.}
We evaluate one-step-ahead forecasting of French national electricity load, a
standard operational short-term load forecasting task
\citep{devaine2013forecasting,obst2021adaptive,de2022state}. We use the public
RTE (French TSO) daily load dataset from 2012-01-01 to 2021-01-15 and refer to 
this benchmark as \textbf{RTE-FR Load}. Base models are trained on 2012--2018. 
Online methods are then evaluated sequentially from 2019-01-01 to 2021-01-15 
($T=746$ days), with each update using only true values available after the 
corresponding prediction. We report RMSE overall and over three regimes: 
\emph{pre-lockdown} (2019-01-01 to 2020-03-16, 441 days), \emph{lockdown} 
(2020-03-17 to 2020-05-11, 56 days), and the \emph{post-first-lockdown period} 
(2020-05-12 to 2021-01-15, 249 days).

\paragraph{Models and tuning.}
The base pool contains $M=7$ heterogeneous forecasters:
Lag-1, Linear(Ridge) \citep{hoerl1970ridge},
XGBoost \citep{chen2016xgboost},
LightGBM-GBDT \citep{ke2017lightgbm},
GAM \citep{hastie1986generalized,wood2017generalized},
ResNet \citep{he2016resnet,gorishniy2021revisiting},
and FT-Transformer \citep{gorishniy2021revisiting}.
All are trained once on 2012--2018 data with a shared feature set including lagged 
load, temperature, calendar, and holiday variables. To isolate algorithmic 
adaptation from weather-forecast quality, all methods use realised-temperature
covariates in this controlled benchmark; an operational deployment would replace 
them by temperature forecasts, see Appendix~\ref{app:data-splits}. Hyperparameters 
use published defaults or tail-of-training validation, with no tuning on the 
test period, see Appendix~\ref{app:hp-tune}. 

The EWLS layer uses $K=16$ correction experts: \(15\) finite-forgetting factors 
are placed on a geometric grid over $\gamma\in[0.950,0.9998]$, equivalently 
nominal scales $h(\gamma)=1/(1-\gamma)\in[20,5000]$, and one no-forgetting 
endpoint $\gamma=1$ is appended. The theory applies to the finite-forgetting 
experts; the \(\gamma=1\) endpoint is included empirically as a static 
long-memory reference. Numerical-stability parameters, including the 
covariance-inflation scale $\varepsilon_0=10^{-8}$, are selected by
walk-forward validation on 2018 only, see Appendices~\ref{app:ewls-impl} and~\ref{app:eps_selection}.

\paragraph{Comparisons.}
Our main ablations are MLpol on the base pool only, MLpol on the EWLS pool only,
and \melo{}, which is MLpol on the combined Base+EWLS pool. We also report the best
single EWLS expert selected in hindsight on overall test RMSE as a within-method
oracle baseline. 
External references include individual static baselines, 
tabular foundation-model baselines run daily with an expanding in-context training window (TabPFN and TabICL), stronger-information variants augmented with the Oxford Government Response Index (GRI; \citealt{hale2021global}), and adaptive Kalman-filter alternatives tuned under the same 2018 walk-forward protocol.

% -----------------------------------------------------------------------------
\subsection{Main performance and external baselines}
\label{sec:main_result}
% -----------------------------------------------------------------------------

\begin{table}[ht]
    \caption{
    Per-regime and overall RMSE (MW) on the French electricity-load test period.
    \textbf{Bold}: best per column among non-hindsight methods;
    \underline{underline}: second best among non-hindsight methods.
    Hindsight rows (italicised) are non-deployable EWLS reference benchmarks and
    do not compete for highlighting. ``Best single EWLS (hindsight)'' selects one
    forgetting factor using overall test RMSE. ``Per-regime best EWLS
    (hindsight)'' selects the in-segment best forgetting factor separately for
    each regime ($\gamma=0.9990, 0.9773, 0.9997$), using regime boundaries and
    future losses, and is therefore not realisable \emph{a priori}.
    The expanded table with all individual base baselines, all single EWLS
    scales, and additional external references is in
    Appendix~\ref{app:rte_detailed_results}. ``+GRI'' denotes an additional
    regime-informed covariate; \melo{} uses no such signal.
    }
    \label{tab:main_results}
    \centering
    \small
    \setlength{\tabcolsep}{5pt}
    \begin{tabular}{lcccc}
        \toprule
        Method & Pre-lockdown & Lockdown & Post-lockdown & Overall \\
        \textit{Regime length (days)} & \textit{441} & \textit{56} & \textit{249} & \textit{746} \\
        \midrule
        GAM (best static overall)        & 946.6 & 3201.3 & 1076.2 & 1298.3 \\
        TabICL (online)                  & 670.3 & 2943.6 &  687.4 & 1036.2 \\
        TabICL (online, +GRI)            & 668.0 & 1132.6 &  699.5 &  723.4 \\
        \midrule
        MLpol on Base only               & 690.5 & 2452.7 &  907.0 & 1004.0 \\
        \textit{Best single EWLS ($\gamma=0.9986$, hindsight)}
                                         & \textit{657.9} & \textit{1168.3}
                                         & \textit{585.4} & \textit{687.5} \\
        \textit{Per-regime best EWLS (hindsight)}
                                         & \textit{656.2} & \textit{998.8}
                                         & \textit{573.6} & \textit{662.8} \\
        MLpol on EWLS only               & \underline{653.9} & \textbf{1073.6}
                                         & \underline{584.1} & \underline{673.2} \\
        \textbf{\melo{} (Base+EWLS)}
                                         & \textbf{623.1} & \underline{1086.1}
                                         & \textbf{579.3} & \textbf{655.8} \\
        \midrule
        RMSE reduction vs. Base only (MW)       & +67.4  & +1366.6 & +327.8 & +348.3 \\
        RMSE reduction vs. Base only (\%)       & +9.8\% & +55.7\%  & +36.1\% & +34.7\% \\
        \bottomrule
    \end{tabular}
\end{table}

Table~\ref{tab:main_results} shows the main empirical message: adding the EWLS
experts to the raw base-forecast pool substantially improves online
aggregation under distribution shift. Relative to MLpol on the base pool alone,
\melo{} reduces RMSE before lockdown ($+9.8\%$), by the largest margin during
the COVID-19 regime break ($+55.7\%$), and still substantially after the first
lockdown period ($+36.1\%$), yielding a $+34.7\%$ overall improvement. A paired
moving-block bootstrap on the daily squared-loss sequences confirms that the
overall margins over both base-only MLpol and TabICL(+GRI) are significant at the
$5\%$ level; details are in Appendix~\ref{app:rte_bootstrap}.

The ablations separate the two roles of the Base+EWLS pool. Base-only
aggregation is reasonably competitive before lockdown, but deteriorates sharply
under the break. EWLS-only aggregation adapts especially well during the short
lockdown segment, where it slightly outperforms the full pool, but the raw base
forecasts remain useful in quieter periods. The full Base+EWLS aggregation
therefore gives the best overall balance: it is the best non-hindsight method
before lockdown, after lockdown, and overall. Within the EWLS family alone, no single forgetting factor matches \melo{}'s overall RMSE. Even a non-deployable per-regime EWLS oracle remains slightly worse overall.

The comparison with TabICL is a stress test rather than a claim that the EWLS
correction layer replaces foundation-model forecasts. The fair same-information
comparison is TabICL (online), which uses the same causal information available to
\melo{}. TabICL (online, +GRI) is instead a stronger-information reference, since the
Oxford Government Response Index changes contemporaneously with the policy
regime. TabICL (online) is substantially worse than \melo{}, whereas TabICL (online, +GRI)
narrows the gap by adding this regime-informed covariate. \melo{} still achieves
lower overall RMSE than this stronger-information reference while using no such
exogenous regime signal.

Finally, the framework is extensible rather than tied to a particular base
pool. Adding the TabICL (online, +GRI) forecast as an additional causal expert improves
\melo{} from $655.8$ to $621.1$ RMSE, and including the same forecast in the
EWLS design vector further improves RMSE to $601.9$, see Appendix~\ref{app:extensibility}.

\begin{table}[t]
    \caption{
    Comparison with adaptive Kalman-filter alternatives under the same 
    tuning protocol. All nuisance hyperparameters are selected on
    2018 out-of-sample data; the 2019--2021 test period is never used for
    selection. Each adaptive-KF layer is combined with the same direct base
    experts by the same outer MLpol aggregator. Full diagnostics and
    filter-only ablations are in Appendix~\ref{app:adaptive_kf}.
    }
    \label{tab:adaptive_kf_main}
    \centering
    \small
    \setlength{\tabcolsep}{5pt}
    \begin{tabular}{lcccc}
        \toprule
        Method & Pre-lockdown & Lockdown & Post-lockdown & Overall \\
        \midrule
        \melo{} (Base+EWLS)        & \textbf{623.1} & \textbf{1086.1} & \textbf{579.3} & \textbf{655.8} \\
        MLpol on Base+VIKING       & 634.0 & 1185.3 & 599.7 & 680.5 \\
        MLpol on Base+IMM-KF       & 636.5 & 1276.7 & 648.6 & 708.7 \\
        MLpol on Base+VBAKF        & 665.1 & 1227.4 & 665.4 & 722.8 \\
        \bottomrule
    \end{tabular}
\end{table}

\paragraph{Adaptive-filter alternatives.}
A natural alternative to a fixed grid of correction scales is a filter that
estimates its own adaptation scale online. We compare against three adaptive
Kalman-filter baselines with complementary process-uncertainty
parameterisations: VIKING~\citep{devilmarest2024viking} (isotropic
log-variance random walk), VBAKF~\citep{huang2018vbakf} (variational
covariance/noise adaptation), and IMM-KF~\citep{blom1988imm} (Bayesian mixing
over a fixed grid of \(Q\)-scales). VIKING and VBAKF are run as small
pools over responsiveness settings and aggregated with the base experts by MLpol; IMM-KF uses
its internal Markov mixture over fixed-\(Q\) modes and is then also exposed to
the same outer MLpol comparison with the direct base experts. All nuisance
hyperparameters are selected by the same 2018 walk-forward protocol used for
\(\varepsilon_0\); the 2019--2021 test period is never used for selection.

Among the matched Base+adaptive-filter pools in Table~\ref{tab:adaptive_kf_main},
\melo{} achieves the lowest RMSE in each of the four reported regimes.
The pairwise differences in overall-test-period RMSE are significant under the paired moving-block bootstrap, 
see Appendix~\ref{app:adaptive_kf}, which also reports the full bootstrap
table, filter-only ablations, and diagnostics helping explain why these
adaptive-\(Q\) mechanisms do not match the explicit multi-scale EWLS +
regret-aggregation separation on this benchmark.

% -----------------------------------------------------------------------------
\subsection{Adaptation diagnostics}
\label{sec:adaptation_diagnostics}
% -----------------------------------------------------------------------------

\begin{figure}[ht]
    \centering
    \setlength{\abovecaptionskip}{0pt}   
    \setlength{\belowcaptionskip}{0pt}
    \includegraphics[width=0.85\linewidth]{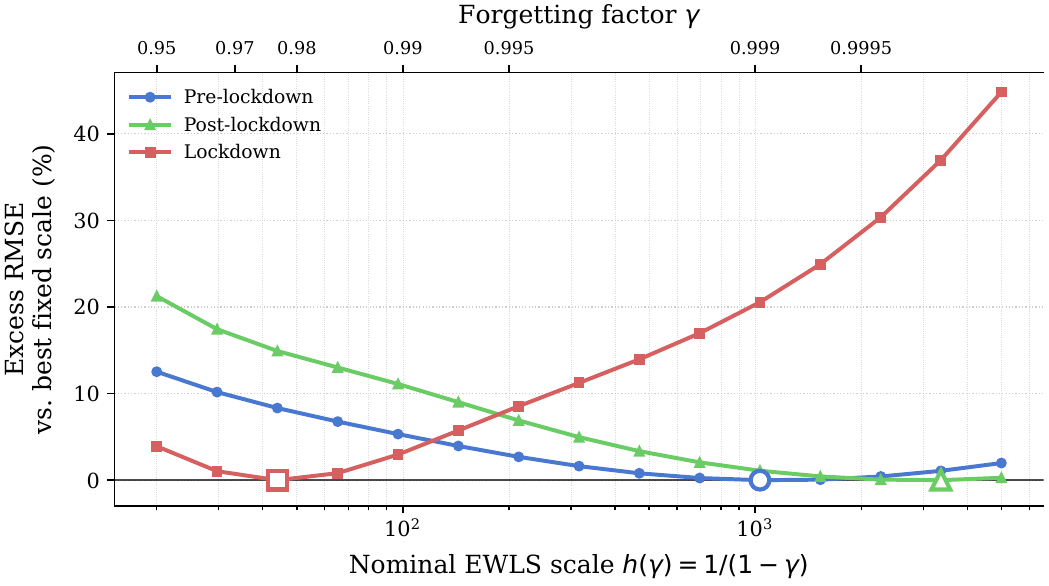}
    \caption{
    \textbf{No single forgetting scale is optimal across regimes.}
    Each curve shows the excess RMSE of a standalone EWLS correction with fixed
    forgetting factor $\gamma$, relative to the best
    fixed scale within that regime. 
    The horizontal axis is the nominal scale
    $h(\gamma)=1/(1-\gamma)$, used 
    as an ordered forgetting-scale index because the implemented experts also include small
    covariance inflation.
    Empty markers indicate the best fixed scale in each regime: lockdown favours
    a much shorter scale than the non-lockdown regimes.
    }
    \label{fig:gamma_sensitivity}
\end{figure}

\begin{figure}[t!]
    \centering
    \includegraphics[width=\linewidth]{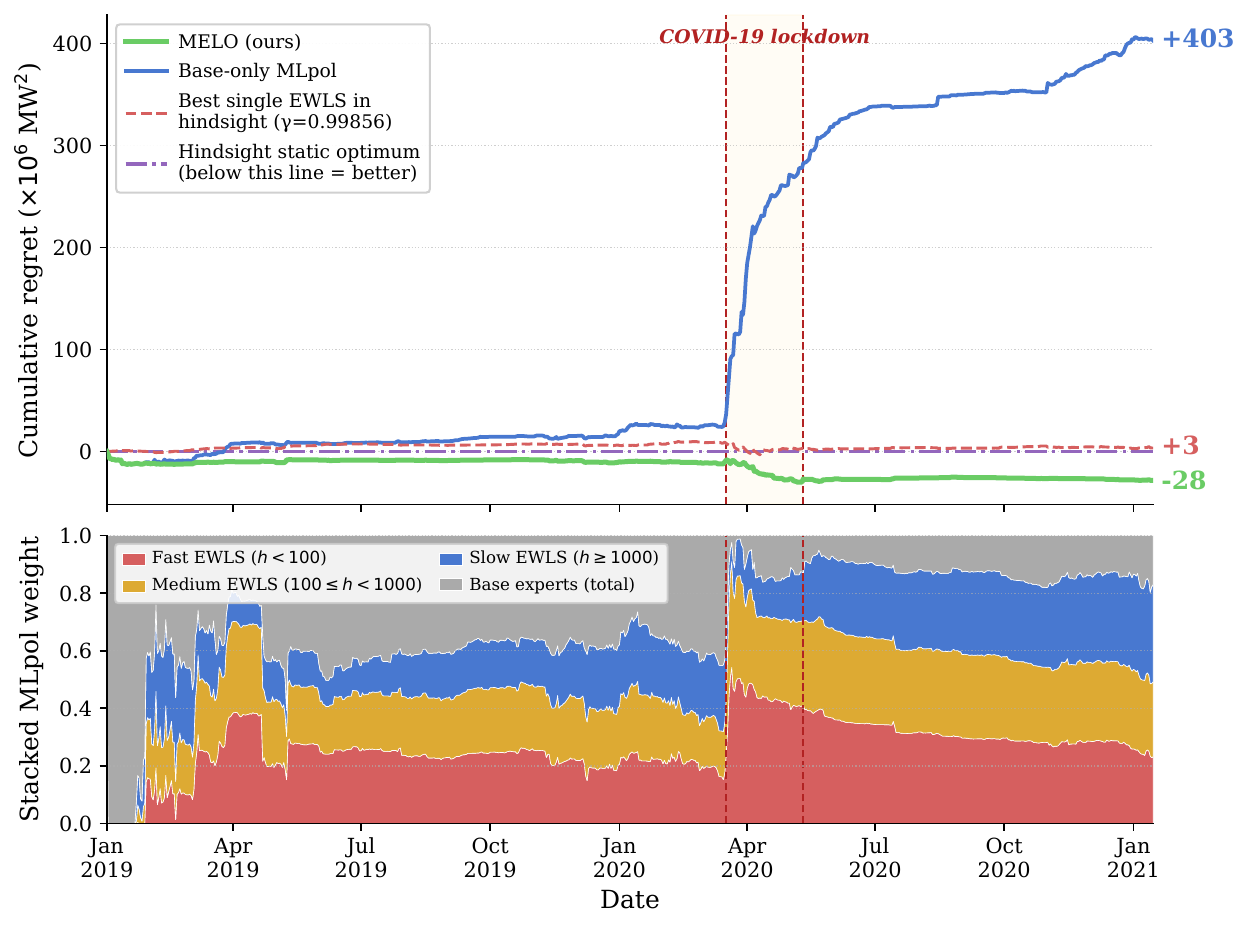}
    \caption{
    \textbf{Cumulative regret and MLpol weights over the test period.}
    \textit{Top:} Cumulative excess squared loss of three aggregators
    relative to the best static convex
    combination of all $N=M+K=23$ experts, fitted in hindsight. Values below zero mean that the online aggregate beats this static hindsight benchmark. Endpoint labels are in $10^6\, \mathrm{MW}^2$.
    \textit{Bottom:} MLpol weights stack with EWLS experts aggregated into three buckets
    by nominal scale $h_{\gamma}$: Fast ($h<100$, red), Medium ($100\le h\le 1000$, gold), and Slow ($h \ge 1000$, blue); grey on top is the total weight on the
    $M=7$ raw base experts. 
    }
    \label{fig:regret_and_weights}
\end{figure}

Figure~\ref{fig:gamma_sensitivity} isolates this scale dependence: lockdown is minimized at much shorter nominal scales than the surrounding regimes, consistent with the trade-off in Theorem~\ref{thm:rls-dynamic}. Figure~\ref{fig:regret_and_weights} shows that MLpol exploits this online: at lockdown onset, weight reallocates sharply into the Fast EWLS bucket, and away from the base experts, so \melo{} ends at cumulative regret $-28 \times 10^{6}\,\mathrm{MW}^2$ relative to the best static convex combination fitted in hindsight. Base-only MLpol diverges to $+403 \times 10^{6}\,\mathrm{MW}^2$ over the same window.
On two TabReD datasets with much higher base-residual correlation the same framework yields gains below $0.5\%$, see Appendix~\ref{app:tabred}, suggesting a boundary condition: when base residuals are nearly collinear, affine combination has little additional signal to exploit.

\FloatBarrier
%==============================================================
\section{Conclusion}
\label{sec:conclusion}
%==============================================================

We introduced \textsc{MELO}, a causal, model-agnostic adaptation layer for time-indexed tabular prediction under distribution shift. 
The method keeps raw base predictors as conservative competitors, augments them with multi-scale EWLS correction experts, and uses MLpol to select among raw and corrected forecasts by realised loss. 
Our deterministic oracle inequalities formalise the two intended safeguards: competitiveness with both the best raw predictor protects against unnecessary correction and the best EWLS memory scale gives a path-length-dependent guarantee for bounded affine comparator paths when residual drift is exploitable.
Empirically, on the RTE-FR load stress test, \textsc{MELO} improves substantially across the COVID-19 lockdown, outperforming base-only aggregation and the adaptive Kalman-filter alternatives considered under matched validation tuning. 

The limitation is that large gains require both exploitable
non-stationarity and a base pool with residual diversity. When base predictors
make highly correlated errors, the EWLS combination space has limited room
to help, as seen in the TabReD experiments, motivating a simple
residual-correlation diagnostic on the validation period. Future work includes
broader evaluations with actively retrained base models and extensions to
multivariate targets, structured losses, and online expansion of the
memory-scale grid.

\clearpage

{
\small

\bibliographystyle{abbrvnat}
\bibliography{reference}

}

\newpage

\appendix

\appendixtheorems

% -----------------------------------------------------------------------------
\section{Proofs for the pathwise online guarantees}
\label{app:deterministic-theory}
% -----------------------------------------------------------------------------

This appendix proves the deterministic guarantees stated in
Section~\ref{sec:theory}: the dynamic-regret bound for a single forgotten-RLS
correction expert, the MLpol oracle inequality over the enlarged Base+EWLS
pool, and the geometric-grid corollary. The analysis is sequence-wise: 
apart from the explicitly stated bounded-iterate (or, equivalently, uniform residual/separation) condition,
we impose no
distributional, stationarity, or persistence-of-excitation assumptions on the realised sequence.

The proof has five ingredients. First, a one-step potential identity for
forgotten RLS under squared loss. Second, a log-determinant inequality adapted
to the non-monotone recursion
\(\bA_t=\gamma\bA_{t-1}+\tilde{\bz}_t\tilde{\bz}_t^\top\). Third, a
dynamic-comparator telescoping argument that converts the fixed-comparator
identity into a path-length bound. Fourth, an MLpol potential bound for the
linearised excess losses. Fifth, a local-band refinement showing that, for dense
EWLS grids, the effective aggregation cost can depend on the inverse mass of a
useful time-scale band rather than on the raw number of grid points.

% -----------------------------------------------------------------------------
\subsection{Setup and notation}
\label{app:det-setup}
% -----------------------------------------------------------------------------

We use the same notation as in the main text. At round \(t\), the EWLS
correction expert observes the augmented base-prediction vector
\(\tilde{\bz}_t=(\bz_t^\top,1)^\top\in\mathbb R^d\), where \(d=M+1\), predicts
\(\tilde{\bz}_t^\top \bw_t\), then observes \(y_t\) and incurs squared loss
\[
    \ell_t(\bw)=\bigl(\tilde{\bz}_t^\top\bw-y_t\bigr)^2 .
\]
Here \(\bw_t\) denotes the pre-update coefficient used to predict at round
\(t\). The proof below uses only boundedness of \(\tilde{\bz}_t\), not any
property of the raw covariates or base models.

% -----------------------------------------------------------------------------
\subsection{Map to the informal statements in Section~\ref{sec:theory}}
\label{app:formal-restatement}
% -----------------------------------------------------------------------------

The main text states Theorems~\ref{thm:rls-dynamic} and \ref{thm:mlpol-oracle} 
in informal form, suppressing constants and lower-order terms for readability. 
The corresponding formal statements are 
Theorem~\ref{thm:rls-dynamic-app} (single-scale tracking) and 
Theorem~\ref{thm:mlpol-oracle-app} (\melo{} oracle inequality), proved in 
Appendices~\ref{app:rls-dynamic-proof}~and~\ref{app:oracle-proof}. The 
geometric-grid statement of Section~\ref{sec:theory} is formalised as 
Corollary~\ref{cor:grid-app}. We restate the correspondence here so that the 
notation in the formal statements is unambiguous.

\paragraph{The function $\Psi_h$.}
The single-scale tracking penalty appearing implicitly in the informal 
theorems is, with all constants made explicit,
\begin{equation}
\label{eq:psi-h-defined}
    \Psi_h(T,P)
    \;=\;
    \underbrace{\delta R^2}_{\text{initialisation}}
    \;+\;
    \underbrace{D^2 d\log\!\left(1+\tfrac{B_z^2 h}{\delta}\right)}_{\text{log term}}
    \;+\;
    \underbrace{\frac{2 D^2 dT}{h}}_{\text{finite-memory term}}
    \;+\;
    \underbrace{4R(\delta+B_z^2 h)\,P}_{\text{tracking term}} ,
\end{equation}
where $D=B_y+B_z R$ and $d=M+1$. The "log and initialisation terms" referenced 
in the informal Theorem~\ref{thm:rls-dynamic} are the first two summands; the 
constants $C_1, C_2$ in the informal statement correspond to
\[
    C_1 = 2D^2,
    \qquad
    C_2 = 8RB_z^2
    \quad\text{(under the mild regime $\delta\le B_z^2 h$).}
\]
The unsimplified $C_2$ is $4R(\delta+B_z^2 h)/h$, which reduces to the above 
once $\delta\le B_z^2 h$; see \eqref{eq:rls-dynamic-simplified}.

\paragraph{Equivalent affine view of the aggregate.}
The informal Theorem~\ref{thm:mlpol-oracle} compares $\hat y_t$ to time-varying 
affine combinations of the base-prediction vector. The justification is that 
every candidate in the \melo{} pool is itself affine in $\tilde\bz_t$: each raw 
base expert is a coordinate of $\bz_t$, and each EWLS expert is 
$\tilde\bz_t^\top \bw_t^{(\gamma)}$. Hence the MLpol aggregate satisfies
\[
    \hat y_t \;=\; \tilde\bz_t^\top \bar\bw_t,
    \qquad
    \bar\bw_t \;=\; \sum_{j=1}^{N} p_{t,j}\,\bw_{t,j},
\]
for coefficients $\bar\bw_t$ determined causally by past losses and EWLS 
states, where $\bw_{t,j}$ is the affine representation of expert $j$ (a 
canonical basis vector for raw base experts, $\bw_t^{(\gamma_k)}$ for EWLS 
experts). The comparator class $\mathcal U_T(R)$ in 
Theorem~\ref{thm:rls-dynamic-app} is therefore the natural benchmark: it is the 
same affine class that $\hat y_t$ itself ranges over.

\paragraph{Clipping convention for the named base forecasts.}
In the formal oracle inequality \eqref{eq:oracle-inequality-app}, $L_T(f_j)$ 
denotes the loss of the \emph{clipped} raw prediction sequence, since clipping 
is applied to all experts before they enter MLpol. When the unclipped raw 
predictions are the deployed base forecasts and outcomes lie in $[-B,B]$, 
Lemma~\ref{lem:clipping} shows that clipping cannot increase squared loss 
against the target. Hence the bound stated against the clipped $L_T(f_j)$ also 
holds against the unclipped raw-forecast loss in any practical reading where 
$|y_t|\le B$. This is the convention used throughout 
Section~\ref{sec:experiments}.

\paragraph{Forgotten-RLS recursion.}
Fix a forgetting factor \(\gamma\in(0,1)\) and define the nominal scale
\(h=(1-\gamma)^{-1}\). Let \(\bA_0=\delta \bI\) with \(\delta>0\),
\(\bb_0=\bm 0\), and \(\bw_1=\bm 0\). The forgotten-RLS recursion is
\begin{equation}
\label{eq:app-rls-recursion}
    \bA_t=\gamma \bA_{t-1}+\tilde{\bz}_t\tilde{\bz}_t^\top,\qquad
    \bb_t=\gamma \bb_{t-1}+y_t\tilde{\bz}_t,\qquad
    \bw_{t+1}=\bA_t^{-1}\bb_t .
\end{equation}
Equivalently,
\[
    \bA_t
    =
    \gamma^t\delta \bI
    +
    \sum_{s=1}^t\gamma^{t-s}\tilde{\bz}_s\tilde{\bz}_s^\top .
\]
Since \(\bA_t\succ0\), the update is well-defined. Moreover, because
\(\bA_{t-1}\bw_t=\bb_{t-1}\), a direct calculation gives
\begin{equation}
\label{eq:app-newton-form}
    \bw_{t+1}
    =
    \bw_t-\frac12\bA_t^{-1}\bg_t,
    \qquad
    \bg_t:=\nabla\ell_t(\bw_t)
    =
    2(\tilde{\bz}_t^\top\bw_t-y_t)\tilde{\bz}_t .
\end{equation}

\paragraph{Boundedness assumptions.}
We use the following deterministic boundedness conditions:
\begin{enumerate}[leftmargin=1.6em,itemsep=1pt,topsep=2pt]
    \item[(B1)] \(\|\tilde{\bz}_t\|_2\le B_z\) for all \(t\).
    \item[(B2)] \(|y_t|\le B_y\) for all \(t\).
    \item[(B3)] The comparator path satisfies \(\|\bu_t\|_2\le R\) for all \(t\).
    \item[(B4)] The unprojected forgotten-RLS iterates are stable along the
    realised sequence: \(\|\bw_t\|_2\le R\) for all \(t\).
\end{enumerate}

Condition~(B4) is an explicit stability assumption on the unprojected RLS
recursion, not a consequence of forgetting alone. Without a conditioning or
persistence-of-excitation condition, \(\bA_t\) may be ill-conditioned in
directions weakly observed by the realised sequence, and
\(\bw_t=\bA_{t-1}^{-1}\bb_{t-1}\) need not be uniformly bounded. We state the
pathwise tracking bound conditionally on this stability event, thereby
avoiding any stochastic excitation assumption.

\paragraph{Empirical diagnostic for (B4).}
Appendix~\ref{app:bounded-iterate-diag} provides a sanity check for this
condition on the RTE-FR Load experiment. Since the augmented EWLS coefficient
contains an intercept on the response scale, the diagnostic separates the
full augmented norm from the slope-only norm. Along the realised test
trajectory, the slope component remains uniformly bounded across the
\(K-1=15\) finite-forgetting EWLS experts and all \(T=746\) test days, with
\[
    \sup_{t,\gamma}\|\bm\beta_t^{(\gamma)}\|_2 = 1.69 .
\]

Alternatively, the proof can be read under the direct bounds
\[
    |\tilde{\bz}_t^\top\bw_t-y_t|\le D
    \qquad\text{and}\qquad
    \|\bw_t-\bu_t\|_2\le 2R
\]
for the comparator class under consideration. We do not claim that Euclidean
projection preserves the exact recursion \eqref{eq:app-rls-recursion}; the
proof below is for the unprojected recursion. In the implementation,
predictions are clipped before being passed to MLpol, which ensures bounded
losses for the aggregation layer; see Lemma~\ref{lem:clipping} and
Remark~\ref{rem:implementation-rls}.

Set
\[
    D:=B_y+B_zR .
\]
Under (B1), (B2), and (B4), the realised RLS residuals satisfy
\[
    |\tilde{\bz}_t^\top\bw_t-y_t|\le D.
\]
Under (B1), (B2), and (B3), any comparator \(\bu_t\) also satisfies
\[
    |\tilde{\bz}_t^\top\bu_t-y_t|\le D.
\]

For a comparator path \(\bu_{1:T}=(\bu_1,\ldots,\bu_T)\), define
\begin{equation}
\label{eq:app-path-length}
    L_T(\bu_{1:T})
    =
    \sum_{t=1}^T(\tilde{\bz}_t^\top\bu_t-y_t)^2,
    \qquad
    P_T(\bu_{1:T})
    =
    \sum_{t=2}^T {\|\bu_t-\bu_{t-1}\|}_2 .
\end{equation}
We write
\[
    \check y_t^{(\gamma)}
    =
    \tilde{\bz}_t^\top\bw_t,
    \qquad
    L_T(\check y^{(\gamma)})
    =
    \sum_{t=1}^T
    \bigl(\check y_t^{(\gamma)}-y_t\bigr)^2
\]
for the prediction sequence and cumulative loss of the forgotten-RLS expert
with nominal scale \(h=(1-\gamma)^{-1}\).

% -----------------------------------------------------------------------------
\subsection{A one-step potential identity}
\label{app:one-step}
% -----------------------------------------------------------------------------

\begin{lemma}[One-step identity for forgotten RLS]
\label{lem:bregman-step}
For every \(\bu\in\mathbb R^d\) and every \(t\ge1\),
\begin{equation}
\label{eq:bregman-step-correct}
    \ell_t(\bw_t)-\ell_t(\bu)
    =
    \gamma\|\bw_t-\bu\|_{\bA_{t-1}}^2
    -
    \|\bw_{t+1}-\bu\|_{\bA_t}^2
    +
    \frac14 \bg_t^\top \bA_t^{-1}\bg_t .
\end{equation}
\end{lemma}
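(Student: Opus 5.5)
The plan is a direct algebraic verification built on three facts already in hand: $\bA_t=\gamma\bA_{t-1}+\tilde{\bz}_t\tilde{\bz}_t^\top$, the Newton form \eqref{eq:app-newton-form} $\bw_{t+1}=\bw_t-\tfrac12\bA_t^{-1}\bg_t$, and the exact quadratic structure of the squared loss. I will expand $\|\bw_{t+1}-\bu\|_{\bA_t}^2$ around $\bw_t$ and reduce it to the left-hand side. No inequality is needed; the identity holds exactly because $\ell_t$ is quadratic with Hessian $2\tilde{\bz}_t\tilde{\bz}_t^\top$.

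First, I write $\bw_{t+1}-\bu=(\bw_t-\bu)-\tfrac12\bA_t^{-1}\bg_t$ and expand:
\[
\|\bw_{t+1}-\bu\|_{\bA_t}^2=\|\bw_t-\bu\|_{\bA_t}^2-\bg_t^\top(\bw_t-\bu)+\tfrac14\bg_t^\top\bA_t^{-1}\bg_t .
\]
Second, I split $\|\bw_t-\bu\|_{\bA_t}^2=\gamma\|\bw_t-\bu\|_{\bA_{t-1}}^2+(\tilde{\bz}_t^\top(\bw_t-\bu))^2$ using the recursion for $\bA_t$. Third, I use the exact second-order expansion of the quadratic loss:
\[
\ell_t(\bu)=\ell_t(\bw_t)-\bg_t^\top(\bw_t-\bu)+(\tilde{\bz}_t^\top(\bw_t-\bu))^2 .
\]
This is checked by writing $\tilde{\bz}_t^\top\bu-y_t=(\tilde{\bz}_t^\top\bw_t-y_t)-\tilde{\bz}_t^\top(\bw_t-\bu)$ and squaring. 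It gives
\[
-\bg_t^\top(\bw_t-\bu)+(\tilde{\bz}_t^\top(\bw_t-\bu))^2=\ell_t(\bu)-\ell_t(\bw_t).
\]

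Substituting the second and third steps into the first yields
\[
\|\bw_{t+1}-\bu\|_{\bA_t}^2=\gamma\|\bw_t-\bu\|_{\bA_{t-1}}^2+\ell_t(\bu)-\ell_t(\bw_t)+\tfrac14\bg_t^\top\bA_t^{-1}\bg_t .
\]
Rearranging gives exactly \eqref{eq:bregman-step-correct}.

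The only step needing care is confirming the Newton form \eqref{eq:app-newton-form}, which the paper states as a direct calculation. It follows from $\bA_t\bw_t=\gamma\bA_{t-1}\bw_t+\tilde{\bz}_t\tilde{\bz}_t^\top\bw_t=\gamma\bb_{t-1}+\tilde{\bz}_t\tilde{\bz}_t^\top\bw_t$. This gives $\bA_t(\bw_{t+1}-\bw_t)=\bb_t-\bA_t\bw_t=-(\tilde{\bz}_t^\top\bw_t-y_t)\tilde{\bz}_t=-\tfrac12\bg_t$. The base case $t=1$ also needs a check, since it uses $\bA_0=\delta\bI$, $\bb_0=\bm 0$ and $\bw_1=\bm 0$: there $\bA_0\bw_1=\bb_0$ holds trivially, so the same computation goes through. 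Beyond the consistent sign bookkeeping of $\bg_t=2(\tilde{\bz}_t^\top\bw_t-y_t)\tilde{\bz}_t$, I expect no real obstacle.
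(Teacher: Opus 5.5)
Your proposal is correct and is essentially the paper's proof. It uses the same three ingredients: the exact quadratic expansion of $\ell_t$ around $\bw_t$, the Newton form $\bw_{t+1}=\bw_t-\tfrac12\bA_t^{-1}\bg_t$, and the split $\|\bw_t-\bu\|_{\bA_t}^2=\gamma\|\bw_t-\bu\|_{\bA_{t-1}}^2+(\tilde{\bz}_t^\top(\bw_t-\bu))^2$. Your direct expansion of $\|\bw_{t+1}-\bu\|_{\bA_t}^2$ is only a rearrangement of the paper's polarization step, and your explicit check of the Newton form (including the base case $\bA_0\bw_1=\bb_0$) fills in the ``direct calculation'' the paper leaves implicit.
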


\begin{proof}
For squared loss, expanding around \(\bw_t\) gives the exact identity
\begin{equation}
\label{eq:exact-square-expansion}
    \ell_t(\bw_t)-\ell_t(\bu)
    =
    \bg_t^\top(\bw_t-\bu)
    -
    \bigl(\tilde{\bz}_t^\top(\bw_t-\bu)\bigr)^2 .
\end{equation}
From \eqref{eq:app-newton-form},
\[
    \bg_t=2\bA_t(\bw_t-\bw_{t+1}).
\]
Therefore
\begin{align}
\label{eq:polarization-correct}
    \bg_t^\top(\bw_t-\bu)
    &=
    2(\bw_t-\bw_{t+1})^\top \bA_t(\bw_t-\bu) \notag\\
    &=
    \|\bw_t-\bu\|_{\bA_t}^2
    -
    \|\bw_{t+1}-\bu\|_{\bA_t}^2
    +
    \|\bw_t-\bw_{t+1}\|_{\bA_t}^2 \notag\\
    &=
    \|\bw_t-\bu\|_{\bA_t}^2
    -
    \|\bw_{t+1}-\bu\|_{\bA_t}^2
    +
    \frac14 \bg_t^\top \bA_t^{-1}\bg_t .
\end{align}
The recursion
\(\bA_t=\gamma\bA_{t-1}+\tilde{\bz}_t\tilde{\bz}_t^\top\) implies
\[
    \|\bw_t-\bu\|_{\bA_t}^2
    =
    \gamma\|\bw_t-\bu\|_{\bA_{t-1}}^2
    +
    \bigl(\tilde{\bz}_t^\top(\bw_t-\bu)\bigr)^2 .
\]
Substituting this expression and \eqref{eq:polarization-correct} into
\eqref{eq:exact-square-expansion} cancels the curvature term
\(\bigl(\tilde{\bz}_t^\top(\bw_t-\bu)\bigr)^2\), yielding
\eqref{eq:bregman-step-correct}.
\end{proof}

% -----------------------------------------------------------------------------
\subsection{A forgetting-adapted log-determinant inequality}
\label{app:logdet}
% -----------------------------------------------------------------------------

The classical online-Newton log-determinant argument relies on monotonicity of
\(\bA_t=\bA_{t-1}+\tilde{\bz}_t\tilde{\bz}_t^\top\). With forgetting,
\(\bA_t=\gamma\bA_{t-1}+\tilde{\bz}_t\tilde{\bz}_t^\top\) is no longer monotone
in the PSD order. The next lemma is the corresponding log-determinant bound for
forgotten RLS.

\begin{lemma}[Forgetting-adapted log-determinant inequality]
\label{lem:logdet}
Let \(\bA_t=\gamma\bA_{t-1}+\tilde{\bz}_t\tilde{\bz}_t^\top\) with
\(\bA_0=\delta\bI\), \(\delta>0\), \(\gamma=1-1/h\), and \(h\ge2\). Under (B1),
\begin{equation}
\label{eq:logdet}
    \sum_{t=1}^T \tilde{\bz}_t^\top\bA_t^{-1}\tilde{\bz}_t
    \le
    d\log\!\left(1+\frac{B_z^2h}{\delta}\right)
    +
    \frac{2dT}{h}.
\end{equation}
\end{lemma}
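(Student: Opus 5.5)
We follow the classical elliptical-potential argument, corrected for the non-monotone forgetting recursion. Write \(\bA_t=\gamma\bA_{t-1}+\tilde{\bz}_t\tilde{\bz}_t^\top\). By the matrix determinant lemma,
\[
\det\bA_t=\det(\gamma\bA_{t-1})\bigl(1+\tilde{\bz}_t^\top(\gamma\bA_{t-1})^{-1}\tilde{\bz}_t\bigr).
\]
With \(a_t:=\tilde{\bz}_t^\top(\gamma\bA_{t-1})^{-1}\tilde{\bz}_t\), Sherman--Morrison gives \(\tilde{\bz}_t^\top\bA_t^{-1}\tilde{\bz}_t=a_t/(1+a_t)\). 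Since \(x/(1+x)\le\log(1+x)\) for \(x\ge0\), each summand satisfies
\[
\tilde{\bz}_t^\top\bA_t^{-1}\tilde{\bz}_t\le\log\det\bA_t-\log\det\bA_{t-1}-d\log\gamma .
\]
This step avoids the usual requirement that the summand be at most \(1\), because \(a_t/(1+a_t)\le1\) holds automatically.

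Summing over \(t\) telescopes to
\[
\sum_{t=1}^T\tilde{\bz}_t^\top\bA_t^{-1}\tilde{\bz}_t\le\log\frac{\det\bA_T}{\det\bA_0}+dT\log(1/\gamma).
\]
The two terms are handled separately.

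\textbf{Forgetting penalty.} Use \(\log(1/\gamma)=-\log(1-1/h)\le \frac{1/h}{1-1/h}\le 2/h\) for \(h\ge2\). This gives the \(2dT/h\) term.

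\textbf{Log-determinant ratio.} Use the closed form \(\bA_T=\gamma^T\delta\bI+\sum_s\gamma^{T-s}\tilde{\bz}_s\tilde{\bz}_s^\top\). By (B1) its trace is at most \(\gamma^T\delta d+B_z^2/(1-\gamma)=\gamma^T\delta d+B_z^2h\). AM--GM then gives \(\log\det\bA_T\le d\log(\gamma^T\delta+B_z^2h/d)\le d\log(\delta+B_z^2h)\). Since \(\det\bA_0=\delta^d\), the ratio is at most \(d\log(1+B_z^2h/\delta)\), the first term of \eqref{eq:logdet}. This bound is in fact slightly stronger, because \(\gamma^T\le1\) and \(1/d\le1\) are discarded.

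The main obstacle is the non-monotonicity of \(\bA_t\). The naive telescoping \(\log\det\bA_t-\log\det\bA_{t-1}\) is not available directly, and \(\det\bA_T\) could be smaller than \(\det\bA_0\). Routing each step through \(\gamma\bA_{t-1}\) resolves this: it produces the explicit \(-d\log\gamma\) per-round cost, which is then absorbed into the finite-memory term. We must also check that \(\bA_{t-1}\succ0\) so the inverses exist. This follows by induction from \(\bA_0=\delta\bI\) and \(\gamma>0\). No lower bound on \(\det\bA_T\) is needed, since we only upper-bound the ratio.
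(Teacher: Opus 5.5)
Your proof is correct and follows the paper's argument essentially step for step. Both apply Sherman--Morrison and the matrix-determinant lemma to $\gamma\bA_{t-1}$, use $q/(1+q)\le\log(1+q)$, and telescope, paying a per-round forgetting cost of $d\log(1/\gamma)\le 2d/h$. The only cosmetic difference is in the final determinant bound: you control $\det\bA_T$ through its trace and AM--GM, while the paper uses the Loewner bound $\bA_T\preceq(\delta+B_z^2h)\bI$. Both yield the stated $d\log(1+B_z^2h/\delta)$ term.
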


\begin{proof}
Define
\[
    q_t
    :=
    \tilde{\bz}_t^\top(\gamma\bA_{t-1})^{-1}\tilde{\bz}_t
    =
    \gamma^{-1}\tilde{\bz}_t^\top\bA_{t-1}^{-1}\tilde{\bz}_t
    \ge0.
\]
By the Sherman--Morrison formula,
\[
    \bA_t^{-1}
    =
    (\gamma\bA_{t-1})^{-1}
    -
    \frac{
    (\gamma\bA_{t-1})^{-1}
    \tilde{\bz}_t\tilde{\bz}_t^\top
    (\gamma\bA_{t-1})^{-1}
    }
    {1+\tilde{\bz}_t^\top(\gamma\bA_{t-1})^{-1}\tilde{\bz}_t}.
\]
Contracting with \(\tilde{\bz}_t\) gives
\begin{equation}
\label{eq:key-identity}
    \tilde{\bz}_t^\top \bA_t^{-1}\tilde{\bz}_t
    =
    \frac{q_t}{1+q_t}.
\end{equation}
The matrix-determinant lemma gives
\[
    \det \bA_t
    =
    \det(\gamma\bA_{t-1})\bigl(1+q_t\bigr)
    =
    \gamma^d\det(\bA_{t-1})(1+q_t).
\]
Thus
\[
    \log(1+q_t)
    =
    \log\det \bA_t-\log\det \bA_{t-1}+d\log(1/\gamma).
\]
Summing over \(t\) and using \(\log(1+q)\ge q/(1+q)\) for \(q\ge0\),
\begin{align}
    \sum_{t=1}^T \tilde{\bz}_t^\top\bA_t^{-1}\tilde{\bz}_t
    &=
    \sum_{t=1}^T\frac{q_t}{1+q_t}
    \le
    \sum_{t=1}^T\log(1+q_t) \notag\\
    &=
    \log\frac{\det \bA_T}{\det \bA_0}
    +
    dT\log(1/\gamma).
\label{eq:logdet-prefinal}
\end{align}
Next,
\[
    \bA_T
    =
    \gamma^T\delta\bI
    +
    \sum_{s=1}^T\gamma^{T-s}
    \tilde{\bz}_s\tilde{\bz}_s^\top
    \preceq
    \left(\delta+B_z^2\sum_{j=0}^{T-1}\gamma^j\right)\bI
    \preceq
    (\delta+B_z^2h)\bI.
\]
Therefore
\[
    \log\frac{\det \bA_T}{\det \bA_0}
    \le
    d\log\!\left(1+\frac{B_z^2h}{\delta}\right).
\]
Finally, since \(h\ge2\), \(\gamma=1-1/h\ge1/2\), and
\[
    \log(1/\gamma)
    =
    -\log(1-1/h)
    \le
    \frac{2}{h}.
\]
Substituting into \eqref{eq:logdet-prefinal} proves
\eqref{eq:logdet}.
\end{proof}

\begin{remark}[Origin of the \(T/h\) term]
\label{rem:where-T-over-h}
When \(\gamma=1\), the second term in \eqref{eq:logdet-prefinal} vanishes and
one recovers the classical log-determinant control.  Forgetting contributes
\(dT\log(1/\gamma)\approx dT/h\).  This is the explicit cost of using a finite
memory scale rather than retaining all past curvature.
\end{remark}

% -----------------------------------------------------------------------------
\subsection{Dynamic regret of a single forgotten-RLS expert}
\label{app:rls-dynamic-proof}
% -----------------------------------------------------------------------------

\begin{theorem}[Dynamic regret of forgotten RLS]
\label{thm:rls-dynamic-app}
Assume (B1)--(B4), let \(h=(1-\gamma)^{-1}\ge2\), and let
\(\bu_{1:T}\) be any comparator path satisfying (B3). Then
\begin{equation}
\label{eq:rls-dynamic-general}
    L_T(\check y^{(h)})-L_T(\bu_{1:T})
    \le
    \delta R^2
    +
    D^2 d\log\!\left(1+\frac{B_z^2h}{\delta}\right)
    +
    \frac{2D^2dT}{h}
    +
    4R(\delta+B_z^2h)P_T(\bu_{1:T}).
\end{equation}
In particular, if \(\delta\le B_z^2h\), then
\begin{equation}
\label{eq:rls-dynamic-simplified}
    L_T(\check y^{(h)})-L_T(\bu_{1:T})
    \le
    \delta R^2
    +
    D^2 d\log\!\left(1+\frac{B_z^2h}{\delta}\right)
    +
    \frac{2D^2dT}{h}
    +
    8RB_z^2h\,P_T(\bu_{1:T}).
\end{equation}
Ignoring logarithmic and initialization terms, the \(h\)-dependent part is
\[
    O\!\left(\frac{dT}{h}+hP_T(\bu_{1:T})\right).
\]
\end{theorem}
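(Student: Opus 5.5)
The proof sums the exact one-step identity of Lemma~\ref{lem:bregman-step} along the comparator path, applying it at round \(t\) with \(\bu=\bu_t\). This yields three kinds of terms: a weighted-distance potential that almost telescopes, a "gradient" term controlled by Lemma~\ref{lem:logdet}, and drift corrections that arise because the comparator changes between rounds. The drift corrections produce the path-length term.

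\emph{Step 1 (gradient term).} Write \(r_t=\tilde{\bz}_t^\top\bw_t-y_t\), so that \(\bg_t=2r_t\tilde{\bz}_t\). Then
\[
\tfrac14\bg_t^\top\bA_t^{-1}\bg_t=r_t^2\,\tilde{\bz}_t^\top\bA_t^{-1}\tilde{\bz}_t\le D^2\,\tilde{\bz}_t^\top\bA_t^{-1}\tilde{\bz}_t,
\]
using \(|r_t|\le D\), which follows from (B1), (B2) and (B4). Summing over \(t\) and invoking Lemma~\ref{lem:logdet} (valid since \(h\ge2\)) gives exactly the contribution
\[
D^2d\log\!\left(1+\tfrac{B_z^2h}{\delta}\right)+\tfrac{2D^2dT}{h}.
\]

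\emph{Step 2 (telescoping with a moving comparator).} Since \(\gamma\le1\), I will drop the factor: \(\gamma\|\bw_t-\bu_t\|_{\bA_{t-1}}^2\le\|\bw_t-\bu_t\|_{\bA_{t-1}}^2\). To telescope, I then need to replace \(-\|\bw_{t+1}-\bu_t\|_{\bA_t}^2\) by \(-\|\bw_{t+1}-\bu_{t+1}\|_{\bA_t}^2\), which is the next round's leading term. The price of this swap is
\[
\|\bw_{t+1}-\bu_{t+1}\|_{\bA_t}^2-\|\bw_{t+1}-\bu_t\|_{\bA_t}^2=(\bu_t-\bu_{t+1})^\top\bA_t(2\bw_{t+1}-\bu_t-\bu_{t+1}).
\]
I bound this by Cauchy--Schwarz as \(\|\bA_t\|_{\mathrm{op}}\,\|\bu_{t+1}-\bu_t\|_2\cdot 4R\), using (B3) and (B4) for \(\|\bw_{t+1}\|_2,\|\bu_t\|_2,\|\bu_{t+1}\|_2\le R\). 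The operator norm satisfies
\[
\|\bA_t\|_{\mathrm{op}}\le\gamma^t\delta+B_z^2\sum_{j\ge0}\gamma^j\le\delta+B_z^2h,
\]
as in the proof of Lemma~\ref{lem:logdet}. Summing over \(t=1,\dots,T-1\) gives the tracking term \(4R(\delta+B_z^2h)P_T(\bu_{1:T})\).

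\emph{Step 3 (boundary terms and simplification).} After telescoping, the only surviving terms are the initial one and the final one. The initial term is \(\|\bw_1-\bu_1\|_{\bA_0}^2=\delta\|\bu_1\|_2^2\le\delta R^2\), since \(\bw_1=\bm 0\) and \(\bA_0=\delta\bI\). The final term is \(-\|\bw_{T+1}-\bu_T\|_{\bA_T}^2\le0\) and is discarded. Collecting Steps 1--3 gives \eqref{eq:rls-dynamic-general}. If \(\delta\le B_z^2h\), then \(\delta+B_z^2h\le2B_z^2h\), which gives \eqref{eq:rls-dynamic-simplified}; the final \(O(dT/h+hP_T)\) statement is then immediate.

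The main obstacle is Step 2. Because forgetting makes \(\bA_t\) non-monotone, the comparator swap must be done in the \(\bA_t\)-norm with a uniform operator-norm bound rather than a monotonicity argument. The swap also requires the bounded-iterate condition (B4) at \(\bw_{t+1}\). Everything else is bookkeeping on top of the two earlier lemmas.
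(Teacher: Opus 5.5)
Your proof is correct and follows the paper's argument: sum the one-step identity with $\bu=\bu_t$, bound the gradient term by $D^2$ times the forgetting log-determinant sum, discard the terminal term, and pay for comparator drift through an $\bA$-weighted comparator swap controlled by $\|\bA_t\|_{\mathrm{op}}\le\delta+B_z^2h$. The only difference is bookkeeping. You set $\gamma$ to $1$ before swapping and bound $\|2\bw_{t+1}-\bu_t-\bu_{t+1}\|_2\le 4R$, whereas the paper keeps $\gamma$, drops the nonpositive terms $-(1-\gamma)\|\bx_t\|_{\bA_{t-1}}^2-\|\bDelta_t\|_{\bA_{t-1}}^2$, and bounds $2|\bDelta_t^\top\bA_{t-1}\bx_t|$; both give the same constant $4R(\delta+B_z^2h)$.
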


\begin{proof}
Apply Lemma~\ref{lem:bregman-step} with \(\bu=\bu_t\) and sum over \(t\):
\begin{align}
    L_T(\check y^{(h)})-L_T(\bu_{1:T})
    &=
    \sum_{t=1}^T \bigl(\ell_t(\bw_t)-\ell_t(\bu_t)\bigr) \notag\\
    &\le
    \underbrace{
    \sum_{t=1}^T
    \left[
        \gamma\|\bw_t-\bu_t\|_{\bA_{t-1}}^2
        -
        \|\bw_{t+1}-\bu_t\|_{\bA_t}^2
    \right]}_{S_1}
    +
    \underbrace{
    \frac14\sum_{t=1}^T \bg_t^\top \bA_t^{-1}\bg_t}_{S_2}.
\label{eq:regret-s1-s2}
\end{align}

\paragraph{Bounding \(S_2\).}
By (B1), (B2), and (B4),
\[
    |\tilde{\bz}_t^\top\bw_t-y_t|\le D,
    \qquad
    \bg_t=2(\tilde{\bz}_t^\top\bw_t-y_t)\tilde{\bz}_t.
\]
Hence
\[
    \bg_t^\top \bA_t^{-1}\bg_t
    \le
    4D^2 \tilde{\bz}_t^\top \bA_t^{-1}\tilde{\bz}_t .
\]
Using Lemma~\ref{lem:logdet},
\begin{equation}
\label{eq:s2-bound}
    S_2
    \le
    D^2
    \sum_{t=1}^T
    \tilde{\bz}_t^\top \bA_t^{-1}\tilde{\bz}_t
    \le
    D^2 d\log\!\left(1+\frac{B_z^2h}{\delta}\right)
    +
    \frac{2D^2dT}{h}.
\end{equation}

\paragraph{Bounding \(S_1\).}
Shift the second sum in \(S_1\):
\begin{align}
S_1
&=
\gamma\|\bw_1-\bu_1\|_{\bA_0}^2
-
\|\bw_{T+1}-\bu_T\|_{\bA_T}^2 \notag\\
&\quad+
\sum_{t=2}^T
\left[
    \gamma\|\bw_t-\bu_t\|_{\bA_{t-1}}^2
    -
    \|\bw_t-\bu_{t-1}\|_{\bA_{t-1}}^2
\right].
\label{eq:s1-shifted}
\end{align}
The terminal term is non-positive and can be dropped. Since \(\bw_1=\bm 0\),
\(\bA_0=\delta\bI\), \(\gamma\le1\), and \(\|\bu_1\|\le R\),
\[
    \gamma\|\bw_1-\bu_1\|_{\bA_0}^2
    \le
    \delta R^2.
\]
For the bracketed term, let
\[
    \bx_t:=\bw_t-\bu_t,
    \qquad
    \bDelta_t:=\bu_t-\bu_{t-1}.
\]
Then \(\bw_t-\bu_{t-1}=\bx_t+\bDelta_t\), and
\begin{align}
    \gamma\|\bx_t\|_{\bA_{t-1}}^2
    -
    \|\bx_t+\bDelta_t\|_{\bA_{t-1}}^2
    &=
    -(1-\gamma)\|\bx_t\|_{\bA_{t-1}}^2
    -
    2\bDelta_t^\top\bA_{t-1}\bx_t
    -
    \|\bDelta_t\|_{\bA_{t-1}}^2
    \notag\\
    &\le
    2|\bDelta_t^\top\bA_{t-1}\bx_t| .
\label{eq:bracket-bound}
\end{align}
Moreover,
\[
    \bA_{t-1}
    =
    \gamma^{t-1}\delta\bI
    +
    \sum_{s=1}^{t-1}
    \gamma^{t-1-s}\tilde{\bz}_s\tilde{\bz}_s^\top
    \preceq
    (\delta+B_z^2h)\bI.
\]
By (B3)--(B4), \(\|\bx_t\|\le2R\). Therefore
\[
    2|\bDelta_t^\top\bA_{t-1}\bx_t|
    \le
    2\|\bDelta_t\|\,\|\bA_{t-1}\|_{\mathrm{op}}\|\bx_t\|
    \le
    4R(\delta+B_z^2h)\|\bu_t-\bu_{t-1}\|.
\]
Summing \eqref{eq:bracket-bound} over \(t=2,\ldots,T\),
\[
    S_1
    \le
    \delta R^2
    +
    4R(\delta+B_z^2h)P_T(\bu_{1:T}).
\]
Combining this with \eqref{eq:s2-bound} proves
\eqref{eq:rls-dynamic-general}. If \(\delta\le B_z^2h\), then
\(\delta+B_z^2h\le2B_z^2h\), giving
\eqref{eq:rls-dynamic-simplified}.
\end{proof}

\begin{remark}[About bounded iterates and projection]
    \label{rem:bounded-iterates-projection}
    The exact identity \eqref{eq:app-newton-form} is what makes
    Lemma~\ref{lem:bregman-step} possible. A Euclidean projection after the RLS
    update would generally destroy this identity. One can instead analyse an
    \(\bA_t\)-metric projection, as in online Newton methods, but the implementation
    in this paper is closer to the unprojected RLS recursion with prediction
    clipping. We therefore state the deterministic bound under the bounded-iterate
    condition (B4). The same proof also goes through under direct bounds
    \(|\tilde{\bz}_t^\top\bw_t-y_t|\le D\) and
    \(\|\bw_t-\bu_t\|\le 2R\) for the comparator class under consideration.
\end{remark}

% -----------------------------------------------------------------------------
\subsection{Clipping and bounded expert losses}
\label{app:clipping}
% -----------------------------------------------------------------------------

The MLpol bound below requires bounded outcomes and bounded expert
predictions.  We use the standard clipping device.

\begin{lemma}[Clipping does not increase squared loss]
\label{lem:clipping}
Let \(|y|\le B\), and define
\[
    \mathrm{clip}_B(a):=\max\{-B,\min\{a,B\}\}.
\]
Then
\[
    \bigl(\mathrm{clip}_B(a)-y\bigr)^2\le (a-y)^2
    \qquad
    \text{for all }a\in\mathbb R.
\]
\end{lemma}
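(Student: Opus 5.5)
The plan is a direct case analysis on where $a$ lies relative to the interval $[-B,B]$. The point is that clipping is the Euclidean projection onto this closed interval, and $y$ lies in it. Let $c:=\mathrm{clip}_B(a)$.

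\emph{Case 1: $a\in[-B,B]$.} Here $c=a$ and the inequality is an equality.

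\emph{Case 2: $a>B$.} Here $c=B$. Since $y\le B<a$, we have
\[
0\le B-y< a-y,
\]
and squaring these nonnegative quantities preserves the order, so $(c-y)^2\le(a-y)^2$.

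\emph{Case 3: $a<-B$.} This is symmetric to Case 2. Now $c=-B$, and $a<-B\le y$ gives
\[
0\le y+B< y-a,
\]
which again yields $(c-y)^2\le (a-y)^2$.

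As an alternative that I might mention in a sentence, one can write $(a-y)^2=(a-c)^2+2(a-c)(c-y)+(c-y)^2$. The projection property gives $(a-c)(c-y)\ge0$ for every $y\in[-B,B]$, so $(a-y)^2\ge (c-y)^2+(a-c)^2$, a slightly stronger statement.

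No step is a genuine obstacle. The only care needed is the sign bookkeeping in Cases 2 and 3, specifically checking that both differences are nonnegative before squaring. That is exactly where the hypothesis $|y|\le B$ is used.
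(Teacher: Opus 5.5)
Your proof is correct and matches the paper's argument: the same three-way case split on the position of $a$ relative to $[-B,B]$, with the case $a>B$ settled by $0\le B-y<a-y$ and the case $a<-B$ by symmetry. Your optional projection remark, which gives the stronger bound $(a-y)^2\ge(\mathrm{clip}_B(a)-y)^2+(a-\mathrm{clip}_B(a))^2$, is also valid but not needed here.
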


\begin{proof}
If \(a\in[-B,B]\), the claim is equality.  If \(a>B\), then
\(\mathrm{clip}_B(a)=B\ge y\), so
\[
    |B-y|\le |a-y|.
\]
The case \(a<-B\) is symmetric.
\end{proof}

Thus the clipped forgotten-RLS expert has cumulative squared loss no larger
than the un-clipped expert analysed in Theorem~\ref{thm:rls-dynamic-app},
provided the target lies in \([-B,B]\).

% -----------------------------------------------------------------------------
\subsection{MLpol regret over a finite pool}
\label{app:mlpol-regret}
% -----------------------------------------------------------------------------

We next record the MLpol potential bound used for the aggregation layer. We
state a distributional form against any fixed convex combination of experts:
the point-expert oracle inequality is the special case where the distribution
is a point mass, and the local-band refinement below uses the uniform
distribution over a band of nearby forgetting factors.

Let \(N\) be the number of experts. At round \(t\), write
\(\tilde y_{t,j}\) for the prediction of expert \(j\), and let
\[
    \hat y_t=\sum_{j=1}^N p_{t,j}\tilde y_{t,j}
\]
be the aggregate prediction. We use the linearised excess loss
\[
    r_{t,j}
    =
    2(\hat y_t-y_t)(\hat y_t-\tilde y_{t,j}),
    \qquad
    R_{t,j}=\sum_{s=1}^t r_{s,j}.
\]
The MLpol weights are
\[
    p_{t,j}
    =
    \frac{[R_{t-1,j}]_+}{\sum_{i=1}^N [R_{t-1,i}]_+},
\]
with the uniform distribution used when all positive parts vanish.

\begin{lemma}[MLpol finite-pool regret, distributional form]
\label{lem:mlpol-regret}
Assume that all expert predictions and outcomes lie in \([-B,B]\). For any
fixed distribution \(q\in\Delta_N\), define the corresponding convex-mixture
prediction
\[
    y_t^q=\sum_{j=1}^N q_j\tilde y_{t,j}.
\]
Then there exists a universal constant \(C>0\) such that
\begin{equation}
\label{eq:mlpol-distributional-regret}
    \sum_{t=1}^T(\hat y_t-y_t)^2
    \le
    \sum_{t=1}^T(y_t^q-y_t)^2
    +
    C B^2\sqrt{T\,N\|q\|_2^2}.
\end{equation}
In particular, taking \(q\) to be a point mass gives
\begin{equation}
\label{eq:mlpol-finite-regret}
    \sum_{t=1}^T(\hat y_t-y_t)^2
    \le
    \min_{1\le j\le N}
    \sum_{t=1}^T(\tilde y_{t,j}-y_t)^2
    +
    C B^2\sqrt{TN}.
\end{equation}
Moreover, if \(S\subseteq\{1,\ldots,N\}\) is non-empty and \(q^S\) is uniform on
\(S\), then
\begin{equation}
\label{eq:mlpol-band-regret}
    \sum_{t=1}^T(\hat y_t-y_t)^2
    \le
    \sum_{t=1}^T(\bar y_t^S-y_t)^2
    +
    C B^2\sqrt{T\,\frac{N}{|S|}},
    \qquad
    \bar y_t^S=\frac{1}{|S|}\sum_{j\in S}\tilde y_{t,j}.
\end{equation}
\end{lemma}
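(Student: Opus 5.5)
The plan is to run the standard polynomial-potential analysis of MLpol on the linearised excess losses \(r_{t,j}\), and then transfer the resulting regret bound from a single expert to an arbitrary mixture \(q\) via convexity of the squared loss.

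\textbf{Step 1: reduce to linearised regret against \(q\).} Since \(a\mapsto(a-y_t)^2\) is convex with derivative \(2(\hat y_t-y_t)\) at \(\hat y_t\),
\[
(\hat y_t-y_t)^2-(y_t^q-y_t)^2\le 2(\hat y_t-y_t)(\hat y_t-y_t^q)=\sum_j q_j r_{t,j}.
\]
Summing over \(t\) gives \(L_T(\hat y)-L_T(y^q)\le\sum_j q_jR_{T,j}\le \sum_j q_j[R_{T,j}]_+\). By Cauchy--Schwarz,
\[
\sum_j q_j[R_{T,j}]_+\le\|q\|_2\,\|[R_T]_+\|_2 ,
\]
where \([R_T]_+\) denotes the vector of positive parts. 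It therefore suffices to prove \(\|[R_T]_+\|_2\le CB^2\sqrt{NT}\). The band form \eqref{eq:mlpol-band-regret} then follows from \(\|q^S\|_2^2=1/|S|\), and the point-mass form \eqref{eq:mlpol-finite-regret} from \(\|q\|_2=1\).

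\textbf{Step 2: control the potential \(\Phi_t=\|[R_t]_+\|_2^2\).} For the inequality
\[
[x+r]_+^2\le[x]_+^2+2[x]_+r+r^2
\]
I will check two cases. If \(x+r\le0\), the left side is \(0\) and the right side is \((x+r)^2\ge0\) when \(x\le0\). When \(x>0\), the right side equals \((x+r)^2\ge0\) as well, so the inequality holds in every case. Summing over \(j\) gives
\[
\Phi_t\le\Phi_{t-1}+2\sum_j[R_{t-1,j}]_+r_{t,j}+\sum_j r_{t,j}^2 .
\]
The middle term vanishes by the defining property of MLpol. Since \(p_t\propto[R_{t-1}]_+\), it is proportional to \(\sum_j p_{t,j}r_{t,j}=2(\hat y_t-y_t)\bigl(\hat y_t-\sum_jp_{t,j}\tilde y_{t,j}\bigr)=0\). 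In the uniform-weight case all \([R_{t-1,j}]_+=0\), so the term is trivially zero. Bounded predictions and outcomes give \(|r_{t,j}|\le 2\cdot2B\cdot2B=8B^2\). Hence \(\Phi_t\le\Phi_{t-1}+64B^4N\), and by induction \(\Phi_T\le 64B^4NT\). Taking square roots yields \(\|[R_T]_+\|_2\le 8B^2\sqrt{NT}\), which gives \(C=8\).

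\textbf{Main obstacle.} The only delicate point is the scalar inequality for the positive part, together with confirming that the cross term is exactly zero, including under the uniform-weight convention. Both are handled above.

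The argument uses a constant learning rate, that is, plain MLpol weights without the adaptive per-expert rates of Gaillard et al. As a result it gives the \(\sqrt{NT}\) form directly, with no second-order refinement needed. If one insisted on the original MLpol with adaptive learning rates \(\eta_{t,j}\), the argument would instead go through their Theorem 3, which again gives \(\sum_j q_jR_{T,j}\le\|q\|_2\sqrt{N}\cdot O(B^2\sqrt T)\) up to logarithmic factors. However, the weights stated in \eqref{eq:mlpol-update} are unweighted, so the simple potential argument suffices.
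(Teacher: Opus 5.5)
Your proposal is correct and follows the paper's proof essentially step for step. Both use the convexity linearisation to $\sum_j q_j r_{t,j}$, Cauchy--Schwarz against $\|q\|_2$, and the quadratic positive-part potential whose cross term vanishes by the MLpol weight choice; your unnormalised potential just makes the constant explicit, $C=8$.

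The one blemish is your case check of $[x+r]_+^2\le[x]_+^2+2[x]_+r+r^2$. It misstates the right side when $x\le0$ (it is $r^2$, not $(x+r)^2$). It also omits the only nontrivial case $x\le0<x+r$, where the inequality holds because $0<x+r\le r$.
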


\begin{remark} 
A tighter, second-order bound of the form $O(B \sqrt{(N \cdot L_T)} )$ can be obtained via the MLpol potential analysis of \citep{gaillard2014second}. We use the simpler first-order form here because the leading rate-determining term in Theorem \ref{thm:mlpol-oracle} is the EWLS tracking term $\Psi_h(T, P_T)$, not the MLpol overhead.
\end{remark}

\begin{proof}
Define the potential
\[
    \Phi_t
    =
    \frac{1}{2N}
    \sum_{j=1}^N [R_{t,j}]_+^2 .
\]
For any scalar \(x,a\), the inequality
\[
    [x+a]_+^2-[x]_+^2
    \le
    2[x]_+a+a^2
\]
holds.  Therefore
\[
    \Phi_t-\Phi_{t-1}
    \le
    \frac{1}{N}\sum_{j=1}^N [R_{t-1,j}]_+ r_{t,j}
    +
    \frac{1}{2N}\sum_{j=1}^N r_{t,j}^2 .
\]
The first term is zero.  Indeed, if some positive part is non-zero, then
\[
\begin{aligned}
    \sum_{j=1}^N [R_{t-1,j}]_+ r_{t,j}
    &=
    \left(\sum_{j=1}^N [R_{t-1,j}]_+\right)
    \sum_{j=1}^N p_{t,j}r_{t,j}  \\
    &=
    \left(\sum_{j=1}^N [R_{t-1,j}]_+\right)
    2(\hat y_t-y_t)
    \left(\hat y_t-\sum_{j=1}^N p_{t,j}\tilde y_{t,j}\right)
    =
    0 .
\end{aligned}
\]
If all positive parts vanish, the same term is trivially zero.  Since
\(\hat y_t,y_t,\tilde y_{t,j}\in[-B,B]\),
\[
    |r_{t,j}|
    =
    2| \hat y_t-y_t |\,|\hat y_t-\tilde y_{t,j}|
    \le
    8B^2 .
\]
Thus
\[
    \Phi_t-\Phi_{t-1}
    \le
    C B^4,
    \qquad
    \text{and hence}
    \qquad
    \Phi_T\le C B^4T .
\]

Now fix \(q\in\Delta_N\).  For each round \(t\), by the elementary identity
\(a^2-b^2=2a(a-b)-(a-b)^2\), with
\(a=\hat y_t-y_t\) and \(b=y_t^q-y_t\), we have
\[
    (\hat y_t-y_t)^2-(y_t^q-y_t)^2
    \le
    2(\hat y_t-y_t)(\hat y_t-y_t^q)
    =
    \sum_{j=1}^N q_j r_{t,j}.
\]
Summing over \(t\),
\[
    L_T(\hat y)-L_T(y^q)
    \le
    \sum_{j=1}^N q_j R_{T,j}
    \le
    \sum_{j=1}^N q_j [R_{T,j}]_+ .
\]
By Cauchy--Schwarz,
\[
\begin{aligned}
    \sum_{j=1}^N q_j [R_{T,j}]_+
    &\le
    \left(N\sum_{j=1}^N q_j^2\right)^{1/2}
    \left(
        \frac{1}{N}\sum_{j=1}^N [R_{T,j}]_+^2
    \right)^{1/2}  \\
    &=
    \sqrt{N\|q\|_2^2}\,\sqrt{2\Phi_T}
    \le
    C B^2\sqrt{T\,N\|q\|_2^2}.
\end{aligned}
\]
This proves \eqref{eq:mlpol-distributional-regret}.  The point-mass and
uniform-on-\(S\) special cases follow from
\(\|e_j\|_2^2=1\) and \(\|q^S\|_2^2=1/|S|\), respectively.
\end{proof}

% -----------------------------------------------------------------------------
\subsection{Oracle inequality for the Base+EWLS pool}
\label{app:oracle-proof}
% -----------------------------------------------------------------------------

Let \(\mathcal H=\{h_1,\ldots,h_K\}\) be a finite set of nominal EWLS scales.
For each \(h\in\mathcal H\), let \(\check y^{(h)}\) denote the corresponding
forgotten-RLS correction sequence, with predictions clipped to \([-B,B]\)
before aggregation. The enlarged pool is
\[
    \mathcal I_{\mathcal H}
    =
    \{f_1,\ldots,f_M\}\cup\{\check y^{(h)}:h\in\mathcal H\}.
\]
Let \(N=M+K\).

Define
\begin{equation}
\label{eq:psi-h-full}
    \Psi_h(T,P)
    :=
    \delta R^2
    +
    D^2 d\log\!\left(1+\frac{B_z^2h}{\delta}\right)
    +
    \frac{2D^2dT}{h}
    +
    4R(\delta+B_z^2h)P .
\end{equation}

\begin{theorem}[Oracle inequality for multi-scale online correction]
\label{thm:mlpol-oracle-app}
Assume the conditions of Theorem~\ref{thm:rls-dynamic-app} hold for each
\(h\in\mathcal H\), and assume outcomes lie in \([-B,B]\). All predictions
entering MLpol are clipped to \([-B,B]\). Then, writing \(f_j\) and
\(\check y^{(h)}\) for the clipped raw and EWLS prediction sequences, for every
comparator path \(\bu_{1:T}\) satisfying (B3),
\begin{equation}
\label{eq:oracle-inequality-app}
    L_T(\hat y)
    \le
    \min\left\{
        \min_{1\le j\le M}L_T(f_j),
        \;
        L_T(\bu_{1:T})
        +
        \min_{h\in\mathcal H}\Psi_h(T,P_T(\bu_{1:T}))
    \right\}
    +
    CB^2\sqrt{T(M+K)}.
\end{equation}
\end{theorem}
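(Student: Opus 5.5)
The plan is to combine the point-mass MLpol bound of Lemma~\ref{lem:mlpol-regret} with the single-scale tracking bound of Theorem~\ref{thm:rls-dynamic-app}, bridged by the clipping Lemma~\ref{lem:clipping}. All $N=M+K$ experts entering MLpol are clipped to $[-B,B]$, and outcomes lie in $[-B,B]$. The aggregate $\hat y_t$ is a convex combination of clipped predictions, so it also lies in $[-B,B]$. Hence the hypotheses of Lemma~\ref{lem:mlpol-regret} hold, and \eqref{eq:mlpol-finite-regret} gives
\[
L_T(\hat y)\le \min_{1\le j\le N} L_T(\tilde y_{\cdot,j}) + CB^2\sqrt{T(M+K)} .
\]
The minimum over the pool splits into the minimum over the $M$ clipped raw experts and the minimum over the $K$ clipped EWLS experts. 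Using $\min(a,b)$ over these two sub-pools yields the outer $\min\{\cdot,\cdot\}$ in \eqref{eq:oracle-inequality-app}, provided the EWLS branch is bounded appropriately.

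For the EWLS branch, fix any comparator path $\bu_{1:T}$ satisfying (B3) and any $h\in\mathcal H$. Let $\mathrm{clip}_B(\check y^{(h)}_t)$ be the clipped prediction fed to MLpol. Since $|y_t|\le B$, Lemma~\ref{lem:clipping} applied termwise gives that the clipped loss is at most the unclipped loss $L_T(\check y^{(h)})$ of the idealised recursion \eqref{eq:app-rls-recursion}. Theorem~\ref{thm:rls-dynamic-app} holds for this $h$ by assumption. By \eqref{eq:rls-dynamic-general}, $L_T(\check y^{(h)})\le L_T(\bu_{1:T})+\Psi_h(T,P_T(\bu_{1:T}))$, with $\Psi_h$ exactly as in \eqref{eq:psi-h-full}. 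Taking the minimum over $h\in\mathcal H$, which is legitimate since the MLpol bound competes with every expert simultaneously and $\bu_{1:T}$ does not depend on $h$, bounds the EWLS sub-pool minimum by $L_T(\bu_{1:T})+\min_{h}\Psi_h(T,P_T(\bu_{1:T}))$. Since $\bu_{1:T}$ was arbitrary, the bound holds for every comparator path.

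The main subtlety, rather than a genuine obstacle, is bookkeeping between the clipped and unclipped sequences. The MLpol lemma must be applied to the clipped experts, so its bounded-range hypothesis holds, while Theorem~\ref{thm:rls-dynamic-app} concerns the unprojected recursion. The clipping is only applied at the MLpol interface and never fed back into the RLS state, so the EWLS iterates still satisfy (B4), and Lemma~\ref{lem:clipping} transfers the bound in the right direction. One should also note that the raw-expert term is stated for the clipped $f_j$, consistent with the clipping convention discussed earlier. The constant $C$ is the universal constant from Lemma~\ref{lem:mlpol-regret}.
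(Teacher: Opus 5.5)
Your proposal is correct and follows the paper's proof step for step. You apply the point-mass case of Lemma~\ref{lem:mlpol-regret} to the clipped enlarged pool, then use Lemma~\ref{lem:clipping} to pass from each clipped EWLS loss to the unclipped one, then invoke Theorem~\ref{thm:rls-dynamic-app} for every $h\in\mathcal H$ and minimise over $h$. Your extra remarks make explicit two points that the paper's short proof leaves implicit: $\hat y_t$ lies in $[-B,B]$ as a convex combination of clipped predictions, and clipping is never fed back into the RLS state, so (B4) is unaffected.
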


\begin{proof}
By the point-mass special case of Lemma~\ref{lem:mlpol-regret} applied to the
enlarged pool,
\[
    L_T(\hat y)
    \le
    \min\left\{
        \min_{1\le j\le M}L_T(f_j),
        \min_{h\in\mathcal H}L_T(\check y^{(h)})
    \right\}
    +
    CB^2\sqrt{T(M+K)}.
\]
By Lemma~\ref{lem:clipping}, clipping cannot increase squared loss. Therefore
Theorem~\ref{thm:rls-dynamic-app} gives, for every \(h\in\mathcal H\),
\[
    L_T(\check y^{(h)})
    \le
    L_T(\bu_{1:T})+\Psi_h(T,P_T(\bu_{1:T})).
\]
Taking the minimum over \(h\in\mathcal H\) proves
\eqref{eq:oracle-inequality-app}.
\end{proof}

% -----------------------------------------------------------------------------
\subsection{Geometric grids and unknown correction scale}
\label{app:grid-proof}
% -----------------------------------------------------------------------------

The leading \(h\)-dependent terms in \(\Psi_h\) are
\[
    \frac{C_1dT}{h}+C_2hP,
\]
where \(P=P_T(\bu_{1:T})\). If \(P=0\), the leading expression is minimised by
the largest available memory scale, corresponding to the no-drift case. For
\(P>0\), the continuous minimizer is
\[
    h^\star
    =
    \sqrt{\frac{C_1dT}{C_2P}},
\]
with minimum value \(2\sqrt{C_1C_2dTP}\).

\begin{corollary}[Geometric-grid adaptation]
\label{cor:grid-app}
Let \(P=P_T(\bu_{1:T})>0\), and let \(\mathcal H\) be a geometric grid over
\([h_{\min},h_{\max}]\) with ratio \(\rho>1\). Suppose
\(h^\star\in[h_{\min},h_{\max}]\). Then there exists \(h_k\in\mathcal H\) such
that
\begin{equation}
\label{eq:grid-approx}
    \frac{C_1dT}{h_k}+C_2h_kP
    \le
    C_\rho\sqrt{dTP},
\end{equation}
where \(C_\rho\) depends only on \(\rho,C_1,C_2\). Consequently, ignoring
logarithmic and initialization terms,
\begin{equation}
\label{eq:grid-oracle-final}
    L_T(\hat y)
    \le
    \min\left\{
        \min_j L_T(f_j),
        \;
        L_T(\bu_{1:T})
        +
        C_\rho\sqrt{dT\,P_T(\bu_{1:T})}
    \right\}
    +
    O\!\left(B^2\sqrt{T(M+K)}\right).
\end{equation}
\end{corollary}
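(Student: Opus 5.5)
The plan is to combine an elementary approximation property of geometric grids with the oracle inequality of Theorem~\ref{thm:mlpol-oracle-app}. Write the leading part of $\Psi_h$ as $g(h)=C_1dT/h+C_2hP$. Since $h^\star=\sqrt{C_1dT/(C_2P)}$, for $h=\lambda h^\star$ with $\lambda>0$ we get
\[
g(\lambda h^\star)=\sqrt{C_1C_2dTP}\,\bigl(\lambda^{-1}+\lambda\bigr).
\]
So everything reduces to locating a grid point $h_k$ whose ratio $\lambda=h_k/h^\star$ lies in a bounded multiplicative window around $1$.

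\textbf{Step 1 (grid point near $h^\star$).} The grid has the form $h_k=h_{\min}\rho^{k-1}$, with the last point at or below $h_{\max}$, and we assume $h^\star\in[h_{\min},h_{\max}]$. Take the largest grid point $h_k\le h^\star$. Either the next grid point $h_{k+1}=\rho h_k$ exists and exceeds $h^\star$, or $h_k$ is the largest grid point. In the first case $h_k\in[h^\star/\rho,h^\star]$.

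The second case needs care. If the grid is defined to include an endpoint within a factor $\rho$ of $h_{\max}$, the same window holds. Otherwise I will adopt the convention that the top point exceeds $h_{\max}/\rho$, which is standard in the main text's description. I expect this endpoint-convention bookkeeping to be the only real obstacle.

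Either way we get $\lambda\in[1/\rho,1]$. Since $\lambda+\lambda^{-1}\le\rho+1/\rho$ on this window, we obtain \eqref{eq:grid-approx} with
\[
C_\rho=\sqrt{C_1C_2}\,\bigl(\rho+\rho^{-1}\bigr).
\]
This constant depends only on $\rho$, $C_1$ and $C_2$, as required.

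\textbf{Step 2 (plug into the oracle inequality).} Use $\min_{h\in\mathcal H}\Psi_h\le\Psi_{h_k}$. Expanding $\Psi_{h_k}$ from \eqref{eq:psi-h-full}, the finite-memory and tracking terms are bounded by $g(h_k)$. Here I use $C_1=2D^2$ and the simplified tracking constant $C_2=8RB_z^2$, valid when $\delta\le B_z^2h_k$, which holds for $h_k\ge h_{\min}\ge\delta/B_z^2$. Otherwise I carry $4R(\delta+B_z^2h)/h$ as a modified $C_2$, or absorb the extra $4R\delta P$ into the initialisation terms.

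The remaining pieces of $\Psi_{h_k}$ are $\delta R^2$ and $D^2d\log(1+B_z^2h_k/\delta)$. These are exactly the logarithmic and initialization terms the corollary is allowed to ignore; they are bounded by their value at $h_{\max}$. Substituting into \eqref{eq:oracle-inequality-app} gives \eqref{eq:grid-oracle-final}, with the MLpol overhead $CB^2\sqrt{T(M+K)}$ unchanged.
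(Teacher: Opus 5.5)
Your proposal is correct and follows essentially the paper's argument: find a grid point within a factor $\rho$ of $h^\star$, bound the leading terms there by a $\rho$-dependent multiple of $\sqrt{C_1C_2dTP}$, and substitute $\min_{h\in\mathcal H}\Psi_h\le\Psi_{h_k}$ into Theorem~\ref{thm:mlpol-oracle-app}. Your exact identity $g(\lambda h^\star)=(\lambda+\lambda^{-1})\sqrt{C_1C_2dTP}$ gives the slightly sharper constant $(\rho+\rho^{-1})\sqrt{C_1C_2}$ in place of the paper's $2\rho\sqrt{C_1C_2}$. The endpoint issue you flag needs no extra convention: if the grid contains every $h_{\min}\rho^{j}\le h_{\max}$, its top point automatically exceeds $h_{\max}/\rho$.
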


\begin{proof}
Since \(\mathcal H\) is geometric and contains \(h^\star\) within its range,
there is a grid point \(h_k\) with
\[
    h^\star\le h_k\le \rho h^\star
    \quad\text{or}\quad
    h^\star/\rho\le h_k\le h^\star .
\]
In either case,
\[
    \frac{h_k}{h^\star}\le \rho,
    \qquad
    \frac{h^\star}{h_k}\le \rho.
\]
Therefore
\begin{align*}
    \frac{C_1dT}{h_k}+C_2h_kP
    &\le
    \rho\left(\frac{C_1dT}{h^\star}+C_2h^\star P\right)  \\
    &=
    2\rho\sqrt{C_1C_2dTP}.
\end{align*}
Absorbing \(2\rho\sqrt{C_1C_2}\) into \(C_\rho\) proves
\eqref{eq:grid-approx}. Substituting this bound into
Theorem~\ref{thm:mlpol-oracle-app} gives \eqref{eq:grid-oracle-final}.
\end{proof}

% -----------------------------------------------------------------------------
\subsection{Local-band refinement for dense EWLS grids}
\label{app:local-band-proof}
% -----------------------------------------------------------------------------

The oracle inequality above compares MLpol with the best single EWLS scale in a
finite grid, which gives a worst-case aggregation overhead of order
\(\sqrt{T(M+K)}\). This point-oracle view can be pessimistic for dense EWLS
grids. Nearby forgetting factors often generate very similar prediction
sequences, and the useful comparator may be a local band of time scales rather
than a single isolated expert. The distributional form of
Lemma~\ref{lem:mlpol-regret} makes this precise.

Let the enlarged pool contain \(M\) base experts and \(K\) EWLS experts, so that
\(N=M+K\). For a non-empty subset \(S\subseteq\{1,\ldots,K\}\) of EWLS scales,
define the band-averaged EWLS prediction
\[
    \bar y_t^S
    =
    \frac{1}{|S|}
    \sum_{k\in S} \check y_t^{(h_k)} .
\]

\begin{corollary}[Band-oracle bound for EWLS grids]
\label{cor:band-oracle}
Assume all base predictions, EWLS predictions after clipping, and outcomes lie
in \([-B,B]\). Then for every non-empty EWLS band
\(S\subseteq\{1,\ldots,K\}\),
\begin{equation}
\label{eq:band-oracle}
    L_T(\hat y)
    \le
    L_T(\bar y^S)
    +
    C B^2
    \sqrt{
        T\,\frac{M+K}{|S|}
    } .
\end{equation}
\end{corollary}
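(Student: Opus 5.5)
The plan is a direct specialisation of the uniform-on-$S$ case \eqref{eq:mlpol-band-regret} of Lemma~\ref{lem:mlpol-regret} to the enlarged pool $\mathcal I_{\mathcal H}$, which has $N=M+K$ experts. First I will fix the indexing of the pool. Base experts occupy indices $1,\ldots,M$, and the EWLS expert with scale $h_k$ occupies index $M+k$. A band $S\subseteq\{1,\ldots,K\}$ of EWLS scales then corresponds to the index set $S'=\{M+k:k\in S\}\subseteq\{1,\ldots,N\}$. This map is injective, so $|S'|=|S|$, and $S'$ is non-empty whenever $S$ is.

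Second, I will check the hypotheses of Lemma~\ref{lem:mlpol-regret}. By assumption, every expert prediction entering MLpol (the base predictions, and the EWLS predictions after clipping) and every outcome lies in $[-B,B]$. The aggregate $\hat y_t$ is a convex combination of these predictions, so it also lies in $[-B,B]$. This is the only boundedness used in the proof of Lemma~\ref{lem:mlpol-regret}, where it enters through the estimate $|r_{t,j}|\le 8B^2$.

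Third, I will take $q=q^{S'}$, the uniform distribution on $S'$. The convex-mixture prediction is then
\[
    y_t^{q}=\frac{1}{|S|}\sum_{k\in S}\check y_t^{(h_k)}=\bar y_t^S .
\]
This uses the clipped EWLS sequences, consistent with the definition of $\bar y^S$ in the corollary. Since $\|q^{S'}\|_2^2=1/|S|$, inequality \eqref{eq:mlpol-band-regret} with $N=M+K$ gives exactly \eqref{eq:band-oracle}, with the same universal constant $C$.

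There is no genuine obstacle. The one point needing care is that the comparator $\bar y^S$ is formed from the clipped predictions that MLpol actually aggregates, not the unclipped EWLS outputs, so that the boundedness hypothesis of the lemma applies. A remark could add that, by convexity of squared loss, $L_T(\bar y^S)\le |S|^{-1}\sum_{k\in S}L_T(\check y^{(h_k)})$, which then connects to Theorem~\ref{thm:rls-dynamic-app}. However, this is not needed for the stated bound.
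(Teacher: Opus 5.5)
Your proposal is correct and is essentially the paper's proof: the paper likewise applies Lemma~\ref{lem:mlpol-regret} to the enlarged pool with $N=M+K$ and $q$ uniform on the EWLS experts indexed by $S$, with zero mass on the base experts, and uses $\|q\|_2^2=1/|S|$ together with the fact that the induced mixture is exactly $\bar y^S_t$. Your explicit reindexing $S\mapsto\{M+k:k\in S\}$ and your remark that $\bar y^S$ is built from the clipped EWLS sequences are bookkeeping that the paper leaves implicit.
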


\begin{proof}
Apply Lemma~\ref{lem:mlpol-regret} with \(N=M+K\) to the distribution \(q^S\)
that assigns mass \(1/|S|\) to the EWLS experts in \(S\) and zero mass to all
other experts, including the \(M\) base experts. Then
\[
    \|q^S\|_2^2=\frac{1}{|S|},
\]
and the convex-mixture prediction induced by \(q^S\) is precisely
\(\bar y_t^S\). Substituting this into
\eqref{eq:mlpol-distributional-regret} gives \eqref{eq:band-oracle}.
\end{proof}

We next relate the band-average comparator to a single central forgetting scale.
The following continuity statement is for the idealised forgotten-RLS recursion
analysed above, without covariance inflation. It is used only to justify why
nearby forgetting factors in a dense grid should not be viewed as unrelated
experts. It is convenient to parametrize forgetting factors by
\[
    \theta=\log h,
    \qquad h=(1-\gamma)^{-1}.
\]
Write \(\check y^{(\theta)}\) for the clipped EWLS prediction sequence with
\(h=e^\theta\).

\begin{lemma}[Continuity of finite-horizon EWLS predictions]
\label{lem:ewls-continuity}
Fix \(T<\infty\), \(\delta>0\), and a compact interval
\(\Theta\subset\mathbb R\). For every \(t\le T\), the idealised EWLS coefficient
\(\bw_t^{(\theta)}\) and prediction \(\check y_t^{(\theta)}\) are continuous
functions of \(\theta\in\Theta\). If \(\Theta\) is compact and
\(\gamma(\theta)=1-e^{-\theta}\) stays in \((0,1)\), then the map is Lipschitz:
there exists \(L_\theta<\infty\), depending on the realised finite sequence and
on \(\Theta\), such that
\begin{equation}
\label{eq:ewls-lipschitz}
    \max_{1\le t\le T}
    |\check y_t^{(\theta)}-\check y_t^{(\theta')}|
    \le
    L_\theta |\theta-\theta'|
    \qquad
    \text{for all }\theta,\theta'\in\Theta .
\end{equation}
\end{lemma}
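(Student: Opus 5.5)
The strategy is to write the EWLS coefficient explicitly as a rational function of \(\gamma\), and through it of \(\theta\). By the closed form of the forgotten-RLS recursion \eqref{eq:app-rls-recursion}, for each fixed \(t\le T\) we have
\[
    \bw_t^{(\theta)}=\bA_{t-1}(\gamma)^{-1}\bb_{t-1}(\gamma),
    \qquad
    \bA_{t-1}(\gamma)=\gamma^{t-1}\delta\bI+\sum_{s=1}^{t-1}\gamma^{t-1-s}\tilde{\bz}_s\tilde{\bz}_s^\top,
    \qquad
    \bb_{t-1}(\gamma)=\sum_{s=1}^{t-1}\gamma^{t-1-s}y_s\tilde{\bz}_s ,
\]
where the data \(\tilde{\bz}_s,y_s\) are the fixed realised sequence. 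The plan is to show four things in turn: these entries are polynomials in \(\gamma\); the matrix \(\bA_{t-1}(\gamma)\) stays uniformly invertible over the relevant parameter range; hence \(\bw_t\) is \(C^1\) in \(\gamma\); and composing with the smooth map \(\theta\mapsto\gamma(\theta)=1-e^{-\theta}\), then with the \(1\)-Lipschitz clipping map, gives the claimed Lipschitz bound. Taking a maximum over the finitely many \(t\le T\) finishes the argument.

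\emph{Step 1 (polynomial dependence).} Every entry of \(\bA_{t-1}(\gamma)\) and \(\bb_{t-1}(\gamma)\) is a polynomial in \(\gamma\) of degree at most \(t-1\), with coefficients fixed by the realised data. These entries are therefore \(C^\infty\) in \(\gamma\).

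\emph{Step 2 (uniform invertibility).} Since \(\Theta\) is compact, \(\gamma(\Theta)\) is a compact subset of \((0,1)\), say \([\gamma_-,\gamma_+]\) with \(\gamma_->0\). The sum in \(\bA_{t-1}\) is positive semidefinite, so
\[
    \bA_{t-1}(\gamma)\succeq\gamma^{t-1}\delta\bI\succeq\gamma_-^{T}\delta\bI .
\]
This gives \(\|\bA_{t-1}^{-1}\|_{\mathrm{op}}\le(\gamma_-^{T}\delta)^{-1}\) uniformly over \(\Theta\) and over \(t\le T\). Note that \(\delta>0\) is used exactly here. It follows that \(\gamma\mapsto\bA_{t-1}(\gamma)^{-1}\) is \(C^1\), with derivative
\[
    \frac{d}{d\gamma}\bA_{t-1}^{-1}=-\bA_{t-1}^{-1}\bigl(\partial_\gamma\bA_{t-1}\bigr)\bA_{t-1}^{-1}.
\]
Consequently \(\bw_t^{(\theta)}\) is \(C^1\), and in particular continuous, in \(\gamma\).

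\emph{Step 3 (derivative bound).} The derivative of \(\bw_t\) satisfies
\[
    \bigl\|\partial_\gamma\bw_t\bigr\|
    \le
    \|\bA_{t-1}^{-1}\|^2\,\|\partial_\gamma\bA_{t-1}\|\,\|\bb_{t-1}\|
    +\|\bA_{t-1}^{-1}\|\,\|\partial_\gamma\bb_{t-1}\| .
\]
Each factor is bounded on the compact set \([\gamma_-,\gamma_+]\): the polynomial factors are bounded by continuity, or explicitly by \((T-1)T(\delta+B_z^2)\) and \(T^2B_yB_z\), and the inverse is bounded by Step 2. Next, \(d\gamma/d\theta=e^{-\theta}\) is bounded on \(\Theta\). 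Finally,
\[
    \bigl|\partial_\theta(\tilde{\bz}_t^\top\bw_t)\bigr|\le B_z\,\bigl\|\partial_\theta\bw_t\bigr\| .
\]
The mean value theorem applied on the interval \(\Theta\) then yields a Lipschitz bound for the unclipped prediction for each fixed \(t\). Clipping is \(1\)-Lipschitz, so the same bound holds for \(\check y^{(\theta)}_t\). Taking \(L_\theta\) as the maximum of these constants over the finitely many \(t\le T\) gives \eqref{eq:ewls-lipschitz}.

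The only real obstacle is Step 2. Without persistence of excitation, \(\bA_{t-1}\) can be nearly singular in weakly observed directions, so no data-free bound on its inverse is available. The fix is to use the ridge floor \(\gamma^{t-1}\delta\). This floor decays exponentially in \(t\), which is why \(L_\theta\) is allowed to depend on \(T\), on \(\Theta\), and on the realised sequence, and why the statement does not claim a uniform-in-\(T\) constant.
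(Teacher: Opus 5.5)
Your proof is correct and follows the same route as the paper's. Both arguments use the polynomial dependence of \(\bA_{t-1}(\gamma)\) and \(\bb_{t-1}(\gamma)\) on \(\gamma\), invertibility from the ridge floor \(\gamma^{t-1}\delta\bI\), composition with \(\gamma(\theta)=1-e^{-\theta}\), a bounded derivative on the compact range to get Lipschitzness, and the fact that clipping preserves it; you additionally make explicit the \(C^1\) property, the derivative formula for \(\bA_{t-1}^{-1}\), and the mean-value step, which the paper only asserts.
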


\begin{proof}
For the idealised EWLS recursion,
\[
    \bA_t(\gamma)
    =
    \gamma^t\delta\bI
    +
    \sum_{s=1}^t
    \gamma^{t-s}\tilde{\bz}_s\tilde{\bz}_s^\top,
    \qquad
    \bb_t(\gamma)
    =
    \sum_{s=1}^t
    \gamma^{t-s}y_s\tilde{\bz}_s .
\]
Both \(\bA_t(\gamma)\) and \(\bb_t(\gamma)\) are polynomial functions of
\(\gamma\). Moreover \(\bA_t(\gamma)\succeq \gamma^t\delta\bI\succ0\) for
\(\gamma\in(0,1]\), so \(\bA_t(\gamma)^{-1}\) is continuous wherever
\(\gamma>0\). Hence
\[
    \bw_{t+1}^{(\gamma)}
    =
    \bA_t(\gamma)^{-1}\bb_t(\gamma)
\]
is continuous in \(\gamma\), and therefore in \(\theta\) through the continuous
map \(\gamma(\theta)=1-e^{-\theta}\). On a compact interval contained in
\((0,1)\), the derivative is bounded, so the map is Lipschitz. The prediction
\(\check y_t^{(\theta)}=\tilde{\bz}_t^\top\bw_t^{(\theta)}\), followed by
clipping, preserves continuity and Lipschitzness.
\end{proof}

\begin{corollary}[Local-band control for dense EWLS grids]
\label{cor:local-band}
Assume the Lipschitz condition \eqref{eq:ewls-lipschitz}. Fix a target scale
\(\theta^\star\in\Theta\) and a radius \(r>0\). Let
\[
    S_r(\theta^\star)
    =
    \left\{
        k\in\{1,\ldots,K\}:
        |\theta_k-\theta^\star|\le r
    \right\},
    \qquad
    \theta_k=\log h_k,
\]
and assume \(S_r(\theta^\star)\neq\emptyset\). Then
\begin{equation}
\label{eq:local-band-control}
    L_T(\hat y)
    \le
    L_T(\check y^{(\theta^\star)})
    +
    4B L_\theta T r
    +
    C B^2
    \sqrt{
        T\,\frac{M+K}{|S_r(\theta^\star)|}
    } .
\end{equation}
Consequently, combining with Theorem~\ref{thm:rls-dynamic-app}, for
\(h^\star=e^{\theta^\star}\),
\begin{equation}
\label{eq:local-band-dynamic}
    L_T(\hat y)
    \le
    L_T(\bu_{1:T})
    +
    \Psi_{h^\star}(T,P_T(\bu_{1:T}))
    +
    4B L_\theta T r
    +
    C B^2
    \sqrt{
        T\,\frac{M+K}{|S_r(\theta^\star)|}
    } .
\end{equation}
\end{corollary}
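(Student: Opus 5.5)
The plan is to chain three earlier results: the band-oracle bound (Corollary~\ref{cor:band-oracle}), the Lipschitz continuity of clipped EWLS predictions in $\theta$ (Lemma~\ref{lem:ewls-continuity}), and the single-scale tracking bound (Theorem~\ref{thm:rls-dynamic-app}).

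First, I apply Corollary~\ref{cor:band-oracle} with $S=S_r(\theta^\star)$, which is non-empty by assumption. This gives
\[
L_T(\hat y)\le L_T(\bar y^{S})+CB^2\sqrt{T(M+K)/|S_r(\theta^\star)|}.
\]
The task then reduces to comparing the band average $\bar y^S$ with the central sequence $\check y^{(\theta^\star)}$.

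For this comparison I work round by round. All clipped predictions and outcomes lie in $[-B,B]$. For any $a,b,y\in[-B,B]$,
\[
(a-y)^2-(b-y)^2=(a-b)(a+b-2y)\le 4B|a-b|.
\]
Take $a=\bar y_t^S$ (a convex combination of clipped values, so still in $[-B,B]$) and $b=\check y_t^{(\theta^\star)}$. Then
\[
|a-b|\le \frac{1}{|S|}\sum_{k\in S}\bigl|\check y_t^{(\theta_k)}-\check y_t^{(\theta^\star)}\bigr|\le L_\theta r,
\]
using \eqref{eq:ewls-lipschitz} and $|\theta_k-\theta^\star|\le r$ for $k\in S$. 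Summing over $t\le T$ gives
\[
L_T(\bar y^S)\le L_T(\check y^{(\theta^\star)})+4BL_\theta Tr,
\]
and substituting this into the band-oracle bound proves \eqref{eq:local-band-control}.

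For \eqref{eq:local-band-dynamic}, I invoke Lemma~\ref{lem:clipping}: the clipped central sequence has loss at most that of the unclipped forgotten-RLS sequence at scale $h^\star$. Theorem~\ref{thm:rls-dynamic-app}, which is assumed to apply at $h^\star$, bounds that loss by $L_T(\bu_{1:T})+\Psi_{h^\star}(T,P_T(\bu_{1:T}))$.

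The main point needing care is the second step's hypotheses. The bound \eqref{eq:ewls-lipschitz} must apply at $\theta^\star$ and at every $\theta_k$ in the band, so $\theta^\star$ and the band must lie inside the compact interval $\Theta$; I would note this explicitly. Separately, (B1)--(B4) must hold at $h^\star$ even if $h^\star$ is not a grid point, with $h^\star\ge 2$. I would state these as standing assumptions implicit in "combining with Theorem~\ref{thm:rls-dynamic-app}".
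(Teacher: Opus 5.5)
Your proposal is correct and follows the paper's proof essentially step for step. You apply the band-oracle bound to $S_r(\theta^\star)$, pass the Lipschitz estimate to the band average, use the $4B$-Lipschitz property of squared loss on $[-B,B]$, and finish with Theorem~\ref{thm:rls-dynamic-app} at $h^\star$. Beyond the paper, you invoke Lemma~\ref{lem:clipping} explicitly for the clipped central sequence, and you flag hypotheses the paper leaves implicit: $\theta^\star$ and the band must lie in $\Theta$, and (B1)--(B4) together with $h^\star\ge 2$ must hold at the possibly off-grid scale.
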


\begin{proof}
For every \(k\in S_r(\theta^\star)\), the Lipschitz condition gives
\[
    \max_{1\le t\le T}
    |\check y_t^{(h_k)}-\check y_t^{(\theta^\star)}|
    \le
    L_\theta r .
\]
Therefore the band average satisfies
\[
    \max_{1\le t\le T}
    |\bar y_t^{S_r(\theta^\star)}-\check y_t^{(\theta^\star)}|
    \le
    L_\theta r .
\]
For clipped predictions and outcomes in \([-B,B]\), squared loss is
\(4B\)-Lipschitz in the prediction:
\[
    |(a-y)^2-(b-y)^2|
    \le
    4B|a-b|,
    \qquad
    a,b,y\in[-B,B].
\]
Hence
\[
    L_T(\bar y^{S_r(\theta^\star)})
    \le
    L_T(\check y^{(\theta^\star)})
    +
    4B L_\theta T r .
\]
Combining this inequality with Corollary~\ref{cor:band-oracle} proves
\eqref{eq:local-band-control}. The final display follows by applying
Theorem~\ref{thm:rls-dynamic-app} to the EWLS scale \(h^\star=e^{\theta^\star}\).
\end{proof}

% -----------------------------------------------------------------------------
\subsection{Implementation remarks: covariance inflation and nominal scale}
\label{app:implementation-remarks}
% -----------------------------------------------------------------------------

\begin{remark}[Covariance inflation]
\label{rem:implementation-rls}
The deterministic proof above analyses the idealised forgotten-RLS recursion
\eqref{eq:app-rls-recursion}. The implementation used in the experiments adds
a small covariance-inflation term to the \(\bP_t=\bA_t^{-1}\) recursion for
numerical stability and responsiveness, with the inflation scale selected on
the 2018 walk-forward validation period. This modification is an implementation
device rather than the source of the theoretical stability condition: adding
\(\varepsilon\bI\) to the covariance caps overconfident precision directions but
does not by itself impose a uniform lower bound on
\(\lambda_{\min}(\bA_t)\). We therefore treat \(h(\gamma)=1/(1-\gamma)\) in the
experiments as a \emph{nominal} forgetting scale rather than a calibrated number
of days of memory.
\end{remark}

\newpage

% -----------------------------------------------------------------------------
\section{Experimental protocol and implementation details}
\label{app:implementation}
% -----------------------------------------------------------------------------

% -----------------------------------------------------------------------------
\subsection{The algorithm}
\label{app:algo}
% -----------------------------------------------------------------------------

\begin{algorithm}[t]
\caption{Multi-Time-Scale EWLS with MLpol Aggregation}
\label{alg:main}
\begin{algorithmic}[1]
\REQUIRE Base experts ${\{f^{(j)}\}}_{j=1}^M$; forgetting factors
$\Gamma=\{\gamma_1,\ldots,\gamma_K\}$; inflation levels
$\{\varepsilon_k\}_{k=1}^K$; diffuse-prior scale $\delta_0>0$; optional clipping radius $B$
\STATE Set $N \leftarrow M+K$
\STATE \textbf{Initialise:}
$\bw_1^{(k)}=\bm{0}$ and $\bP_0^{(k)}=\delta_0^{-1}\bI_{M+1}$ for $k=1,\ldots,K$;
$\bp_1=\bm{1}/N$; $\bR_0=\bm{0}$
\FOR{$t=1,2,\ldots,T$}
    \STATE Observe $\bx_t$ and form
    $\bz_t=\bigl(f^{(1)}(\bx_t),\ldots,f^{(M)}(\bx_t)\bigr)^\top$;
    set $\tilde{\bz}_t=(\bz_t^\top,1)^\top$
    \STATE \textit{// Expert predictions}
    \STATE $\tilde y_{t,j}=z_{t,j}$ for $j=1,\ldots,M$
    \hfill \COMMENT{raw base experts}
    \STATE $\tilde y_{t,M+k}=\tilde{\bz}_t^\top \bw_{t}^{(k)}$
    for $k=1,\ldots,K$
    \hfill \COMMENT{EWLS correction experts}
    \STATE Optionally clip each $\tilde y_{t,j}$ to $[-B,B]$
    \STATE \textit{// MLpol aggregate prediction}
    \STATE $\hat y_t=\sum_{j=1}^{N}p_{t,j}\tilde y_{t,j}$
    \STATE Observe $y_t$ and set $g_t=2(\hat y_t-y_t)$
    \STATE \textit{// EWLS/RLS updates}
    \FOR{$k=1,\ldots,K$}
        \STATE $s_t^{(k)} \gets \gamma_k+\tilde{\bz}_t^\top\bP_{t-1}^{(k)}\tilde{\bz}_t$
        \STATE $\bk_t^{(k)} \gets \bP_{t-1}^{(k)}\tilde{\bz}_t/s_t^{(k)}$
        \hfill \COMMENT{RLS/Kalman gain}
        \STATE $\bw_{t+1}^{(k)} \gets \bw_{t}^{(k)}
        +\bk_t^{(k)}\bigl(y_t-\tilde{\bz}_t^\top\bw_{t}^{(k)}\bigr)$
        \STATE $\bP_{t|t}^{(k)} \gets \gamma_k^{-1}
        \Bigl(\bP_{t-1}^{(k)}
        -\bP_{t-1}^{(k)}\tilde{\bz}_t\tilde{\bz}_t^\top\bP_{t-1}^{(k)}/s_t^{(k)}\Bigr)$
        \STATE $\bP_t^{(k)} \gets \bP_{t|t}^{(k)}+\varepsilon_k\bI$
        \hfill \COMMENT{small inflation carried to the next round}
    \ENDFOR
    \STATE \textit{// MLpol weight update}
    \STATE $\tilde r_{t,j}\gets g_t(\hat y_t-\tilde y_{t,j})$ for all $j=1,\ldots,N$
    \STATE $\bR_t\gets\bR_{t-1}+\tilde{\br}_t$
    \STATE $p_{t+1,j}\gets [R_{t,j}]_+/\sum_{i=1}^N[R_{t,i}]_+$
    for all $j=1,\ldots,N$, using $p_{t+1,j}=1/N$ if $\sum_i[R_{t,i}]_+=0$
\ENDFOR
\end{algorithmic}
\end{algorithm}

Algorithm~\ref{alg:main} gives the recursive update after the cold-start
initialization used in the experiments. The EWLS state update always uses the
unprojected RLS recursion. The clipping line in the algorithm is optional and
is included only to align with the bounded-loss version of the MLpol oracle
inequality in Appendix~\ref{app:deterministic-theory}; it is not activated in
the reported experiments. Thus the empirical results use the unclipped expert
predictions, while the theoretical aggregation guarantee applies to the
standard clipped variant. The single-scale EWLS tracking bound is stated
separately for the idealised unprojected recursion under the explicit stability
condition.

% -----------------------------------------------------------------------------
\subsection{RTE-FR Load data and splits}
\label{app:data-splits}
% -----------------------------------------------------------------------------

\paragraph{Raw series and daily target.}
We use the public French national electricity-load data described in
Section~\ref{sec:experiments}. The native short-term load-forecasting problem is
intraday: load is observed at half-hourly resolution, and production systems
typically forecast a vector of future half-hourly loads. In this paper we focus
on a scalar daily-load benchmark. For each calendar day \(t\), the target is the
average load over the intraday half-hourly observations,
\[
    y_t = \frac{1}{n_t}\sum_{q=1}^{n_t} \text{Load}_{t,q},
\]
where \(\text{Load}_{t,q}\) denotes the \(q\)-th half-hourly load observation on day \(t\).
On ordinary days \(n_t=48\). Thus the target remains an average power quantity,
and RMSE is reported in MW rather than in daily energy units. This aggregation
removes the need to model the intraday load shape and focuses the evaluation
on non-stationarity in the daily demand level.

\paragraph{Weather covariates.}
All methods are given the same temperature covariates constructed from realised
temperature observations rather than from day-ahead weather forecasts. This is
a controlled-covariate design: it prevents differences in weather-forecast quality
from confounding the comparison between online adaptation methods.
The experiment should
therefore be read as an ex-post benchmark of load forecasting conditional on
common weather information, not as a claim that realised target-day temperature
would be available in a real deployment. In deployment, the same pipeline would
instead receive forecast-temperature covariates.

\paragraph{Feature availability and leakage control.}
The feature vector contains calendar variables, holiday indicators, temperature
variables, and lagged load features. Calendar and holiday variables are known
before the target day. Lagged load features are constructed only from strictly
previous days, so no future load values enter the feature vector. During the test
period, all online updates are sequential: each method first predicts \(y_t\),
then observes \(y_t\), and only then updates its EWLS/RLS states and MLpol
weights. Thus the online adaptation step is causal with respect to load
labels. The only ex-post covariate convention is the realised-temperature choice above, which is shared by all compared methods.

\paragraph{Train, validation, and test periods.}
The data span 2012-01-01 to 2021-01-15. Hyperparameters are selected only within
the pre-test period: 2018 is used as a walk-forward validation window when
needed, and selected static base models are then refit on the full
2012--2018 window. All reported online results are evaluated sequentially on
2019-01-01 to 2021-01-15. The test period is split into pre-lockdown,
lockdown, and post-lockdown recovery regimes as described in
Section~\ref{sec:experiments}. No observation from the 2019--2021 test period is
used for base-model tuning, EWLS-grid selection, or nuisance-parameter
selection.

% \paragraph{Relation to operational day-ahead forecasting.}
% The daily-average benchmark is a simplified evaluation target. In practical
% short-term load forecasting, one often needs half-hourly forecasts for the next
% day, either through rolling horizon forecasts or through a morning forecast run
% that fixes the next day's full intraday trajectory. Public post-COVID
% day-ahead-load forecasting protocols similarly evaluate horizons on the order of
% 16--40 hours ahead. Our daily target averages the 48 half-hourly values to test
% whether the proposed online correction layer detects and adapts to regime
% changes in aggregate demand. The proposed aggregation framework itself is not
% restricted to daily targets: it can be applied separately by horizon/half-hour,
% or to a pool of vector-valued base forecasts, provided the corresponding base
% forecasts and weather-forecast covariates are available before the operational
% forecast deadline.

% =============================================================================
\subsection{Base-model features and tuning}
\label{app:hp-tune}
% =============================================================================

This appendix documents how base-model hyperparameters are selected
for the main experiment. The guiding rule is that no base-model 
hyperparameter is selected using any part of the test period 
beginning on 2019-01-01.

\paragraph{Scope.}
Of the $M=7$ base predictors, hyperparameters are set as follows:
\begin{itemize}[leftmargin=1.4em,itemsep=1pt,topsep=2pt]
    \item \textit{Lag-1}: no hyperparameters.
    \item \textit{GAM}: structural specification taken from 
    \citet{gaillard2016additive}; fitted with library-level defaults of
    \texttt{mgcv}, with no test-period tuning. The exact formula is
    given below.
    \item \textit{Ridge, XGBoost, LightGBM, ResNet, FT-Transformer}:
    tuned on a held-out 2018 validation window within the training
    period, using the Optuna-based protocol described below.
\end{itemize}
For reproducibility, the released code includes both the selected
configurations used in the main experiment and the tuning routine used
to obtain them. Re-running the tuning stage is therefore optional for
reproducing the reported results.

\paragraph{GAM specification.}
The additive structure follows the semi-parametric load-forecasting
template established by \citet{goude2014local} and refined in
\citet{gaillard2016additive}, which couples thin-plate smooths of
exponentially smoothed temperatures with calendar effects and lagged
load. The exponentially smoothed temperature features
($\mathrm{Temp}^{s95}$ and its daily min/max) and the
$\mathrm{toy}$ time-of-year covariate with cyclic spline basis
follow the conventions of the \texttt{opera} R package
\citep{gaillard2016additive}. Writing $\eta_t$ for the linear
predictor of daily load at day $t$,
\begin{equation*}
\begin{aligned}
\eta_t \;=\;
  & \,\mathrm{te}\!\left(\ell(\mathrm{Load}_{t-1}),\,
                          \mathrm{f}(\mathrm{WeekDay}_t)\right)
   + \ell(\mathrm{Load}_{t-7}) \\
  & + s(\mathrm{Temp}_t)
    + s(\mathrm{Temp}^{s95}_{t})
    + s(\mathrm{Temp}^{s95,\min}_{t})
    + s(\mathrm{Temp}^{s95,\max}_{t}) \\
  & + \mathrm{f}(\mathrm{WeekDay}_t)
    + \mathrm{f}(\mathrm{BH}_t)
    + s_{\mathrm{cp}}(\mathrm{toy}_t)
    + s(\mathrm{Month}_t),
\end{aligned}
\end{equation*}
where $\ell(\cdot)$ is a linear term, $s(\cdot)$ a thin-plate
regression smooth, $s_{\mathrm{cp}}(\cdot)$ a cyclic cubic smooth
(used on \emph{toy}, time-of-year), $\mathrm{f}(\cdot)$ a categorical
factor, and $\mathrm{te}(\cdot,\cdot)$ a tensor-product interaction.
The covariates $\mathrm{Temp}^{s95}$ and its daily min/max variants
are exponentially smoothed temperatures (smoothing factor $0.95$);
$\mathrm{BH}$ flags French bank holidays. All smoothing parameters
are estimated by REML; no covariate, basis dimension, or smoothness
penalty is tuned on the test period.

\paragraph{Train/validation split.}
Each tuned model is fit on 2012-01-01 to 2017-12-31 and scored on a
2018 validation window (2018-01-01 to 2018-12-31) by RMSE on the
one-step-ahead prediction task. After tuning, the selected
configuration is refit on the full 2012--2018 window for evaluation on
the 2019--2021 test period. No test-period information enters either
the search or the final fit.

\paragraph{Separation of convergence and modelling hyperparameters.}
Convergence controls---optimiser \texttt{max\_iter}/\texttt{tol} for
linear models, training \texttt{epochs}/\texttt{patience} for deep
models---are fixed a priori rather than searched. This avoids conflating
optimisation budget with modelling capacity: if \texttt{epochs} is
searched as a free parameter, the search tends to select the largest
budget available. We fix the deep-model training loop at
$5000$ epochs with early-stopping patience of $200$ validation steps,
which is sufficient for every retained configuration to converge on
this data.

\paragraph{Search algorithm and trial budget.}
We use Optuna \citep{akiba2019optuna} with the Tree-structured Parzen
Estimator (TPE) sampler and seed $42$. For the two deep models, we
additionally enable Optuna's \texttt{MedianPruner} (5 startup trials,
20 warmup epochs, interval 1), which terminates a trial early if its
validation RMSE at the current epoch lies above the median of prior
trials at the same epoch. Linear and tree models are run without
pruning. Trial budgets reflect per-trial cost: linear models receive
$2000$ trials, tree ensembles $500$, and deep models $200$ completed
or pruned trials. Extending the search to $2\times$ the budget did not
improve any of the retained configurations.

\paragraph{Search spaces.}
Table~\ref{tab:hp_tuning_search_space} reports the per-model search
spaces used in the reported runs. Ranges are deliberately generous
rather than tight around prior beliefs. Log-scale ranges are indicated
by ``log''; step-discretised integer ranges by ``step $s$''.

\begin{table}[ht]
    \centering
    \small
    \setlength{\tabcolsep}{5pt}
    \caption{Per-model Optuna search spaces used for base-model
    hyperparameter tuning. ``log'': log-uniform sampling. ``step $s$'':
    discretised with stride $s$. Convergence controls
    are fixed a priori and not searched.}
    \label{tab:hp_tuning_search_space}
    \begin{tabular}{llll}
        \toprule
        Model & Hyperparameter & Range & Notes \\
        \midrule
        Ridge & \texttt{alpha} & $[10^{-4},\,10^{4}]$ & log \\
        \midrule
        \multirow{8}{*}{XGBoost}
          & \texttt{n\_estimators}    & $[100,\,2000]$           & step $100$ \\
          & \texttt{learning\_rate}   & $[10^{-3},\,0.3]$        & log \\
          & \texttt{max\_depth}       & $[2,\,10]$               & integer \\
          & \texttt{min\_child\_weight} & $[10^{-2},\,20]$       & log \\
          & \texttt{subsample}        & $[0.3,\,1.0]$            & uniform \\
          & \texttt{colsample\_bytree} & $[0.3,\,1.0]$            & uniform \\
          & \texttt{reg\_alpha}       & $[10^{-8},\,10]$         & log \\
          & \texttt{reg\_lambda}      & $[10^{-8},\,10]$         & log \\
        \midrule
        \multirow{9}{*}{LightGBM}
          & \texttt{n\_estimators}    & $[100,\,2000]$           & step $100$ \\
          & \texttt{learning\_rate}   & $[10^{-3},\,0.3]$        & log \\
          & \texttt{num\_leaves}      & $[7,\,255]$              & log \\
          & \texttt{max\_depth}       & $[-1,\,12]$              & integer ($-1=$ unlimited) \\
          & \texttt{min\_child\_samples} & $[5,\,100]$           & step $5$ \\
          & \texttt{subsample}        & $[0.5,\,1.0]$            & uniform \\
          & \texttt{colsample\_bytree} & $[0.5,\,1.0]$           & uniform \\
          & \texttt{reg\_alpha}       & $[10^{-8},\,10]$         & log \\
          & \texttt{reg\_lambda}      & $[10^{-8},\,10]$         & log \\
        \midrule
        \multirow{8}{*}{ResNet}
          & \texttt{n\_layers}        & $[1,\,8]$                & integer \\
          & \texttt{d}                & $[64,\,512]$             & step $64$ \\
          & \texttt{d\_hidden\_factor} & $[1.0,\,4.0]$           & uniform \\
          & \texttt{hidden\_dropout}  & $[0,\,0.5]$              & uniform \\
          & \texttt{residual\_dropout} & $[0,\,0.5]$             & uniform \\
          & \texttt{d\_embedding}     & $[4,\,64]$               & step $4$ \\
          & \texttt{lr}               & $[10^{-5},\,10^{-2}]$    & log \\
          & \texttt{batch\_size}      & $\{128, 256, 512\}$      & categorical \\
        \midrule
        \multirow{8}{*}{FT-Transformer}
          & \texttt{n\_layers}        & $[1,\,6]$                & integer \\
          & \texttt{d\_token}         & $\{64, 128, 192, 256\}$  & categorical \\
          & \texttt{n\_heads}         & $\{4, 8\}$               & categorical \\
          & \texttt{d\_ffn\_factor}   & $[2/3,\,8/3]$            & uniform \\
          & \texttt{attention\_dropout} & $[0,\,0.5]$            & uniform \\
          & \texttt{ffn\_dropout}     & $[0,\,0.5]$              & uniform \\
          & \texttt{residual\_dropout} & $[0,\,0.2]$             & uniform \\
          & \texttt{lr}               & $[10^{-4},\,10^{-2}]$    & log \\
        \bottomrule
    \end{tabular}
\end{table}

\paragraph{Selected configurations and reproducibility.}
The selected configuration for each model---including tuned modelling
hyperparameters and fixed convergence controls---is included in the
released supplementary code. Reproducing Section~\ref{sec:experiments}
therefore requires only the released configurations and does not require
re-running the hyperparameter search. Re-running the tuning protocol
with the same validation split and random seed reproduces the selected
configurations up to the usual library- and hardware-level numerical
variation. Total tuning cost is approximately $3$~hours for the non-deep
models and $6$~hours for ResNet and FT-Transformer combined; hardware
details are given in the Compute paragraph below.

\paragraph{Foundation-model baselines.}
TabPFN and TabICL are applied with their realised library defaults.
The expanding-window online variants are rerun at each test step using
the available historical sample, with only data observed before the
prediction. We do not tune task-specific hyperparameters for these
models, preserving their role as strong out-of-the-box baselines.

% -----------------------------------------------------------------------------
\subsection{EWLS implementation and covariance inflation}
\label{app:ewls-impl}
% -----------------------------------------------------------------------------

Algorithm~\ref{alg:main} differs from the idealised EWLS recursion in Section~\ref{sec:method} 
in one implementation detail: after the standard forgotten-RLS covariance update, 
we add a small
$\gamma$-dependent covariance inflation term $\varepsilon_k \bI$. 
This term is used for numerical stability and responsiveness in finite samples.
It is not used to justify the bounded-iterate condition in Appendix~\ref{app:deterministic-theory}.

\paragraph{Cold start for EWLS experts.}
The diffuse-prior initialisation in Algorithm~\ref{alg:main} describes the recursive update after the EWLS state has been initialised. In the experiments we use a short cold-start period of $M+5$ test observations for the EWLS experts.
During this period, each EWLS correction expert outputs the uniform average of the raw base predictions,
\[
    \check y_t^{(\gamma_k)}
    = 
    \frac{1}{M} \sum_{j=1}^M f^{(j)}(\bx_t),
    \qquad t\le M+5,
\]
while MLpol treats these predictions like ordinary expert predictions and updates its cumulative pseudo-regrets normally.
After observing the first \(M+5\) labels, for each \(\gamma_k\) we initialise each EWLS state by
solving the corresponding batch EWLS problem on those cold-start observations, obtaining the pre-update coefficient $\bw_{M+6}^{(k)}$ and covariance $\bP_{M+5}^{(k)}$.
From the round $M+6$ onward, the EWLS experts use the RLS recursion in Algorithm~\ref{alg:main}. Thus the cold start affects
only the internal EWLS states; the MLpol layer is active from the first test day and all updates remain causal.

\paragraph{Covariance inflation with $\gamma$-dependent scaling.}
The implemented covariance recursion is
\begin{equation}
    \label{eq:cov-inflation}
    \bP_t^{(k)} \;\leftarrow\; \gamma_k^{-1}\!\left(
      \bP_{t-1}^{(k)} - \frac{\bP_{t-1}^{(k)} \tilde{\bz}_t \tilde{\bz}_t^\top \bP_{t-1}^{(k)}}{s_t^{(k)}}
    \right) + \varepsilon_k\, \bI,
    \qquad 
    s_t^{(k)} = \gamma_k + \tilde{\bz}_t^\top\bP_{t-1}^{(k)}\tilde{\bz}_t.
\end{equation}
Under the Kalman-filter interpretation of RLS, adding \(\varepsilon_k\bI\) to the covariance 
corresponds to a small process-noise covariance in the implicit state equation
$\bw_{t+1} = \bw_t + \boldsymbol{\eta}_t$, with $\mathrm{Cov}(\boldsymbol{\eta}_t) = \varepsilon_k \bI$.
It therefore encodes the
expert's prior scale for how much the combination coefficients may move from one step to the next.

A single shared $\varepsilon_k \equiv \varepsilon$ is unsatisfactory at both
ends of the $\gamma$-grid. For large $\gamma$, the EWLS expert is intended to 
represent a long-memory, stable correction; too much injected covariance would 
dominate the slow forgetting and effectively shorten its memory.
For smaller $\gamma$, past-data contributions are rapidly discounted, and
the recursive covariance can become numerically unstable along directions not recently excited.
The inflation level should therefore be coupled to the forgetting scale.

We use the power-law schedule
\begin{equation}
\varepsilon_k \;=\; \varepsilon_0\, (1-\gamma_k)^{\alpha}, \qquad \alpha=1. 
\label{eq:eps-schedule}
\end{equation}
The exponent $\alpha=1$ is a scale-matching choice. For an EWLS
expert with forgetting factor $\gamma$, the total discounted mass of past observations is 
$\sum_{\ell \ge 0} \gamma^\ell = 1/(1-\gamma)$. Consequently, when the feature directions are of comparable scale,
the weighted Gram matrix has nominal magnitude $(1-\gamma)^{-1}$, so its inverse has
nominal magnitude $(1-\gamma)$. Setting $\varepsilon_k \;=\; \varepsilon_0\, (1-\gamma_k)^{\alpha}$
therefore keeps the added covariance on the same nominal inverse-Gram scale across the forgetting grid. 
Long-memory experts receive less inflation, so that their slow scale is not overwhelmed; short-memory experts receive more inflation, which
helps numerical stability when recent data provide limited curvature.
This argument is only a scale heuristic for the implementation. 
The pathwise guarantees in Appendix~\ref{app:deterministic-theory} are proved for the idealised recursion without covariance
inflation, and $\varepsilon_0$ is chosen by walk-forward validation on 2018 (Appendix~\ref{app:eps_selection}).

\paragraph{Relation to the theoretical analysis.}
The deterministic analysis in Section~\ref{sec:theory} and
Appendix~\ref{app:deterministic-theory} studies the idealised forgotten-RLS
recursion without the \(\varepsilon_k\bI\) term. The inflation in
\eqref{eq:cov-inflation} changes the exact recursion and therefore the
calibrated mapping between \(\gamma\) and a physical effective sample size. It
should not be read as a proof of the bounded-iterate condition (B4): adding
\(\varepsilon_k\bI\) to the covariance caps overconfident precision directions
but does not by itself impose a uniform lower bound on
\(\lambda_{\min}(\bA_t)\). Accordingly, \(h(\gamma)=1/(1-\gamma)\) is treated
throughout the experiments as a \emph{nominal} forgetting-scale index: smaller
\(h\) corresponds to more responsive corrections and larger \(h\) to more
stable corrections, but the absolute value of \(h\) should not be interpreted
as an exact number of days of memory.

% -----------------------------------------------------------------------------
\subsection{Selection of \texorpdfstring{$\varepsilon_0$}{epsilon0} }
\label{app:eps_selection}
% -----------------------------------------------------------------------------

The covariance-inflation scale \(\varepsilon_0\) is the only EWLS implementation
parameter selected by validation. The exponent \(\alpha=1\) and the
forgetting-factor grid are fixed a priori. We select \(\varepsilon_0\) by a
walk-forward validation protocol on a window preceding the main test period;
the main test period (2019-01-01 onwards) is never inspected during selection.

\paragraph{Protocol}
We run a separate walk-forward selection pipeline with all data boundaries 
shifted one year earlier than in the main experiment. All training and 
initialisation stages of this pipeline are isolated from 2018; 2018 is used 
only as the final out-of-sample validation window:
\begin{itemize}[leftmargin=1.4em,itemsep=1pt,topsep=2pt]
  \item \textbf{Base-model hyperparameters.} The base-model hyperparameters
  are re-tuned from scratch under the shifted protocol, using 2012--2016 as
  the training window and 2017 as the tuning validation window. They are not
  reused from the main run of Appendix~\ref{app:hp-tune}, preventing
  2018 information from entering through hyperparameter choices.
  \item \textbf{Base-model fits.} The selected configurations are refit on
  2012--2017. Neural networks use a tail-of-2017 split for early stopping,
  while static models are refit on the full 2012--2017 window. The resulting
  base predictors have never seen 2018.

  \item \textbf{Aggregation initialisation.} The EWLS and MLpol states are
  initialised using predictions and labels from 2012--2017 only.
  
  \item \textbf{Validation window for $\varepsilon_0$.} Candidate values are
  scored on 2018-01-01 to 2018-12-31, a 365-day window of genuine
  out-of-sample predictions for every base model. During 2018 the online
  updates follow the same predict-then-update protocol as in the main
  evaluation.
\end{itemize}
Thus, no 2018 information enters the base-hyperparameter search, base-model
fit, or aggregation initialisation. The 2018 RMSE is used only to choose
$\varepsilon_0$.

For each $\varepsilon_0$ in the logarithmic grid
$\{10^{-13},10^{-12},\ldots,10^{-5}\}$, with $\alpha=1$ fixed, we run the
full EWLS\,+\,MLpol pipeline and record the RMSE on the 2018 validation
window. The minimiser is then fixed for the main run and is not revised using
the 2019+ test period. The selection script enforces that the validation window 
cannot end after 2018-12-31, thereby preventing $2019+$ dates from entering the selection.

\paragraph{Sweep curve}
Figure~\ref{fig:eps_sweep} shows the resulting validation RMSE as a function
of $\varepsilon_0$.

\begin{figure}[ht]
    \centering
    \begin{tikzpicture}
      \begin{axis}[
        width=11cm, height=6.5cm,
        xlabel={$\varepsilon_0$ (log scale)},
        ylabel={RMSE on 2018 out-of-sample (MW)},
        xmode=log, log basis x=10,
        xmin=3e-14, xmax=3e-5,
        ymin=660, ymax=710,
        xtick={1e-13,1e-12,1e-11,1e-10,1e-9,1e-8,1e-7,1e-6,1e-5},
        xticklabels={$10^{-13}$,$10^{-12}$,$10^{-11}$,$10^{-10}$,$10^{-9}$,$10^{-8}$,$10^{-7}$,$10^{-6}$,$10^{-5}$},
        grid=both,
        grid style={gray!15},
        tick align=outside,
        legend style={draw=none, fill=none, at={(0.5,-0.25)},
                      anchor=north, legend columns=3, font=\small},
      ]
        % Plateau shading (1e-9 to 1e-7, within 1% of best)
        \fill[teal!15]
          (axis cs:1e-9,660) rectangle (axis cs:1e-7,710);
        % Base-only reference line
        \addplot[dashed, gray!70, thick]
          coordinates {(3e-14,705.3899) (3e-5,705.3899)};
        \addlegendentry{Base-only (705.3899)}
        % Main curve
        \addplot[color=purple!80!black, thick, mark=*, mark size=1.8pt]
          coordinates {
            (1e-13, 694.8306) (1e-12, 694.8214) (1e-11, 695.2060)
            (1e-10, 690.6253) (1e-9,  681.6095) (1e-8,  677.3405)
            (1e-7,  678.3108) (1e-6,  683.3872) (1e-5,  687.6214)
          };
        \addlegendentry{MLpol on Base+EWLS}
        % Highlight selected point
        \addplot[only marks, mark=*, mark size=4pt, color=red!70!black]
          coordinates {(1e-8, 677.3405)};
        \addlegendentry{Selected $\varepsilon_0=10^{-8}$}
      \end{axis}
    \end{tikzpicture}
    \caption{\textbf{Walk-forward $\varepsilon_0$ sweep on 2018 out-of-sample.}
    The validation curve has an interior minimum at $\varepsilon_0=10^{-8}$
    (RMSE $677.3$). The shaded region marks a conservative two-decade plateau
    $[10^{-9},10^{-7}]$, where the RMSE remains within $1\%$ of the best value.
    The neighbouring point at $10^{-6}$ is also close, but the curve has already
    started to rise. The left tail worsens when the inflation is too small relative
    to the conditioning of the EWLS covariance recursion, while the right tail
    worsens when excessive inflation prevents high-$\gamma$ experts from retaining
    their intended long-memory behaviour. The main test period, starting on
    2019-01-01, is not used in this selection.}
    \label{fig:eps_sweep}
\end{figure}

\paragraph{Diagnostics}

\begin{itemize}[leftmargin=1.4em,itemsep=1pt,topsep=2pt]
    \item \textbf{Interior minimum, both sides bracketed.}
    $\varepsilon_0=10^{-8}$ is not at a grid boundary. Performance worsens
    when $\varepsilon_0$ is made much smaller than the useful range
    ($\varepsilon_0 \leq 10^{-11}$), and the curve rises again for larger
    values, with a clear deterioration by $\varepsilon_0=10^{-5}$. Thus the
    grid brackets the useful scale rather than selecting an endpoint.

    \item \textbf{Wide plateau, robust selection.}
    The three grid points $\{10^{-9},10^{-8},10^{-7}\}$, spanning two orders of
    magnitude, are all within $1\%$ of the best RMSE. The neighbouring value
    $10^{-6}$ is also close, but the upward trend has already begun. The
    validation procedure therefore identifies a stable scale rather than a
    finely tuned scalar, mitigating concerns that $\varepsilon_0$ is overfit to
    the particular 2018 validation year.

    \item \textbf{EWLS layer helps even in a non-crisis validation year.}
    MLpol on Base only, an $\varepsilon_0$-independent reference, achieves
    RMSE $705.4$ on 2018. At the selected $\varepsilon_0 = 10^{-8}$, the full
    Base+EWLS pool reaches $677.3$, a $28.0$\,MW ($4.0\%$) improvement even
    in a year without a COVID-scale regime shift. This suggests that the EWLS
    layer can add value beyond the base experts alone, even outside extreme
    regime shifts.
\end{itemize}

\paragraph{Why we do not use in-sample validation}
An alternative validation protocol is to sweep $\varepsilon_0$ using
in-sample base predictions on the training period (2012--2017), evaluated
on the last year of training. We report such a sweep for completeness: it
prefers the smallest $\varepsilon_0$ in the grid and does not produce an
interior minimum over $[10^{-13},\,10^{-5}]$. This behaviour is consistent with
the theoretical role of $\varepsilon_0$: in-sample residuals can under-represent
the out-of-sample variation that an EWLS expert must track, biasing the sweep
toward insufficient injection. The walk-forward protocol above avoids this
failure mode by evaluating on genuine out-of-sample predictions.

\paragraph{Agreement with an unrestricted test-period sweep}
As a post-hoc robustness check, not used for selection, we also swept
$\varepsilon_0$ directly on the main test period under the same diagnostic
setup. The absolute RMSE values from this diagnostic run are not meant to
replace the final main-experiment numbers; they are reported only to compare
the selected and test-optimal $\varepsilon_0$ under a common setup. The
test-optimal value $\varepsilon_0=10^{-9}$ attains overall RMSE $660.57$,
while the walk-forward-selected value $\varepsilon_0=10^{-8}$ attains
$661.61$. The $1.04$\,MW ($0.16\%$) gap is within the plateau already
identified on 2018, and quantifies the small protocol cost of selecting
$\varepsilon_0$ without touching the test period.

% -----------------------------------------------------------------------------
\subsection{Compute and reproducibility}
% -----------------------------------------------------------------------------

\paragraph{Compute.} All experiments were run on a single Apple M4 Pro 
workstation (14-core CPU, 20-core integrated GPU, 48\,GB unified memory). 
Deep-model training (ResNet, FT-Transformer) used the PyTorch MPS backend; 
all other components ran on CPU. Base-model hyperparameter tuning took 
approximately 3 hours for the linear and tree ensembles combined and 
approximately 6 hours for ResNet and FT-Transformer combined. The main 
online-aggregation pipeline over the 746-day test period with $N=23$ 
experts runs end-to-end in under 10\,s on CPU---the recursion is dominated 
by $M\times M$ linear-algebra updates and is not the computational 
bottleneck. The paired moving block bootstrap 
(Appendix~\ref{app:rte_bootstrap}, $R=10{,}000$) runs in under 30\,s.

% -----------------------------------------------------------------------------
\subsection{Empirical bounded-iterate diagnostic}
\label{app:bounded-iterate-diag}
% -----------------------------------------------------------------------------

The pathwise tracking guarantee in Theorem~\ref{thm:rls-dynamic}
(restated as Theorem~\ref{thm:rls-dynamic-app} in
Appendix~\ref{app:deterministic-theory}) is conditional on the
bounded-iterate condition~(B4):
\(\|\bw_t^{(\gamma)}\|_2 \le R\) uniformly along the realised sequence.
This appendix provides an empirical diagnostic for this condition on
RTE-FR Load.

\paragraph{Decomposing slope and intercept.}
The EWLS expert is fitted on the augmented base-prediction vector
\[
    \widetilde \bz_t = (\bz_t^\top,1)^\top ,
\]
so that
\[
    \bw_t^{(\gamma)}
    =
    \bigl(\bm\beta_t^{(\gamma)},\alpha_t^{(\gamma)}\bigr),
\]
where \(\bm\beta_t^{(\gamma)}\in\mathbb{R}^M\) combines the base forecasts
and \(\alpha_t^{(\gamma)}\in\mathbb{R}\) is an intercept. These two components
have different scales. The slope vector is the online combination component:
large oscillations in \(\|\bm\beta_t^{(\gamma)}\|_2\) would indicate unstable
adaptation of the base-forecast weights. The intercept, in contrast, absorbs
the level of the response. Since daily French electricity load is measured in
megawatts and is of order \(10^4\)--\(10^5\), the augmented norm
\(\|\bw_t^{(\gamma)}\|_2\) can be dominated by
\(|\alpha_t^{(\gamma)}|\) even when the combination weights remain stable.

\paragraph{Relation to the bounded-iterate assumption.}
Condition~(B4) is a sufficient condition used to make the pathwise bound
finite. In the augmented parametrisation, its numerical value mixes two
effects: the scale-free stability of the combination slope and the
scale-dependent magnitude of the intercept. The diagnostic below therefore
reports both the full augmented norm and the slope-only norm. The former checks
the literal quantity appearing in~(B4), while the latter isolates the part of
the iterate that governs online aggregation of base forecasts.

\paragraph{Empirical result.}
Figure~\ref{fig:bounded-iterate-diagnostic} shows the trajectories for the
\(K-1=15\) finite-forgetting EWLS experts over the 2019--2021 test period.
The no-forgetting endpoint \(\gamma=1\) is excluded because
Theorem~\ref{thm:rls-dynamic} is stated for finite forgetting factors.
Two observations are useful.

First, the full augmented norm is mainly driven by the intercept. Its running
supremum is \(1.50\times 10^4\), which is on the same scale as the response
level in megawatts. This behaviour reflects the scale of the intercept.

Second, after removing the intercept coordinate, the slope norm remains small:
\[
    \sup_{t,\gamma}\|\bm\beta_t^{(\gamma)}\|_2 = 1.69
\]
over all finite-forgetting experts and all \(T=746\) test days. The
fastest-forgetting experts respond most strongly around the onset of the
COVID-19 lockdown, where the running supremum is attained, while the
long-memory experts change more gradually. After the lockdown period, the slope
norm relaxes to a range around \(0.5\)--\(0.7\) across most of the grid.

Overall, this diagnostic supports the plausibility of the bounded-iterate
condition along the realised RTE-FR Load trajectory: the literal augmented
iterate remains finite, and the scale-free combination component remains
uniformly bounded by a small constant. We therefore use this figure as a
sanity check for the conditional pathwise guarantee, rather than as an
independent proof of~(B4).

\begin{figure}[t]
\centering
\includegraphics[width=\linewidth]{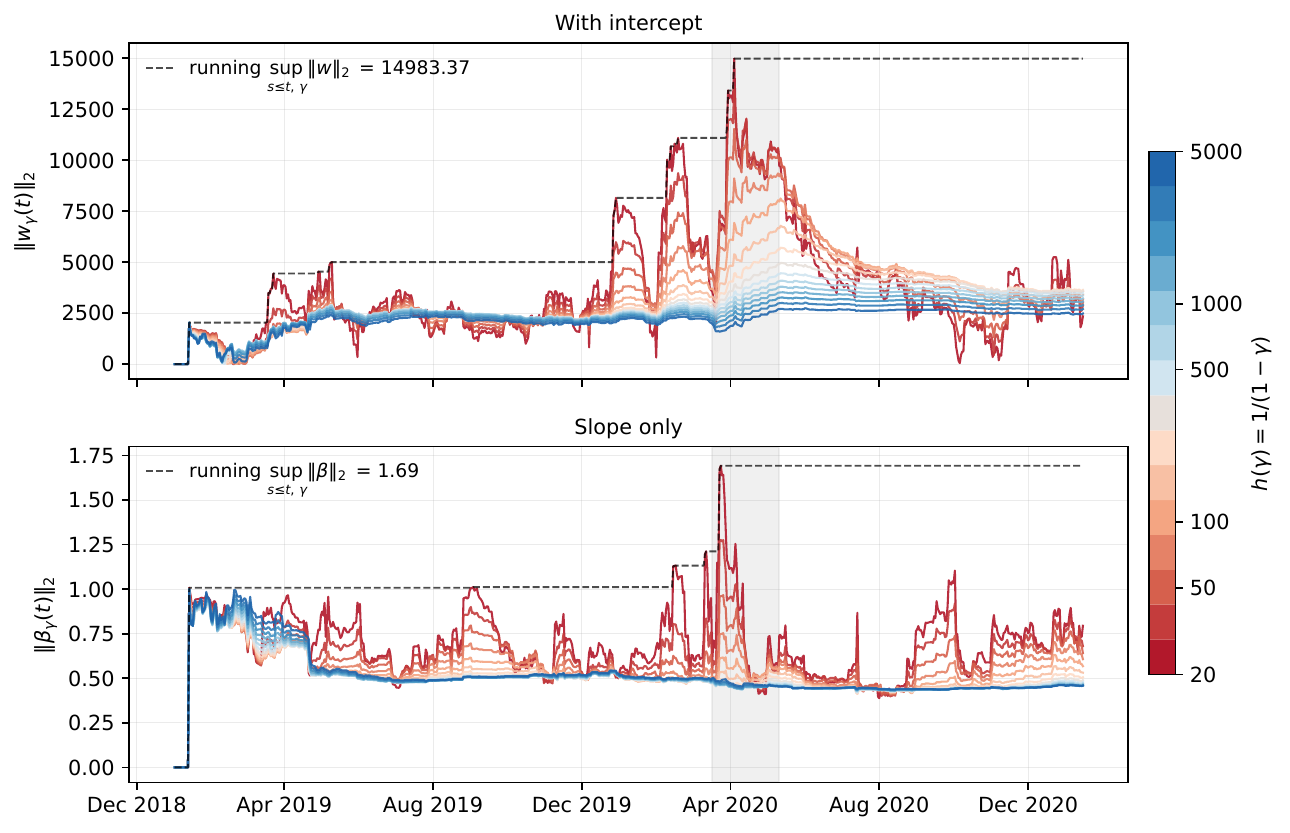}
\caption{\textbf{Empirical bounded-iterate diagnostic on RTE-FR Load.}
Trajectory of the EWLS coefficient norm for the \(K-1=15\)
finite-forgetting experts over the 2019--2021 test period. The
no-forgetting endpoint \(\gamma=1\) is outside the finite-forgetting setting
of Theorem~\ref{thm:rls-dynamic} and is excluded. Curves are coloured by
effective memory \(h(\gamma)=1/(1-\gamma)\), with short-memory experts in warm
colours and long-memory experts in cool colours; the shaded band marks the
COVID-19 lockdown period.
\textit{Top:} full augmented norm \(\|\bw_t^{(\gamma)}\|_2\). The running
supremum is \(1.50\times 10^4\), reflecting the scale of the intercept.
\textit{Bottom:} slope-only norm \(\|\bm\beta_t^{(\gamma)}\|_2\), with the
intercept coordinate removed. The running supremum is \(1.69\), attained near
the lockdown onset. Thus the EWLS slope iterate responds to the structural
break while remaining uniformly bounded across forgetting factors and test
days.}
\label{fig:bounded-iterate-diagnostic}
\end{figure}

% -----------------------------------------------------------------------------
\section{Detailed RTE-FR results and statistical significance}
\label{app:rte_detailed_results}
% -----------------------------------------------------------------------------

\subsection{Full per-regime RMSE table}
\label{app:rte_full_rmse}

Table~\ref{tab:rte_full_rmse} reports the full per-regime RMSE table behind
Table~\ref{tab:main_results}. The rows include the final aggregates, all standalone
EWLS/RLS correction experts, external foundation-model references, and all
individual base-pool models. The column ``Excl. warm-up'' reports the same
overall test metric after removing the initial \(M+5\) cold-start observations
used to initialise the EWLS states; all methods are evaluated on the same
remaining dates in that column.

\begin{table*}[t]
    \centering
    \caption{
    Full per-regime RMSE (MW) on RTE-FR Load. Rows are sorted by overall RMSE.
    ``Excl. warm-up'' removes the initial \(M+5\) EWLS cold-start observations from
    the overall evaluation window. External online TabICL/TabPFN variants are rerun
    daily on an expanding context window; ``+GRI'' denotes the additional
    regime-informed Oxford Government Response Index covariate.
    }
    \label{tab:rte_full_rmse}
    \scriptsize
    \setlength{\tabcolsep}{3.5pt}
    \renewcommand{\arraystretch}{0.96}
    \resizebox{\textwidth}{!}{%
    \begin{tabular}{llrrrrr}
        \toprule
        Group & Method & Pre-lockdown & Lockdown & Post-lockdown & Excl. warm-up & Overall \\
        \midrule
        Final agg. & \textsc{MELO} (Base+EWLS) & 623.1 & 1086.1 & 579.3 & 651.5 & 655.8 \\
        Final agg. & MLpol on EWLS only & 653.9 & 1073.6 & 584.1 & 656.9 & 673.2 \\
        \midrule
        EWLS expert & EWLS, $\gamma=0.998563$ & 657.9 & 1168.3 & 585.4 & 673.0 & 687.5 \\
        EWLS expert & EWLS, $\gamma=0.997868$ & 661.5 & 1138.2 & 592.8 & 673.0 & 687.9 \\
        EWLS expert & EWLS, $\gamma=0.999031$ & 656.2 & 1204.0 & 579.9 & 675.6 & 689.7 \\
        EWLS expert & EWLS, $\gamma=0.996838$ & 666.9 & 1110.9 & 602.1 & 675.2 & 690.3 \\
        EWLS expert & EWLS, $\gamma=0.995309$ & 673.9 & 1084.1 & 613.2 & 679.1 & 694.4 \\
        EWLS expert & EWLS, $\gamma=0.999347$ & 656.6 & 1247.8 & 576.1 & 681.1 & 694.7 \\
        EWLS expert & EWLS, $\gamma=0.993041$ & 682.2 & 1056.1 & 625.3 & 684.0 & 699.5 \\
        EWLS expert & EWLS, $\gamma=0.999560$ & 659.0 & 1301.9 & 574.0 & 689.9 & 702.8 \\
        EWLS expert & EWLS, $\gamma=0.989676$ & 691.2 & 1028.3 & 637.3 & 689.7 & 705.2 \\
        EWLS expert & EWLS, $\gamma=0.984685$ & 700.6 & 1006.9 & 648.2 & 696.1 & 711.7 \\
        EWLS expert & EWLS, $\gamma=0.999703$ & 663.3 & 1367.9 & 573.6 & 702.2 & 714.4 \\
        EWLS expert & EWLS, $\gamma=0.977280$ & 710.9 & 998.8 & 659.2 & 704.7 & 720.2 \\
        \midrule
        External & TabICL (online, +GRI) & 668.0 & 1132.6 & 699.5 & 710.2 & 723.4 \\
        EWLS expert & EWLS, $\gamma=0.999800$ & 669.2 & 1446.3 & 575.2 & 718.2 & 729.5 \\
        EWLS expert & EWLS, $\gamma=0.966295$ & 723.0 & 1009.1 & 673.6 & 717.8 & 732.7 \\
        EWLS expert & EWLS, $\gamma=0.950000$ & 738.4 & 1037.9 & 695.5 & 737.4 & 751.4 \\
        External & TabPFN (online, +GRI) & 670.9 & 1165.5 & 788.4 & 749.3 & 758.6 \\
        Final agg. & MLpol on Base only & 690.5 & 2452.7 & 907.0 & 1013.0 & 1004.0 \\
        External & TabICL (online) & 670.3 & 2943.6 & 687.4 & 1038.6 & 1036.2 \\
        External & TabPFN (online) & 669.3 & 2943.3 & 719.7 & 1047.8 & 1043.1 \\
        EWLS expert & EWLS, $\gamma=1.000000$ & 736.7 & 3212.4 & 878.5 & 1170.2 & 1163.2 \\
        \midrule
        Base pool & GAM & 946.6 & 3201.3 & 1076.2 & 1314.2 & 1298.3 \\
        External & TabPFN (offline) & 677.7 & 4103.1 & 1055.8 & 1396.5 & 1381.1 \\
        Base pool & ResNet & 771.2 & 3582.5 & 1336.7 & 1401.4 & 1382.5 \\
        External & TabICL (offline) & 683.5 & 4453.8 & 1137.9 & 1500.2 & 1482.3 \\
        Base pool & XGBoost & 842.8 & 3972.8 & 1347.4 & 1510.7 & 1486.8 \\
        Base pool & LightGBM & 821.9 & 4410.9 & 1544.3 & 1656.9 & 1629.7 \\
        Base pool & FT-Transformer & 1060.4 & 4131.1 & 1763.3 & 1744.8 & 1727.3 \\
        Base pool & Ridge & 1999.0 & 3116.8 & 1918.7 & 2067.4 & 2078.5 \\
        Base pool & Lag-1 & 3918.6 & 2626.3 & 3286.4 & 3574.4 & 3633.2 \\
        \bottomrule
    \end{tabular}%
    }
\end{table*}

\subsection{Bootstrap uncertainty for the main comparison}
\label{app:rte_bootstrap}

Because MLpol is deterministic given its inputs and the base experts are
fit once on a fixed training window, Table~\ref{tab:main_results} contains
no seed-level variability to report. The uncertainty that remains is
\emph{sampling variability} of the finite test trajectory itself: had the
test window been a different but comparable trajectory of comparable
length, how would the headline numbers move? We quantify this via a
paired moving block bootstrap on the per-step squared-loss sequences.

\paragraph{Protocol.}
For each method we treat the sequence of daily squared errors
$\{(\hat{y}_t - y_t)^2\}_{t=1}^T$ as the basic observation and resample
with a moving block bootstrap \citep{kunsch1989jackknife}. Within each
regime (overall, pre-lockdown, lockdown, post-lockdown) we draw
$\lceil T_{\mathrm{regime}}/B \rceil$ block start positions uniformly
from the valid start set, concatenate the resulting length-$B$ blocks,
and truncate to the regime length. The replicate RMSE is computed from
the resampled sequence. We use $B=14$ days on overall, pre-lockdown, and
post-lockdown (two weekly cycles), and $B=7$ on the 56-day lockdown
regime so that the set of valid block starts remains non-trivial. All
methods share the same block-start draws within a replicate
(\emph{paired} bootstrap), so the distribution of pairwise RMSE
differences is well-defined. Results below use $R=10{,}000$ replicates
and seed $0$; Monte Carlo variation in the reported CIs is under $1$\,MW
(verified against independent runs at $R=1{,}000$).

The bootstrap consumes the \emph{same} per-step squared-loss sequences
that produce Table~\ref{tab:main_results}; the point estimates therefore
match the main results exactly.

Given a method $m$ and anchor method $m_0$ (our full Base+EWLS
aggregate), we report:
\begin{itemize}[leftmargin=1.4em,itemsep=1pt,topsep=2pt]
    \item the point estimate $\mathrm{RMSE}_m$ (identical to
      Table~\ref{tab:main_results}) and the $2.5$--$97.5\%$ percentile
      bootstrap CI, as a marginal uncertainty on $m$'s performance;
    \item the paired difference $\Delta_{m}=\mathrm{RMSE}_m -
      \mathrm{RMSE}_{m_0}$, computed on the real data, and the
      $2.5$--$97.5\%$ percentile CI of the \emph{bootstrap distribution}
      of the difference. A CI excluding zero corresponds to a
      significant difference at the $5\%$ level under this resampling
      scheme.
\end{itemize}

\paragraph{Per-method RMSE with 95\% bootstrap CIs.}
Table~\ref{tab:boot_per_method} reports, for each method and regime, the
point estimate and the $95\%$ bootstrap CI. The CI widths scale as
expected with regime length (narrow on the 441-day pre-lockdown, very
wide on the 56-day lockdown) and with RMSE level (wider for the
offline baselines that are dominated by a few catastrophic lockdown
days).

\begin{table}[ht]
    \centering
    \small
    \setlength{\tabcolsep}{4pt}
    \renewcommand{\arraystretch}{0.95}
    \caption{Per-method RMSE (MW) with $95\%$ moving-block bootstrap CIs
    ($R = 10{,}000$, $B = 14$ for overall/pre/post, $B = 7$ for lockdown).
    Point estimates match Table~\ref{tab:main_results} exactly. CIs are
    rounded to the nearest MW for readability. The same bootstrap resamples are
    shared across methods and are used for the paired differences in
    Table~\ref{tab:boot_diffs}.}
    \label{tab:boot_per_method}
    \resizebox{\linewidth}{!}{%
    \begin{tabular}{
        l
        S[table-format=4.1] l
        S[table-format=4.1] l
        S[table-format=4.1] l
        S[table-format=4.1] l
    }
        \toprule
        & \multicolumn{2}{c}{Pre-lockdown}
        & \multicolumn{2}{c}{Lockdown}
        & \multicolumn{2}{c}{Post-lockdown}
        & \multicolumn{2}{c}{Overall} \\
        \cmidrule(lr){2-3}
        \cmidrule(lr){4-5}
        \cmidrule(lr){6-7}
        \cmidrule(lr){8-9}
        Method
        & {RMSE} & {95\% CI}
        & {RMSE} & {95\% CI}
        & {RMSE} & {95\% CI}
        & {RMSE} & {95\% CI} \\
        \midrule
        \textbf{MLpol on Base+EWLS (ours)}
          & 623.1 & {[532, 680]}
          & 1086.1 & {[600, 1323]}
          & 579.3 & {[466, 661]}
          & 655.8 & {[565, 745]} \\
        MLpol on EWLS only
          & 653.9 & {[546, 704]}
          & 1073.6 & {[673, 1315]}
          & 584.1 & {[473, 666]}
          & 673.2 & {[576, 749]} \\
        MLpol on Base only
          & 690.5 & {[595, 752]}
          & 2452.7 & {[1654, 2984]}
          & 907.0 & {[740, 1026]}
          & 1004.0 & {[780, 1245]} \\
        Best EWLS ($\gamma\!\approx\!0.9986$, hindsight)
          & 657.9 & {[552, 711]}
          & 1168.3 & {[867, 1367]}
          & 585.4 & {[475, 665]}
          & 687.5 & {[591, 761]} \\
        TabICL (online, +GRI)
          & 668.0 & {[564, 718]}
          & 1132.6 & {[579, 1425]}
          & 699.5 & {[544, 854]}
          & 723.4 & {[618, 816]} \\
        TabPFN (online, +GRI)
          & 670.9 & {[568, 723]}
          & 1165.5 & {[522, 1452]}
          & 788.4 & {[637, 936]}
          & 758.6 & {[644, 865]} \\
        TabICL (online)
          & 670.3 & {[566, 715]}
          & 2943.6 & {[1914, 3742]}
          & 687.4 & {[556, 790]}
          & 1036.2 & {[681, 1390]} \\
        TabPFN (online)
          & 669.3 & {[566, 711]}
          & 2943.3 & {[1854, 3714]}
          & 719.7 & {[559, 828]}
          & 1043.1 & {[683, 1388]} \\
        TabICL (offline)
          & 683.5 & {[581, 727]}
          & 4453.8 & {[3847, 5176]}
          & 1137.9 & {[810, 1333]}
          & 1482.3 & {[972, 1954]} \\
        TabPFN (offline)
          & 677.7 & {[575, 721]}
          & 4103.1 & {[3244, 4953]}
          & 1055.8 & {[744, 1213]}
          & 1381.1 & {[899, 1842]} \\
        \bottomrule
    \end{tabular}%
    }
\end{table}

\paragraph{Pairwise differences vs.\ our full method.}
Table~\ref{tab:boot_diffs} is the more informative summary: each row
reports $\Delta = \mathrm{RMSE}_m - \mathrm{RMSE}_{\mathrm{ours}}$ on the
real data, and the $95\%$ CI of its bootstrap distribution. A checkmark
in the \textit{sig.}\ column indicates that the CI excludes zero. The
paired structure concentrates the uncertainty on the \emph{difference},
which is considerably narrower than the individual marginal CIs in
Table~\ref{tab:boot_per_method} because the methods share common noise
on each day.

\begin{table}[ht]
    \centering
    \small
    \setlength{\tabcolsep}{3.2pt}
    \renewcommand{\arraystretch}{0.96}
    \caption{Paired $\Delta\mathrm{RMSE}=\mathrm{RMSE}_m-
    \mathrm{RMSE}_{\mathrm{ours}}$ (MW) with $95\%$ moving-block bootstrap CIs.
    Positive $\Delta$ favours our full Base+EWLS aggregate. $\checkmark$ marks
    CIs excluding zero. $R=10{,}000$, with block lengths as in
    Table~\ref{tab:boot_per_method}.}
    \label{tab:boot_diffs}

    \vspace{0.25em}
    \textbf{Panel A: Pre-lockdown and lockdown.}

    \resizebox{\linewidth}{!}{%
    \begin{tabular}{
        l
        S[table-format=+4.1]@{\;}c@{}S[table-format=-4.1]@{,\,}S[table-format=4.1]@{}c@{\;}c
        S[table-format=+4.1]@{\;}c@{}S[table-format=-4.1]@{,\,}S[table-format=4.1]@{}c@{\;}c
    }
        \toprule
        & \multicolumn{6}{c}{Pre-lockdown}
        & \multicolumn{6}{c}{Lockdown} \\
        \cmidrule(lr){2-7}
        \cmidrule(lr){8-13}
        Comparator $m$
        & {$\Delta$} & \multicolumn{4}{c}{95\% CI} & {sig.}
        & {$\Delta$} & \multicolumn{4}{c}{95\% CI} & {sig.} \\
        \midrule
        MLpol on EWLS only
         &  +30.8 & {[} &   -3.5 &   50.4 & {]} &
         &  -12.5 & {[} &  -30.5 &  112.2 & {]} & \\
        MLpol on Base only
         &  +67.4 & {[} &   27.5 &  112.1 & {]} & $\checkmark$
         & +1366.6 & {[} &  993.3 & 1772.5 & {]} & $\checkmark$ \\
        Best single EWLS ($\gamma\!\approx\!0.9986$, hindsight)
         &  +34.8 & {[} &    1.5 &   55.3 & {]} & $\checkmark$
         &  +82.2 & {[} &   16.5 &  316.7 & {]} & $\checkmark$ \\
        TabICL (online, +GRI)
         &  +44.9 & {[} &  -31.0 &   95.0 & {]} &
         &  +46.5 & {[} & -154.0 &  268.4 & {]} & \\
        TabPFN (online, +GRI)
         &  +47.8 & {[} &  -21.7 &   92.6 & {]} &
         &  +79.4 & {[} & -225.1 &  216.1 & {]} & \\
        TabICL (online)
         &  +47.2 & {[} &  -31.2 &   95.1 & {]} &
         & +1857.5 & {[} & 1237.3 & 2545.8 & {]} & $\checkmark$ \\
        TabPFN (online)
         &  +46.2 & {[} &  -28.3 &   87.4 & {]} &
         & +1857.2 & {[} & 1191.0 & 2485.5 & {]} & $\checkmark$ \\
        TabICL (offline)
         &  +60.4 & {[} &  -18.4 &  108.5 & {]} &
         & +3367.7 & {[} & 3064.1 & 4042.8 & {]} & $\checkmark$ \\
        TabPFN (offline)
         &  +54.6 & {[} &  -19.3 &   96.6 & {]} &
         & +3017.1 & {[} & 2543.0 & 3774.0 & {]} & $\checkmark$ \\
        \bottomrule
    \end{tabular}%
    }

    \vspace{0.75em}
    \textbf{Panel B: Post-lockdown and overall.}

    \resizebox{\linewidth}{!}{%
    \begin{tabular}{
        l
        S[table-format=+4.1]@{\;}c@{}S[table-format=-4.1]@{,\,}S[table-format=4.1]@{}c@{\;}c
        S[table-format=+4.1]@{\;}c@{}S[table-format=-4.1]@{,\,}S[table-format=4.1]@{}c@{\;}c
    }
        \toprule
        & \multicolumn{6}{c}{Post-lockdown}
        & \multicolumn{6}{c}{Overall} \\
        \cmidrule(lr){2-7}
        \cmidrule(lr){8-13}
        Comparator $m$
        & {$\Delta$} & \multicolumn{4}{c}{95\% CI} & {sig.}
        & {$\Delta$} & \multicolumn{4}{c}{95\% CI} & {sig.} \\
        \midrule
        MLpol on EWLS only
         &   +4.8 & {[} &   -4.8 &   13.8 & {]} &
         &  +17.4 & {[} &   -7.0 &   32.9 & {]} & \\
        MLpol on Base only
         & +327.8 & {[} &  197.2 &  442.1 & {]} & $\checkmark$
         & +348.3 & {[} &  185.9 &  527.6 & {]} & $\checkmark$ \\
        Best single EWLS ($\gamma\!\approx\!0.9986$, hindsight)
         &   +6.2 & {[} &  -14.1 &   25.5 & {]} &
         &  +31.7 & {[} &   -2.5 &   56.7 & {]} & \\
        TabICL (online, +GRI)
         & +120.2 & {[} &    7.7 &  269.6 & {]} & $\checkmark$
         &  +67.7 & {[} &    9.4 &  125.3 & {]} & $\checkmark$ \\
        TabPFN (online, +GRI)
         & +209.1 & {[} &  118.3 &  340.5 & {]} & $\checkmark$
         & +102.8 & {[} &   39.7 &  160.9 & {]} & $\checkmark$ \\
        TabICL (online)
         & +108.2 & {[} &   25.7 &  208.3 & {]} & $\checkmark$
         & +380.4 & {[} &   86.4 &  675.2 & {]} & $\checkmark$ \\
        TabPFN (online)
         & +140.4 & {[} &   43.7 &  219.3 & {]} & $\checkmark$
         & +387.3 & {[} &   90.8 &  667.1 & {]} & $\checkmark$ \\
        TabICL (offline)
         & +558.6 & {[} &  250.1 &  774.5 & {]} & $\checkmark$
         & +826.6 & {[} &  363.5 & 1254.6 & {]} & $\checkmark$ \\
        TabPFN (offline)
         & +476.6 & {[} &  198.0 &  649.6 & {]} & $\checkmark$
         & +725.3 & {[} &  288.5 & 1137.8 & {]} & $\checkmark$ \\
        \bottomrule
    \end{tabular}%
    }
\end{table}

\paragraph{What is and is not statistically established.}
We summarise the findings of Table~\ref{tab:boot_diffs}; all inferences
are at the $5\%$ level under the paired moving block bootstrap.

\begin{enumerate}[leftmargin=1.6em,itemsep=2pt,topsep=2pt]
    
    \item \textbf{The EWLS correction layer adds significant value in every
    regime.} Against MLpol on Base only, Ours wins by $+67.4$\,MW
    on pre-lockdown, $+1366.6$\,MW on lockdown, $+327.8$\,MW on
    post-lockdown, and $+348.3$\,MW overall, with every CI strictly
    positive. This is the cleanest empirical support of
    Section~\ref{sec:theory} and the statistical basis for the
    headline finding in Section~\ref{sec:main_result}.

    \item \textbf{Base+EWLS beats the hindsight-optimal
    single-\boldmath$\gamma$ EWLS expert in pre-lockdown and lockdown;
    the overall improvement is positive but marginally inside bootstrap
    noise.} Against the best single EWLS expert chosen in hindsight on
    overall RMSE ($\gamma\!\approx\!0.9986$, overall RMSE $687.5$), our
    full pool wins by $+34.8$\,MW on pre-lockdown (CI $[1.5, 55.3]$,
    $\checkmark$) and $+82.2$\,MW on lockdown (CI $[16.5, 316.7]$,
    $\checkmark$); the overall improvement is $+31.7$\,MW (CI
    $[-2.5, 56.7]$, marginally inside bootstrap noise) and the
    post-lockdown difference ($+6.2$\,MW, CI $[-14.1, 25.5]$) is
    essentially zero. 
    This is the expected shape: no single memory scale dominates the whole
    trajectory. The overall-selected single $\gamma$ remains locally competitive
    post-lockdown, but pays in regimes where the optimal effective memory differs.

    Per-regime hindsight optima make this point quantitatively: the
    best $\gamma$ is $\approx\!0.9990$ on pre-lockdown,
    $\approx\!0.977$ on lockdown, and $\approx\!0.9997$ on
    post-lockdown---an order-of-magnitude shift in optimal effective
    memory across the three regimes. In pre-lockdown, the full
    Base+EWLS aggregate ($623.1$) in fact beats every single EWLS
    expert in hindsight, including the per-regime oracle, because the
    base-expert channel carries predictive structure orthogonal to any
    memory-length choice.

    \item \textbf{Base experts contribute a small orthogonal signal,
    most visibly in pre-lockdown.} Adding base experts to MLpol on
    EWLS-only yields a positive pre-lockdown gain
    ($\Delta = +30.8$\,MW, CI $[-3.5, 50.4]$, marginally inside
    bootstrap noise); the overall gap ($+17.4$\,MW, CI $[-7.0, 32.9]$)
    and the post-lockdown gap ($+4.8$\,MW, CI $[-4.8, 13.8]$) are
    favourable but within bootstrap noise, while the lockdown
    comparison marginally favours EWLS-only ($-12.5$\,MW, CI
    $[-30.5, 112.2]$, not significant). The picture is coherent with
    the design: stable historical structure---which base experts
    encode and EWLS cannot recover from a fixed memory grid
    alone---contributes most where the base models' training
    distribution applies, i.e.\ pre-lockdown. Under the structural
    break, EWLS-only and Base+EWLS are statistically
    indistinguishable, confirming that the lockdown gain in
    Table~\ref{tab:main_results} is delivered by the EWLS layer rather
    than by the base pool.

    \item \textbf{Significantly beats TabICL+GRI overall and post-lockdown,
    despite using no external regime-response signal.} Against TabICL
    (online, +GRI)---the strongest competitor, supplied with a 
    regime-informed covariate---Ours wins significantly on
    overall RMSE ($\Delta = +67.7$\,MW, CI $[9.4, 125.3]$,
    $\checkmark$, a $9.4\%$ relative improvement) and in post-lockdown
    ($\Delta = +120.2$\,MW, CI $[7.7, 269.6]$, $\checkmark$, a
    $17.2\%$ relative improvement). 
    The pre-lockdown and lockdown margins are within bootstrap noise. This is
    consistent with the role of the GRI covariate: before the lockdown shock it
    provides little regime-shift information, while during lockdown it gives
    TabICL+GRI an immediate external response signal. The recovery regime is where
    this coincident signal is less directly informative and our method's
    memory-scale tracking takes over.
    
    \item \textbf{Significantly beats every no-GRI online adaptive baseline in
    the regimes where adaptation is stress-tested.}
    Against TabICL/TabPFN online (without GRI),
    every difference is significant in lockdown, post-lockdown, and
    overall; pre-lockdown is the lone null cell, where no method
    actually exercises adaptation and all reasonable predictors lie
    within $\sim\!50$\,MW of each other.
\end{enumerate}

\paragraph{Sensitivity to block length.}
We verified the robustness of the overall-regime CIs to block length by
repeating the bootstrap at $B \in \{7, 14, 28\}$. Point estimates are
identical across $B$ by construction. The width of each CI increases
with $B$ (as fewer independent blocks fit inside the trajectory), but
the qualitative \textit{sig.}\ column of Table~\ref{tab:boot_diffs} is
stable---no entry flips between $\checkmark$ and blank over this range.

\paragraph{Implementation.}
The bootstrap procedure is implemented as
\texttt{scripts/bootstrap\_ci.py} in the released code base and runs in
under $30$\,s for $R=10{,}000$ on a laptop; it consumes only the saved
aggregation outputs and requires no pipeline re-execution.

% -----------------------------------------------------------------------------
\section{Adaptive-filter and state-space baselines}
\label{app:adaptive_kf}
% -----------------------------------------------------------------------------

This section clarifies the relationship between the EWLS/RLS update used in
Algorithm~\ref{alg:main} and Kalman filtering. The purpose is interpretive:
our method does not attempt to estimate a latent process-noise covariance.
Instead, each forgetting factor defines a fixed time-scale objective, and the
outer MLpol layer selects among these objectives according to realised
forecasting loss.

Consider first the ideal EWLS recursion without the inflation term. We write
\(\bw_{t|t-1}\) and \(\bP_{t|t-1}\) for the state mean and covariance used before
observing \(y_t\), and \(\bw_{t|t}\) and \(\bP_{t|t}\) for the corresponding
quantities after observing \(y_t\). In the forgotten-RLS interpretation,
\[
    \bP_{t|t-1}=\gamma^{-1} \bP_{t-1|t-1},\qquad
    \bw_{t|t-1}=\bw_{t-1|t-1}.
\]
With unit observation-noise scale, the measurement update is
\[
    S_t = 1+\tilde \bz_t^\top \bP_{t|t-1}\tilde \bz_t,\qquad
    \bm K_t = \bP_{t|t-1}\tilde \bz_t  S_t^{-1},
\]
\[
    \bw_{t|t}
    =
    \bw_{t|t-1}
    +
    \bm K_t\bigl(y_t-\tilde \bz_t^\top \bw_{t|t-1}\bigr),
\]
\[
    \bP_{t|t}
    =
    \bP_{t|t-1}
    -
    \bm K_t\tilde \bz_t^\top \bP_{t|t-1}.
\]
Equivalently, since \(\bP_{t|t-1}=\gamma^{-1} \bP_{t-1|t-1}\), this can be written
in the standard RLS form
\[
    s_t=\gamma+\tilde \bz_t^\top \bP_{t-1|t-1}\tilde \bz_t,\qquad
    \bm K_t=\frac{\bP_{t-1|t-1} \tilde \bz_t}{s_t},
\]
\[
    \bw_{t|t}
    =
    \bw_{t-1|t-1}
    +
    \bm K_t\bigl(y_t-\tilde \bz_t^\top \bw_{t-1|t-1}\bigr),
\]
\[
    \bP_{t|t}
    =
    \gamma^{-1}
    \left(
    \bP_{t-1|t-1}
    -
    \frac{
    \bP_{t-1|t-1}\tilde \bz_t\tilde \bz_t^\top \bP_{t-1|t-1}
    }{s_t}
    \right).
\]
The through-\(t\) coefficient \(\bw_{t|t}\) is the minimiser of
\[
\sum_{s=1}^{t}\gamma^{t-s}
(y_s - \bw^\top \tilde \bz_s)^2
+
\gamma^t\delta_0\|\bw\|_2^2.
\]
Thus the coefficient used to predict at round \(t\) is \(\bw_{t|t-1}\), while
\(\bw_{t|t}\) is available only after \(y_t\) has been observed.

This recursion can be read as a Kalman filter for the linear observation model
\[
y_t=\tilde \bz_t^\top \bw_t + \varepsilon_t,
\]
with random-walk state equation
\[
\bw_t = \bw_{t-1} + \bm \eta_t.
\]
In this interpretation, forgetting replaces the previous posterior covariance
\(\bP_{t-1|t-1}\) by the prior covariance
\[
    \bP_{t|t-1}=\gamma^{-1} \bP_{t-1|t-1}.
\]
Hence the implicit process-noise covariance is
\[
\bQ_t^{(\gamma)}
=
\bP_{t|t-1} - \bP_{t-1|t-1}
=
(\gamma^{-1}-1) \bP_{t-1|t-1}.
\]

Thus smaller $\gamma$ corresponds to a larger prior covariance inflation and
therefore to a filter that is willing to move its coefficients more rapidly.
Conversely, $\gamma$ close to one imposes a long-memory, slowly moving
coefficient path.

The implementation adds a small isotropic inflation term after the measurement
update,
\[
\bP_t = \bP_{t|t}+\varepsilon_k\bI .
\]
Under the Kalman interpretation, this can be viewed as carrying a small
additional process-noise component into the next round. Its role in our
experiments is numerical stabilisation, especially for fast-forgetting experts
whose effective design matrices may become poorly conditioned.

This interpretation is different from adaptive Kalman filtering. Methods such
as adaptive Kalman filters try to infer or update a process-noise covariance
from innovations, likelihoods, or variational criteria. Our construction does
not estimate $\bQ_t$. Instead, it defines a finite family of fixed EWLS
objectives indexed by $\gamma$, equivalently a finite family of implicit
time-scale assumptions. The choice among these assumptions is deferred to the
outer MLpol aggregation layer, which is driven directly by realised squared
forecasting loss. In this sense, the forgetting factors are not latent
parameters to be estimated inside a single state-space model; they are
competing online objectives exposed to expert aggregation.

% -----------------------------------------------------------------------------
\subsection{RLS, Kalman filtering, and time-scale modelling}
\label{app:rls-kalman}
% -----------------------------------------------------------------------------

A natural question is whether the separation of roles in our framework---a
fixed $\gamma$-grid of EWLS time scales at the expert layer, followed by
loss-driven MLpol aggregation---could instead be replaced by a
Kalman-filter-style adaptive layer that updates process-noise uncertainty
online. We therefore compare against three representative adaptive-filter
alternatives on the French electricity benchmark. The goal of this appendix is
not to rule out all possible adaptive state-space designs, but to test whether
standard adaptive-\(Q\) mechanisms, tuned under the same validation protocol,
provide the same empirical behaviour as the \(\gamma\)-grid + MLpol
construction.

We benchmark \textit{$\gamma$-grid + MLpol} against three adaptive-KF
baselines that cover complementary modelling choices:

\begin{itemize}
  \item \textbf{VIKING} \citep{devilmarest2024viking}: a variational Kalman
    filter with isotropic process noise $\bQ_t = e^{b_t} \bI$. A Gaussian
    posterior over the log-variance $b_t$ is updated online, and a scalar parameter 
    $\rho_b$ controls the variability of this log-variance
    process.
  \item \textbf{VBAKF} \citep{huang2018vbakf}: a variational Bayesian adaptive
    Kalman filter. An inverse-Wishart prior is placed on the predicted state
    covariance and an inverse-gamma prior on the scalar
    observation noise; a forgetting factor $\rho \in (0, 1]$ controls 
    how quickly the covariance posterior discounts past information.
  \item \textbf{IMM-KF} \citep{blom1988imm}: an interacting multiple-models filter.
    It maintains several Kalman filters on a fixed grid of $\bQ_i = q_i \bI$,
    mixes their state through a Markov transition matrix, and updates mode probabilities
    by Gaussian innovation likelihoods. IMM is the closest state-space analogue of a
    multi-scale construction, but its combination rule is Bayesian Markov mixing rather than regret minimisation over predictions exposed as separate experts.
\end{itemize}

These baselines vary along two axes: the parameterisation of process 
uncertainty (isotropic scalar, full covariance, or discrete grid) and the
adaptation rule used to update and combine that uncertainty. 
This makes them a useful stress test for whether the explicit expert-layer /
aggregation-layer separation is empirically useful on this benchmark.

\paragraph{Setup and tuning protocol.}
For a like-for-like comparison, each adaptive-KF baseline is paired with the
same \(M=7\) direct base experts and combined by the same outer MLpol
aggregator as our main method. Only the adaptive expert-layer component varies.
For VIKING and VBAKF, the responsiveness axis is swept as a small pool for MLpol
to aggregate over. IMM-KF already contains an internal grid of \(Q\) scales, so
we use a single IMM instance with that grid:

\begin{itemize}[leftmargin=*]
  \item \textbf{VIKING pool}: three instances at
    $\rho_b \in \{10^{-4}, 10^{-3}, 10^{-2}\}$, spanning
    conservative $\to$ responsive log-variance random-walk rates.
  \item \textbf{VBAKF pool}: three instances at
    $\rho \in \{0.95, 0.98, 0.995\}$, spanning
    responsive $\to$ conservative IW-concentration forgetting.
  \item \textbf{IMM}: a single instance with a five-mode grid
    $q \in \{10^{-8}, 10^{-6}, 10^{-4}, 10^{-2}, 10^0\}$. The IMM
    architecture already integrates a grid of $Q$ scales internally, so a
    pool of multiple IMMs would be redundant.
\end{itemize}
All remaining nuisance hyperparameters (VIKING's $\hat{b}_0$,
$\Sigma_0$, VB iteration count, and initial covariance scale; VBAKF's
$\tau_P$, $\tau_R$, VB iterations, and $\delta$; IMM's $\pi_{\mathrm{stay}}$
and $\delta$) are tuned on the 2018 out-of-sample window via the same
walk-forward protocol used for our method's $\varepsilon_0$ selection
(Appendix~\ref{app:eps_selection}). In this protocol, base models are fit on 2012--2016, the
aggregator runs online over 2017--2018, and the 2018 RMSE of the corresponding MLpol 
pipeline is minimised over a grid of nuisance-parameter
combinations. The real test period (2019-01-01 onward) is never used
for selection. Thus all adaptive-filter baselines are tuned under the same information
constraints as our method. Observation-noise priors are initialised from the training residual variance of the uniform base-expert mean and are not tuned. Selected hyperparameters are listed in Table~\ref{tab:kf_tuned}.

\begin{table}[t]
  \centering
  \small
  \caption{Nuisance-parameter search grids and selected values for the
    three adaptive-KF baselines. All values selected by minimising the
    2018-out-of-sample RMSE of MLpol on Base$+$pool, using the
    walk-forward protocol of Appendix~\ref{app:eps_selection}. The
    responsiveness-axis grids (VIKING $\rho_b$, VBAKF $\rho$, and IMM's
    internal $q$-grid) are listed in the Setup of
    \S\ref{app:adaptive_kf} and are \emph{not} tuned: they play the role
    of the pool that MLpol aggregates over. Observation-noise priors
    ($\hat{a}_0$ for VIKING, $R_0$ for VBAKF/IMM) are initialised from
    train-residual variance of the uniform base-expert mean and are also
    not tuned. Selected values at grid boundaries were confirmed by
    boundary-extension sweeps.}
  \label{tab:kf_tuned}
  \begin{tabular}{@{}llll@{}}
    \toprule
    Parameter & Role & Search grid & Selected \\
    \midrule
    \multicolumn{4}{@{}l}{\textbf{VIKING}\hspace{0.6em}\small
      (fixed: \texttt{learn\_sigma} $=$ \texttt{learn\_Q} $=$ True,
      $\rho_a = 0$)} \\
    $\hat{b}_0$       & prior mean of $\log Q$         & $\{-14, -12, -10, -8, -6, -4, -2, 0\}$         & $-8$      \\
    $\Sigma_0$        & prior variance of $\log Q$     & $\{10^{-5}, 10^{-4}, 10^{-3}, 10^{-2}, 10^{-1}, 1\}$ & $10^{-1}$ \\
    $N_\mathrm{VB}$   & Newton steps per observation   & $\{1, 2, 3, 5, 8\}$                            & $8$        \\
    $\delta$          & initial covariance $\bP_0 = \delta \bI$ & $\{10^{-3}, 10^{-2}, 10^{-1}, 1, 10, 10^{2}\}$ & $10^{-1}$  \\
    \midrule
    \multicolumn{4}{@{}l}{\textbf{VBAKF}} \\
    $\tau_P$          & prior IW dof on $P$            & $\{3, 4, 5, 7, 10, 15, 25, 50\}$               & $3$        \\
    $\tau_R$          & prior IG dof on $R$            & $\{3, 4, 5, 7, 10, 15, 25, 50\}$               & $50$        \\
    $N_\mathrm{VB}$   & variational iterations/step    & $\{3, 5\}$                                     & $3$        \\
    $\delta$          & initial $\bP_0 = \delta \bI$       & $\{10^{-2}, 10^{-1}, 1, 10, 10^{2}, 10^{3}, 10^{4}\}$ & $10^{-2}$ \\
    \midrule
    \multicolumn{4}{@{}l}{\textbf{IMM-KF}} \\
    $\pi_\mathrm{stay}$ & Markov self-transition prob. & $\{0.30, 0.50, 0.60, 0.70, 0.80, 0.90, $ &       \\

                        &                              & \,\, $  0.95, 0.98, 0.99, 0.995, 0.999\}$      & $0.5$ \\
    $\delta$          & initial $\bP_0 = \delta \bI$ per mode & $\{10^{-3}, 10^{-2}, 10^{-1}, 1, 10, 10^{2}\}$ & $10^{-2}$ \\
    \bottomrule
  \end{tabular}
\end{table}

\paragraph{Main comparison.}

Table~\ref{tab:adaptive_kf_comparison} reports RMSE by regime. Under the
matched walk-forward tuning protocol, MELO has the lowest overall RMSE among
all compared variants. Relative to the adaptive-KF pools that are also exposed
to the same direct base experts and the same outer MLpol aggregator, MELO
reduces overall RMSE by \(3.6\%\) against Base+VIKING, \(7.5\%\) against
Base+IMM, and \(9.3\%\) against Base+VBAKF. These overall gains are also
confirmed by the paired moving-block bootstrap reported below.

The regime decomposition gives a more nuanced picture. MELO is the best
point estimate before lockdown and after lockdown, and remains close to the
best adaptive variants during the short lockdown window. In that window,
EWLS-only and VBAKF-only have slightly lower point estimates, while IMM-only is
essentially tied with MELO. Thus the advantage of MELO in this comparison is
not that it uniformly dominates every adaptive filter in every regime, but that
it gives the best overall balance across the full 2019--2021 online test
period.

The filter-only rows provide a diagnostic on how the adaptive-filter
outputs interact with the direct base experts. Adding direct experts improves
overall RMSE for VIKING, VBAKF, and our EWLS pool, while it worsens the overall
RMSE of IMM. The regime decomposition shows that this interaction is not
uniform: during lockdown, adding direct experts helps VIKING, slightly hurts
the EWLS pool, and hurts IMM and VBAKF more substantially. This suggests that
the benefit of exposing direct base forecasts to the outer aggregator depends
on whether the adaptive component remains complementary to the base pool during
the break. We return to this point in the IMM diagnostic below.

\begin{table}[t]
  \centering
  \small
  \caption{Per-regime and overall RMSE (MW) of MELO against three adaptive
    Kalman-filter baselines on the 2019-01-01 to 2021-01-15 test period. All
    nuisance hyperparameters of each baseline were selected on 2018
    out-of-sample data with the same walk-forward protocol used for our own
    $\varepsilon_0$ selection (Appendix~\ref{app:eps_selection}); the real test
    period was never touched during selection. filter-only rows drop direct
    base experts from the outer MLpol pool. \textbf{Bold}: best per column;
    \underline{underline}: second best.}
  \label{tab:adaptive_kf_comparison}
  \begin{tabular}{lrrrr}
    \toprule
    Method & Pre-lockdown & Lockdown & Post-lockdown & Overall \\
    \midrule
    \multicolumn{5}{l}{\textit{Our method}} \\
    \quad MELO (MLpol on Base$+$EWLS) & \textbf{623.1} & 1086.1 & \textbf{579.3} & \textbf{655.8} \\
    \quad MLpol on EWLS only          & 653.9          & \underline{1073.6} & \underline{584.1} & \underline{673.2} \\
    \midrule
    \multicolumn{5}{l}{\textit{Adaptive KF baselines}} \\
    \quad MLpol on Base$+$VIKING      & \underline{634.0} & 1185.3 & 599.7 & 680.5 \\
    \quad MLpol on VIKING only        & 676.8             & 1293.0 & 595.0 & 717.3 \\
    \quad MLpol on Base$+$IMM         & 636.5             & 1276.7 & 648.6 & 708.7 \\
    \quad MLpol on IMM only           & 670.8             & 1086.5 & 605.0 & 690.5 \\
    \quad MLpol on Base$+$VBAKF       & 665.1             & 1227.4 & 665.4 & 722.8 \\
    \quad MLpol on VBAKF only         & 869.3             & \textbf{1066.9} & 633.6 & 816.2 \\
    \midrule
    \multicolumn{5}{l}{\textit{Reference}} \\
    \quad MLpol on Base only          & 690.5             & 2452.7 & 907.0 & 1004.0 \\
    \bottomrule
  \end{tabular}
\end{table}

\paragraph{Bootstrap significance.}
We extend the paired moving-block bootstrap of
Appendix~\ref{app:rte_bootstrap} to the three adaptive-KF baselines
with direct base experts exposed to the same outer MLpol aggregator. We use
\(R=10{,}000\) bootstrap replicates, with block length \(B=14\) for the
overall, pre-lockdown, and post-lockdown evaluations, and \(B=7\) for the
short lockdown regime. Table~\ref{tab:adaptive_kf_boot} reports the paired
\[
    \Delta\mathrm{RMSE}
    =
    \mathrm{RMSE}_m - \mathrm{RMSE}_{\mathrm{MELO}},
\]
where positive values favour MELO, together with the \(95\%\) percentile CI of
the bootstrap distribution.

\begin{table}[t]
  \centering
  \small
  \setlength{\tabcolsep}{4pt}
  \caption{Paired \(\Delta \mathrm{RMSE} = \mathrm{RMSE}_m -
    \mathrm{RMSE}_{\mathrm{MELO}}\) (MW) for the three adaptive-KF baselines
    with direct base experts included in the outer MLpol pool, with \(95\%\)
    paired moving-block bootstrap CIs. Positive \(\Delta\) favours MELO.
    \(\checkmark\) indicates that the CI excludes zero. The corresponding
    adaptive-filter-only ablations are discussed in the text.}
  \label{tab:adaptive_kf_boot}
  \resizebox{\linewidth}{!}{%
  \begin{tabular}{lrcrcrcrc}
    \toprule
    & \multicolumn{2}{c}{Pre-lockdown} & \multicolumn{2}{c}{Lockdown} & \multicolumn{2}{c}{Post-lockdown} & \multicolumn{2}{c}{Overall} \\
    \cmidrule(lr){2-3}\cmidrule(lr){4-5}\cmidrule(lr){6-7}\cmidrule(lr){8-9}
    Comparator \(m\)
      & \(\Delta\) / 95\% CI & sig.
      & \(\Delta\) / 95\% CI & sig.
      & \(\Delta\) / 95\% CI & sig.
      & \(\Delta\) / 95\% CI & sig. \\
    \midrule
    Base$+$VIKING
      & \(+10.9\) \([-2.2,\,25.6]\)     &
      & \(+99.2\) \([-144.0,\,180.8]\)  &
      & \(+20.5\) \([5.0,\,43.3]\)      & \checkmark
      & \(+24.7\) \([1.9,\,51.1]\)      & \checkmark \\
    Base$+$IMM-KF
      & \(+13.4\) \([-5.5,\,33.2]\)     &
      & \(+190.6\) \([-35.5,\,315.4]\)  &
      & \(+69.4\) \([16.9,\,108.8]\)    & \checkmark
      & \(+52.9\) \([24.8,\,86.5]\)     & \checkmark \\
    Base$+$VBAKF
      & \(+42.0\) \([12.0,\,76.3]\)     & \checkmark
      & \(+141.4\) \([-89.5,\,282.3]\)  &
      & \(+86.2\) \([25.3,\,119.6]\)    & \checkmark
      & \(+67.0\) \([32.0,\,113.6]\)    & \checkmark \\
    \bottomrule
  \end{tabular}
  }
\end{table}

The bootstrap results support the same overall conclusion as the point
estimates. Each of the three adaptive-KF baselines with direct experts is
significantly worse than MELO on overall RMSE: the paired difference is
\(+24.7\) MW for Base$+$VIKING (CI \([1.9,\,51.1]\)), \(+52.9\) MW for
Base$+$IMM-KF (CI \([24.8,\,86.5]\)), and \(+67.0\) MW for Base$+$VBAKF
(CI \([32.0,\,113.6]\)). The regime-level results are more mixed. MELO is
significantly better than all three baselines after lockdown and significantly
better than Base$+$VBAKF before lockdown, but the shorter 56-day lockdown
window has wide uncertainty intervals and none of the three pool-plus-direct
lockdown comparisons excludes zero. We therefore interpret the per-regime
diagnostics as qualitative evidence about behaviour, and the overall paired
bootstrap as the main significance statement.

The adaptive-filter-only ablations reach the same qualitative verdict on
overall RMSE for VIKING and VBAKF: VIKING-only and VBAKF-only are significantly
worse than MELO, with paired differences of \(+61.5\) MW
(CI \([18.9,\,92.6]\)) and \(+160.4\) MW (CI \([55.6,\,253.1]\)),
respectively. IMM-only is closer: the point estimate still favours MELO
(\(+34.7\) MW), but the overall CI crosses zero (CI \([-4.0,\,52.6]\)).
This is consistent with the diagnostic in \S\ref{app:adaptive_kf_imm}: IMM's
Bayesian mixture remains competitive when isolated, but appears less
complementary to the direct experts under the outer MLpol aggregator.

The diagnostics below examine why these adaptive-filter alternatives behave
differently from the fixed-scale EWLS grid on this benchmark. They should be
read as empirical interpretations of this comparison, rather than as general
negative results about adaptive Kalman filtering.
% -----------------------------------------------------------------------------
\subsection{IMM diagnostics}
\label{app:adaptive_kf_imm}
% -----------------------------------------------------------------------------

IMM is informative because its internal mode probabilities respond visibly to
the lockdown transition. Figure~\ref{fig:imm_mu_covid} plots the mode
probabilities \(\mu_{i,t}\) over the full test period and zooms in on the
lockdown onset.

\begin{figure}[t]
  \centering
  \includegraphics[width=\linewidth]{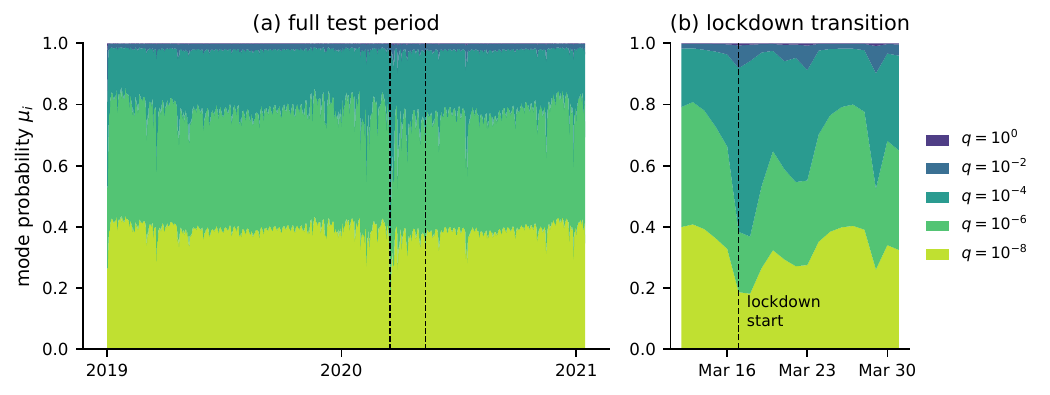}
  \caption{IMM-KF mode probabilities $\mu_{i,t}$ on a five-mode grid
    $q \in \{10^{-8}, \ldots, 10^0\}$. Colours encode modes by $Q$-scale,
    from slow to fast. \emph{(a)}~Full test period. Dashed vertical lines mark
    the COVID-19 lockdown window. \emph{(b)}~Zoom on the lockdown
    transition. With the tuned \(\pi_{\mathrm{stay}}=0.5\), IMM is relatively
    willing to change mode under the selected validation protocol. Around the
    lockdown onset, posterior mass shifts away from the two slowest modes and
    toward the intermediate mode \(q=10^{-4}\): from
    $\mu(\text{2020-03-14}) = [0.39, 0.39, 0.20, 0.02, 0.00]$ to
    $\mu(\text{2020-03-17}) = [0.19, 0.20, 0.53, 0.07, 0.01]$,
    ordered from \(q=10^{-8}\) to \(q=10^0\). The \(q=10^{-4}\) mode
    absorbs about \(53\%\) of the posterior mass on 2020-03-17 and peaks at
    \(57\%\) on 2020-03-18.}
  \label{fig:imm_mu_covid}
\end{figure}

\paragraph{Mode probabilities respond to the break.}
Outside the lockdown window, the IMM posterior places most of its mass on the
two slowest modes (\(q=10^{-8}\) and \(q=10^{-6}\)), consistent with relatively
stable pre- and post-lockdown behaviour at the daily frequency. Around the
lockdown onset, the posterior shifts toward the intermediate mode
\(q=10^{-4}\): its mass rises to \(53\%\) on 2020-03-17 and peaks at \(57\%\)
on 2020-03-18. Thus IMM does react to the change in the innovation sequence,
even though it receives no lockdown-aware covariate.

\paragraph{Detection does not fully translate into aggregate performance.}
Despite this visible mode shift, Base$+$IMM trails MELO by \(52.9\) MW
overall, a \(7.5\%\) reduction in RMSE relative to Base$+$IMM. The lockdown
period is more nuanced: IMM-only is essentially tied with MELO in point
estimate, whereas Base$+$IMM performs worse in that short window, with wide
bootstrap uncertainty. The underperformance of Base$+$IMM is therefore not
simply due to an absence of mode response at the break. Rather, the way IMM
compresses its adaptive behaviour into a single Bayesian mixture prediction
appears less favourable for the outer MLpol aggregator on this benchmark.

One interpretation is that IMM performs its time-scale selection internally:
it mixes states, updates mode probabilities, and then exposes only the resulting
Bayesian mixture forecast to the outer aggregator. Once posterior mass
concentrates around an intermediate operating point, the output behaves much
like a single adaptive filter at that effective scale. By contrast, the EWLS
construction exposes all \(K=16\) fixed-scale predictions separately to MLpol,
so the outer aggregator can place weight directly on short-memory corrections
during the break while downweighting both slow EWLS experts and raw base
forecasts that are no longer useful.

\paragraph{Internal mixing may reduce the diversity visible to MLpol.}
This distinction is apparent in the filter-only ablations. IMM-only is
competitive during lockdown (RMSE \(1086.5\)), but adding direct base experts
worsens lockdown RMSE to \(1276.7\). In MELO, adding the base experts to the
EWLS pool also slightly hurts lockdown RMSE, from \(1073.6\) to \(1086.1\),
but the deterioration is much smaller (\(+12.5\) MW), while the improvement
before and after lockdown is enough to give the best overall RMSE. This
suggests that IMM's internal Markov mixing leaves fewer distinct adaptive
signals for the outer MLpol layer to combine. The issue is not that IMM fails
to react to the break; rather, its reaction is largely compressed into a single
state-space mixture prediction before the regret aggregator observes it.
% -----------------------------------------------------------------------------
\subsection{Adaptive-\texorpdfstring{$Q$}{Q} pool diagnostics}
\label{app:adaptive_kf_collapse}
% -----------------------------------------------------------------------------

A second diagnostic concerns the pools of VIKING and VBAKF filters with
different responsiveness settings. Figure~\ref{fig:pool_collapse} reports the
average MLpol weight allocated to each member of these pools, broken down by
regime.

\begin{figure}[t]
  \centering
  \includegraphics[width=\linewidth]{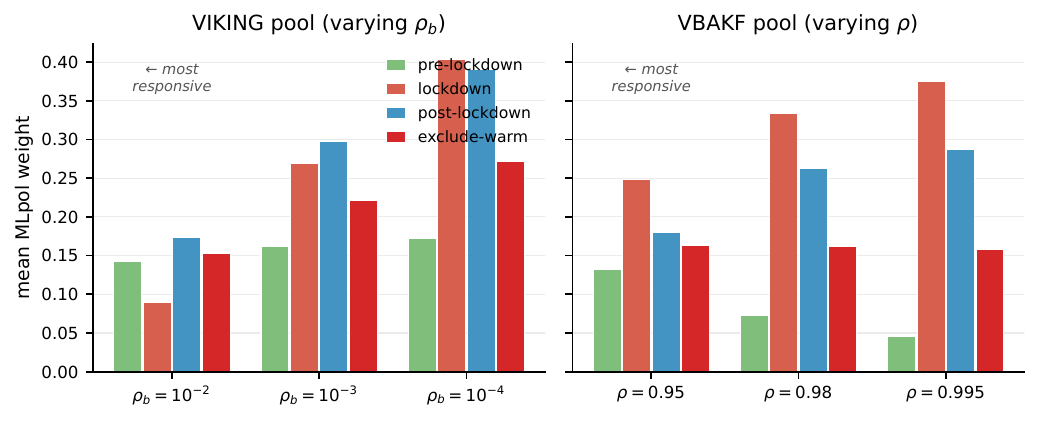}
  \caption{Mean MLpol weight allocated to each member of the VIKING pool
    (left) and the VBAKF pool (right), broken down by regime. For VIKING,
    smaller \(\rho_b\) is more conservative; for VBAKF, larger \(\rho\) is
    more conservative. Under lockdown, both pools shift weight away from the
    most responsive configuration toward more conservative ones, although the
    effect is sharper for VIKING than for VBAKF. The two pools use different
    internal adaptation mechanisms---an isotropic log-variance random walk for
    VIKING and a variational covariance/noise update for VBAKF---yet both
    exhibit this same qualitative reallocation.}
  \label{fig:pool_collapse}
\end{figure}

One might expect the most responsive configurations to receive more weight
under the lockdown break. Empirically, however, the outer MLpol layer does not
favour the most responsive setting. In the VIKING pool, the most conservative
configuration \(\rho_b=10^{-4}\) rises from \(17.2\%\) pre-lockdown to
\(40.3\%\) during lockdown, while the most responsive \(\rho_b=10^{-2}\)
receives only \(9.0\%\) in lockdown. The intermediate setting
\(\rho_b=10^{-3}\) also gains weight, reaching \(27.0\%\). Thus the lockdown
allocation within VIKING is concentrated toward the more conservative end of
the pool.

The VBAKF pool shows the same tendency, though less sharply. Before lockdown,
the most responsive configuration \(\rho=0.95\) receives the largest weight
among the three VBAKF members (\(13.2\%\), versus \(7.2\%\) for \(\rho=0.98\)
and \(4.6\%\) for \(\rho=0.995\)). During lockdown, however, the weight shifts
toward the slower settings: \(\rho=0.98\) and \(\rho=0.995\) rise to
\(33.4\%\) and \(37.5\%\), respectively, while the more responsive
\(\rho=0.95\) remains non-negligible at \(24.8\%\) but is no longer dominant.
At the same time, the direct-expert share collapses from \(74.9\%\) before
lockdown to only \(4.3\%\) during lockdown, so the outer aggregator is indeed
relying almost entirely on the VBAKF pool in that regime---just not primarily
on its most responsive member.

This pattern should not be read as a failure of MLpol. Rather, it suggests that
the more aggressive adaptive-\(Q\) configurations did not consistently deliver
lower realised squared loss on this trajectory. A plausible explanation is a
bias--variance trade-off: aggressive process-noise adaptation can reduce bias
after a break, but it can also increase prediction variability. Since MLpol is
driven by realised squared loss,
\[
  r_{t,j}
  =
  2(\hat{y}_t-y_t)(\hat{y}_t-\tilde y_{t,j}),
\]
it downweights experts whose additional responsiveness does not translate into
lower loss. The fact that this qualitative pattern appears for both VIKING and
VBAKF suggests that, on this benchmark, filter-internal adaptive-\(Q\)
objectives need not align with the outer squared-loss objective.

The contrast with the EWLS grid is that each EWLS expert has a fixed update
scale. The outer aggregator therefore compares predictions generated by stable,
pre-specified adaptation rules, rather than predictions whose internal
process-noise scale is itself being re-estimated online. In our main method,
the lockdown period assigns \(87.9\%\) of the total MLpol weight to the EWLS
pool, with \(43.9\%\) on fast experts (\(\gamma<0.99\)),
\(30.1\%\) on intermediate experts (\(\gamma \in [0.99,0.999)\)), and only
\(13.9\%\) on the slowest experts (\(\gamma \ge 0.999\)). Thus, in our
implementation, the EWLS grid leaves the candidate time scales explicit and
separate, allowing MLpol to place substantial weight directly on short-memory
corrections during the break.

% -----------------------------------------------------------------------------
\subsection{VBAKF diagnostics}
\label{app:adaptive_kf_vbakf}
% -----------------------------------------------------------------------------

\begin{figure}[t]
  \centering
  \includegraphics[width=\linewidth]{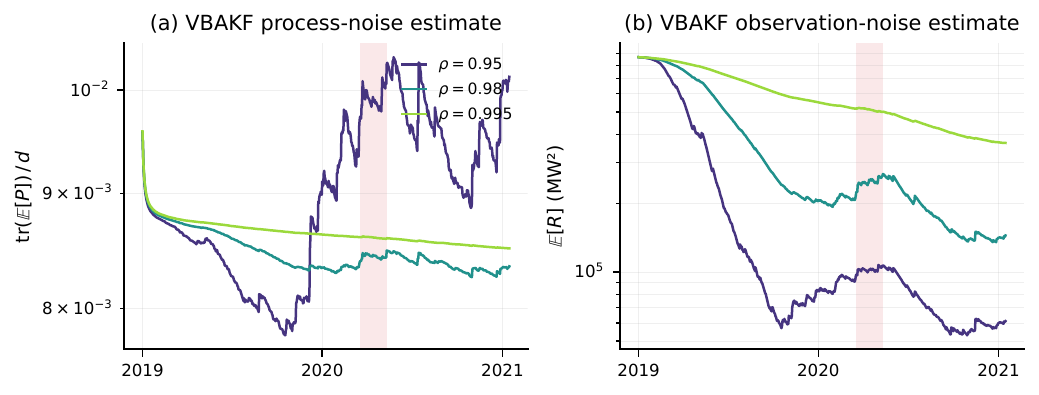}
  \caption{VBAKF's estimated state-uncertainty scale
  \(\mathrm{tr}(\mathbb{E}[\bP])/d\) (\emph{left}) and observation-noise scale
  \(\mathbb{E}[R]\) (\emph{right}) over the test period, for three values of
  the IW-concentration forgetting factor \(\rho\). The lockdown window is
  shaded. The most responsive setting \(\rho=0.95\) shows a pronounced rise in
  state uncertainty around the transition, whereas the two slower settings
  remain nearly flat or slightly lower relative to their pre-lockdown levels.
  The observation-noise estimates also evolve in a strongly
  configuration-dependent way, especially for smaller \(\rho\). Thus VBAKF does
  not expose a common, configuration-consistent signal of increased
  state-evolution uncertainty at the break.}
  \label{fig:vbakf_uncertainty}
\end{figure}

VBAKF shows an additional behaviour that helps interpret its weaker overall
performance in this experiment. Figure~\ref{fig:vbakf_uncertainty} plots the
estimated state-uncertainty scale \(\mathrm{tr}(\mathbb{E}[\bP])/d\) and the
estimated observation-noise scale \(\mathbb{E}[R]\) over the test period. A
natural response to the lockdown break would be a robust increase in
state-evolution uncertainty across responsiveness settings. The fitted VBAKF
variants do not show such a common pattern. The most responsive setting
\(\rho=0.95\) rises sharply around the transition and remains volatile during
and after lockdown, but the two slower settings \(\rho=0.98\) and
\(\rho=0.995\) remain essentially flat or slightly lower relative to the
pre-lockdown regime.

Quantitatively, the lockdown-to-pre-lockdown ratio of
\(\mathrm{tr}(\mathbb{E}[\bP])/d\) is \(1.161\) for \(\rho=0.95\), but only
\(0.987\) for \(\rho=0.98\) and \(0.988\) for \(\rho=0.995\). The corresponding
post-lockdown-to-pre-lockdown ratios are \(1.124\), \(0.979\), and \(0.982\).
Thus the VBAKF state-uncertainty posterior provides a response for the most
responsive configuration, but not a consistent cross-configuration signal of
increased state volatility around the break. At the same time, the estimated
observation-noise scale is far from constant: the coefficient of variation of
\(\mathbb{E}[R]\) is \(1.188\), \(0.670\), and \(0.266\) for
\(\rho=0.95,0.98,0.995\), respectively. This suggests that the adaptive update
partly reallocates uncertainty between state and observation noise in a
configuration-dependent way, which need not align with the squared-loss
objective used by the outer MLpol layer.

One possible mechanism is that the variational covariance update can become
path-dependent. In stable periods, the state updates \(\Delta\theta\) are small
and the posterior covariance after the Kalman update is also small. The
sufficient statistic added to the inverse-Wishart update,
\[
    \bA_{\mathrm{IW}}
    =
    \bP_{\mathrm{new}}
    +
    (\Delta\boldsymbol{\theta})(\Delta\boldsymbol{\theta})^\top .
\]
is therefore small, which can reinforce a tight state posterior. When a break
arrives, the filter may respond through a mixture of state-uncertainty and
observation-noise adaptation, rather than by exposing a clean set of alternative
time-scale predictions to the outer aggregator. This behaviour is consistent
with the empirical results: VBAKF-only is competitive during the short lockdown
window in point estimate, but the Base+VBAKF pool is significantly worse than
MELO overall, before lockdown, and after lockdown.

% -----------------------------------------------------------------------------
\subsection{Synthesis}
\label{app:adaptive_kf_synth}
% -----------------------------------------------------------------------------

The adaptive-filter comparison gives a consistent empirical message on this
benchmark. We consider three representative adaptive-KF alternatives:
isotropic variational adaptation (VIKING), inverse-Wishart covariance/noise
adaptation (VBAKF), and Bayesian multi-model mixing (IMM). Under the same
walk-forward tuning protocol, all three trail \(\gamma\)-grid + MLpol on
overall RMSE. The paired bootstrap confirms that the overall differences are
statistically significant for all three adaptive-KF baselines when their
outputs are combined with the direct base experts by the same outer MLpol layer
(Table~\ref{tab:adaptive_kf_boot}).

The diagnostics suggest three complementary interpretations. First, IMM reacts
to the lockdown transition in its mode probabilities, but its internal Bayesian
mixing exposes only a single combined prediction to the outer aggregator. This
may reduce the diversity that MLpol can exploit. Second, in the VIKING and
VBAKF pools, the outer aggregator does not simply put all lockdown mass on the
most responsive adaptive-\(Q\) configuration. VIKING shifts strongly toward the
more conservative end of its pool, while VBAKF shifts toward slower settings
and away from the direct base experts. This suggests that greater internal
process-noise responsiveness need not translate into lower realised squared
loss. Third, the tuned VBAKF variants do not expose a common,
configuration-consistent signal of increased state-evolution uncertainty at the
break: the most responsive setting reacts strongly, whereas the slower settings
remain nearly flat or slightly lower relative to their pre-lockdown levels.
Part of the adaptation is instead absorbed by the estimated observation-noise
scale, in a configuration-dependent way.

Together, these diagnostics support the design choice made in \melo{}: keep the
adaptation scales fixed and explicit at the expert layer, and use the outer
MLpol layer to select among the resulting predictions according to realised
squared loss. The comparison does not imply that adaptive Kalman filters are
unsuitable in general, nor that no alternative state-space design could close
the gap. It shows that, on this deployment-shift benchmark and under the same
walk-forward tuning protocol, the adaptive-\(Q\) mechanisms considered here do
not match the overall performance of the explicit multi-scale EWLS +
regret-aggregation separation.

% -----------------------------------------------------------------------------
\section{Cross-dataset behavior on TabReD industrial datasets}
\label{app:tabred}
% -----------------------------------------------------------------------------

The empirical analysis in Section~\ref{sec:experiments} and
Appendix~\ref{app:adaptive_kf} is centered on RTE-FR Load, a French national
electricity-load benchmark spanning the COVID-19 lockdown. This appendix
evaluates the framework on two additional industrial regression datasets from
the TabReD benchmark~\citep{rubachev2024tabred}\footnote{
  TabReD comprises eight tabular datasets curated for studying temporal robustness
  with realistic industrial feature pipelines and time-based train/val/test
  splits. See \url{https://github.com/yandex-research/tabred} for code and data.}:
\textbf{Weather} (predicting station temperature, $T_{\text{test}} = 40{,}840$
instances over July 2023) and \textbf{Delivery-ETA} (predicting food-delivery
arrival time, $T_{\text{test}} = 36{,}927$ instances).

The goal of this appendix is not to claim uniformly large gains across all
chronological tabular tasks. Rather, it clarifies the boundary of the framework:
MELO is most useful when two conditions hold simultaneously: (1) there is
exploitable non-stationarity, and (2) the base pool contains residual diversity that
a covariance-aware combination layer can use. The TabReD datasets provide informative
negative-control settings in which the base-model residuals are highly
correlated on early held-out test segments; consistently with this diagnostic,
the EWLS layer remains mildly beneficial but produces much smaller gains than on
RTE-FR Load.

% -----------------------------------------------------------------------------
\subsection{Setup}
\label{app:tabred-setup}
% -----------------------------------------------------------------------------

\paragraph{Base pool.}
For each TabReD dataset we use four base models: LightGBM, XGBoost, MLP, and
MLP-PLR (MLP with periodic-linear numerical embeddings;
\citealp{gorishniy2022embeddings}). Hyperparameters are taken directly from the
TabReD authors' tuned configurations committed in their public repository,
without re-tuning. Each model is trained at three random seeds for Weather and
one seed for Delivery-ETA; predictions are averaged across seeds.

\paragraph{Online evaluation.}
We use TabReD's default time-based split. Base models are trained on the train
split with early stopping on the val split; the test split is then traversed
once in chronological order, with EWLS and MLpol updated after each observation.
Forgetting factors and $\delta_0$ are inherited from the RTE-FR Load main experiment
without modification ($K = 16$, $\gamma \in [0.95, 0.9998]$, equivalently
$h \in [20, 5000]$, $\delta_0 = 10^{-3}$). The covariance-inflation scale
$\varepsilon_0$ is selected by walk-forward validation on the val split using
the protocol of Appendix~\ref{app:eps_selection}.

\paragraph{Patch to the TabReD pipeline.}
The TabReD code base saves test-set predictions to disk by default, but does not
distinguish them by random seed in the prediction filename; we extracted the
per-seed predictions from each evaluation directory's \texttt{predictions.npz}.
No model code or hyperparameters were modified.
% -----------------------------------------------------------------------------
\subsection{Cross-dataset results}
\label{app:tabred-results}
% -----------------------------------------------------------------------------

\begin{table}[ht]
    \centering
    \caption{Cross-dataset summary. RMSE gains are measured on the full test
    period. ``Mean off-diag corr'' is the average off-diagonal entry of the
    residual correlation matrix among base models, computed only on an initial
    diagnostic segment of the chronological test stream: the first 300 test days
    for RTE-FR Load and the first 20{,}000 test instances for the two TabReD
    datasets. These correlations are reported only as a post-hoc diversity
    diagnostic; they are not used for model selection, hyperparameter tuning,
    expert-pool construction, or online updating. RTE-FR Load uses the 7-base
    pool of Section~\ref{sec:experiments}; TabReD entries use the 4-base pool
    described in Section~\ref{app:tabred-setup}.}
    \label{tab:cross-dataset-headline}
    \begin{tabular}{lccc}
        \toprule
        Dataset & Mean off-diag corr & \melo{} gain & $T_{\text{test}}$ \\
        \midrule
        RTE-FR Load                             & \textbf{0.43} & $+34.7\%$ & 746 \\
        TabReD Weather                          & 0.97          & $+0.30\%$ & 40{,}840 \\
        TabReD Delivery-ETA                     & 0.98          & $+0.24\%$ & 36{,}927 \\
        \bottomrule
    \end{tabular}
\end{table}

Table~\ref{tab:cross-dataset-headline} reports the headline finding. \melo{}
improves over MLpol on Base alone in all three datasets we evaluate, but the
magnitude of the improvement varies by two orders of magnitude. This pattern is
not a contradiction of the main RTE-FR Load result; it identifies the regime in
which the correction layer is most useful. In our experiments, large gains appear to require both a non-stationary test
trajectory and a base pool whose residuals contain multiple directions of error. When diagnostic-segment residuals are nearly collinear, as
in the two TabReD datasets, the affine combination space has little additional
signal to exploit, and the observed gain is correspondingly small.

Per-method test-set RMSE for the two TabReD datasets is given in
Table~\ref{tab:tabred-detail}. Both datasets show the same qualitative pattern
as RTE-FR Load: MLpol on Base only improves marginally over a uniform simple
average, and Base+EWLS adds a further small improvement. The relative size of
these improvements is two orders of magnitude smaller than on RTE-FR Load,
consistent with the high diagnostic-segment residual correlations reported in
Table~\ref{tab:cross-dataset-headline}.

\begin{table}[ht]
\centering
\caption{Per-method RMSE on the two TabReD test sets. Lower is better. Bold:
best per column.}
\label{tab:tabred-detail}
\begin{tabular}{lcc}
\toprule
Method & Weather (RMSE) & Delivery-ETA (RMSE) \\
\midrule
Simple average over base pool   & 1.4668          & 0.5449 \\
MLpol on Base only              & 1.4577          & 0.5447 \\
\textbf{\melo{}}                & \textbf{1.4533} & \textbf{0.5434} \\
\midrule
Best base predictor (single)    & 1.4586 (LightGBM) & 0.5467 (XGBoost) \\
$\Delta$ \melo{} vs Base only   & $+0.30\%$       & $+0.24\%$ \\
\bottomrule
\end{tabular}
\end{table}

% -----------------------------------------------------------------------------
\subsection{Residual correlation as a diversity diagnostic}
\label{app:tabred-corr}
% -----------------------------------------------------------------------------

The two-orders-of-magnitude gap between the RTE-FR Load and TabReD gains is
consistent with the amount of base-pool diversity visible in an initial segment
of each chronological test stream. Figure~\ref{fig:cross-dataset-corr} measures
this diversity through pairwise residual correlations of the base models. This
diagnostic is computed after the online protocols, hyperparameters, and expert
pools are fixed. It is not used to select hyperparameters, construct the expert
pool, or update the online aggregation pipeline for the reported comparisons;
it is reported only to explain whether affine combination had a rich enough
signal space to exploit. In a prospective deployment, the analogous diagnostic
would be computed on a held-out pre-deployment validation period or on an
initial calibration stream not used for model selection.

\begin{figure}[ht]
    \centering
    \includegraphics[width=\textwidth]{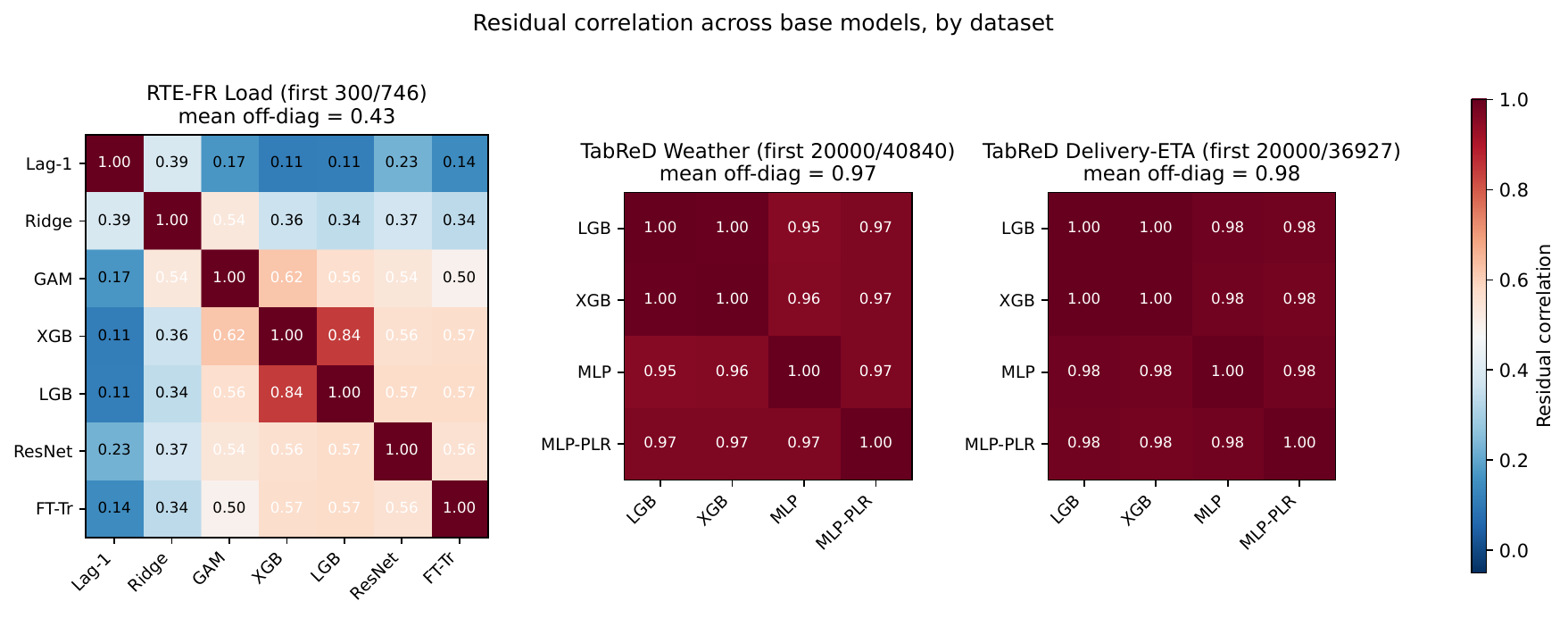}
    \caption{\textbf{Early-window residual correlation matrices across the
    three datasets, sorted left-to-right by base-pool homogeneity.}
    Correlations are computed on an initial diagnostic segment of each
    chronological test stream: the first 300 test days for RTE-FR Load and the
    first 20{,}000 test instances for the two TabReD datasets. Each cell reports
    the Pearson correlation between two base models' residuals (prediction minus
    truth), and panel titles report mean off-diagonal correlation. These
    correlations are post-hoc diagnostics only and are not used by \melo{} or by
    any compared online method. RTE-FR Load exhibits visibly lower and more
    structured residual correlation, whereas TabReD Weather and Delivery-ETA
    remain almost uniformly correlated across base models.}
    \label{fig:cross-dataset-corr}
\end{figure}

RTE-FR Load's 7-base pool has substantially more residual diversity than the
TabReD pools. In the early-window diagnostic, Lag-1 has low correlation with
most feature-based models, and the feature-based models themselves separate
into weaker and stronger correlation blocks. This structural asymmetry is
important: Lag-1 uses the previous-day target as its information source, whereas
the other models map the contemporaneous feature vector to load. The EWLS layer
can exploit such inductive-bias diversity because the time-varying affine
combination $\tilde \bz_t^\top \bw_t^{(\gamma)}$ is useful only when the base
forecasts make meaningfully different errors.

This interpretation matches the role of the comparator in
Theorem~\ref{thm:mlpol-oracle}. The theorem bounds the loss of the aggregate
against a time-varying affine comparator path, but the comparator class is only
empirically useful when the base-prediction vector spans several error
directions. The residual correlation matrix is therefore a simple empirical
proxy for the spanning quality of the base pool.

By contrast, the two TabReD datasets show near-uniformly high residual
correlation. The four base models consist of two GBDT families and two
MLP-based families. They appear to converge to nearly the same mapping from
engineered features to target, leaving residuals dominated by shared
per-instance errors. In this setting, the effective affine combination space is
narrow, so the EWLS layer has little room to improve over base-only
aggregation. These results clarify a boundary of the method rather than
undermining the main RTE-FR Load result. Large gains require both
deployment-time non-stationarity and a base pool with non-redundant error
directions.

\paragraph{Ruling out tuning convergence as the cause.}
A natural concern is that the high TabReD correlations are an artifact of all
four models being hyperparameter-tuned toward the same loss-landscape optimum,
and that under-tuned models would expose hidden diversity. We test this by
training a default-hyperparameter XGBoost model with 100 trees, depth 6, and
learning rate 0.3 on each TabReD dataset, then measuring its validation
residual correlation with the four tuned models. Results are summarised in
Table~\ref{tab:tabred-default-xgb}.

\begin{table}[ht]
\centering
\caption{\textbf{Default-XGBoost probe on the TabReD validation splits.}
Despite materially worse single-model test RMSE, the default model's validation
residuals remain highly correlated with the four tuned models, and adding it to
the pool lowers mean off-diagonal validation correlation by less than $0.01$ in
both datasets.}
\label{tab:tabred-default-xgb}
\begin{tabular}{lcc}
\toprule
& Weather & Delivery-ETA \\
\midrule
Default-XGB test RMSE          & 1.5643 & 0.5534 \\
Best tuned-model test RMSE     & 1.4614 (LightGBM) & 0.5467 (XGBoost) \\
\midrule
Default-XGB vs LightGBM corr   & 0.944 & 0.986 \\
Default-XGB vs XGBoost corr    & 0.944 & 0.986 \\
Default-XGB vs MLP corr        & 0.905 & 0.973 \\
Default-XGB vs MLP-PLR corr    & 0.912 & 0.969 \\
\midrule
Mean off-diag, 4 tuned bases    & 0.946  & 0.983 \\
Mean off-diag, +default-XGB (5) & 0.938  & 0.981 \\
$\Delta$                       & $-0.008$ & $-0.002$ \\
\bottomrule
\end{tabular}
\end{table}

The default-XGBoost probe argues against the tuning-convergence hypothesis. An
explicitly under-tuned model whose test RMSE is $7\%$ worse on Weather and
$1.2\%$ worse on Delivery-ETA still produces validation residuals close to
those of the tuned models. The high TabReD correlations therefore appear to
reflect a structural property of the datasets and feature pipelines: the
heavily engineered features leave little room for inductive-bias diversity,
regardless of how each model is configured.

% -----------------------------------------------------------------------------
\subsection{A diagnostic for expected aggregation gain}
\label{app:tabred-diagnostic}
% -----------------------------------------------------------------------------

The empirical regularity in Figure~\ref{fig:cross-dataset-corr} and
Table~\ref{tab:cross-dataset-headline} suggests a practical diagnostic, but the
figures reported above should be read as post-hoc diagnostics because they use
initial segments of the test streams. The online evaluation itself is unchanged:
the full test stream is traversed once chronologically, and no residual-correlation
statistic enters the online updates. In a prospective deployment, the analogous
quantity would be computed on a held-out pre-deployment validation period or on
an initial calibration stream not used for model selection. Low residual
correlation indicates that the pool contains multiple error directions, making
affine combination more promising. High residual correlation indicates that
the base models make nearly the same errors, so \melo{} has limited room
to improve over base-only aggregation.

These results suggest a qualitative diagnostic. Mean off-diagonal correlation above $0.95$ suggests
that large gains from affine combination are unlikely unless the deployment
period introduces a new source of residual diversity. In this regime, the
framework may still be useful as a conservative adaptation layer, but the
expected improvement over base-only aggregation should be modest. Mean
off-diagonal correlation below $0.7$ suggests that the base pool contains
several distinct error directions, so larger gains are plausible, particularly
when the deployment period also involves non-stationarity.

This diagnostic complements the path-length analysis in
Theorem~\ref{thm:rls-dynamic}. Empirical gains depend on both the variability of
the useful comparator path and the spanning quality of the base-prediction
vector. Either condition can be the binding constraint. RTE-FR Load combines
residual diversity with a severe deployment-time shift; the two TabReD datasets
have highly correlated residuals, and the corresponding gains are small.

\paragraph{Limitations of the diagnostic.}
The diagnostic is directional, not a calibrated estimator. The gain magnitude
depends on the joint distribution of base errors, the realised deployment path,
and the losses used by the online aggregation layer, not only on marginal
pairwise correlations. Three datasets span a useful range but do not identify a
precise functional relationship. A more refined diagnostic would use the
spectrum of the residual covariance matrix, i.e., the effective number of
independent error directions; on RTE-FR Load this spectrum has approximately
four non-trivial directions, whereas on TabReD it concentrates near one.

% -----------------------------------------------------------------------------
\subsection{Discussion}
\label{app:tabred-discussion}
% -----------------------------------------------------------------------------

The cross-dataset comparison is consistent with the intended deployment profile
of the framework. In the RTE-FR Load stress test, where residual drift is
exploitable and the base pool contains diverse error directions, \melo{}
delivers large gains through the COVID-19 lockdown. On the two TabReD datasets
considered here, where residuals are highly correlated, the same mechanism has
limited room to improve over base-only aggregation.

This should be read as a boundary condition rather than a failure mode. The
oracle inequality gives a conservative comparison with the original base
experts and the added EWLS experts up to the aggregation overhead; it does not
imply finite-sample monotonic improvement whenever experts are added. 
Empirically, Base+EWLS is positive relative to Base-only on all three datasets
considered here, but the TabReD results show that the magnitude of the benefit
is governed by detectable residual diversity. The diagnostic of
Section~\ref{app:tabred-diagnostic} therefore helps set realistic expectations
before deployment when computed on a genuine pre-deployment validation or
calibration period.

% -----------------------------------------------------------------------------
\section{Robustness to base-pool diversity: a heterogeneity sweep}\label{app:heterogeneity}
% -----------------------------------------------------------------------------

The main experiment (Section~\ref{sec:experiments}) uses $F=7$ base
predictors spanning linear, tree, additive, and deep families---an
intentionally heterogeneous pool. A natural question is how the
aggregation gain reported there depends on this heterogeneity. We
answer it here by sweeping the number of distinct families in the pool
and enumerating, at each level, every possible subset.

\paragraph{Setup.}
For each \emph{heterogeneity level} $H \in \{1, 2, \dots, 7\}$, we
enumerate all $\binom{7}{H}$ size-$H$ subsets of the seven families
(127 subsets in total). For each subset we rerun the complete
aggregation pipeline: MLpol on the base subset only, MLpol on the
EWLS pool constructed from that subset, and MLpol on the combined
Base+EWLS pool. All other hyperparameters ($\gamma$ grid,
$\varepsilon_0$, $\alpha$) and the test period
(2019-01-01 to 2021-01-15) are identical to the main experiment. For
each $(H,$ method$)$ we report the mean and the $2.5\%$--$97.5\%$
percentile band of overall RMSE across the $\binom{7}{H}$ subsets at
that level, so the shown spread captures the sensitivity of the
result to \emph{which} families happen to be in the pool.

\paragraph{RMSE vs.\ heterogeneity.}
Figure~\ref{fig:het_rmse} reports the three MLpol variants (Base only,
EWLS only, Base+EWLS) and the best standalone single-$\gamma$ EWLS
(hindsight reference) as a function of $H$. The headline pattern is
sharp:
\begin{itemize}[leftmargin=1.4em,itemsep=1pt,topsep=2pt]
    \item MLpol on Base only needs pool heterogeneity to work. At
    $H = 1$ (one family, one predictor) it is trivially the best single
    base and sits at mean RMSE $\approx 1900$\,MW; adding families
    drives it down monotonically, reaching $1004$\,MW at $H = 7$ (the
    main experiment).
    \item Both EWLS-augmented methods are already within $110$\,MW of
    their $H = 7$ value by $H = 2$, and within $35$\,MW by $H = 3$.
    The marginal return to further heterogeneity is small.
    \item Already at H=2, a Base+EWLS pool outperforms the full seven-family Base-only
    pool.
    \item The $p2.5$--$p97.5$ spread shrinks steeply with $H$: at
    $H = 1$, the choice of family determines whether overall RMSE is
    $750$ or $3200$\,MW; by $H = 4$ every subset sits within $100$\,MW
    of the mean. Thus small pools are sensitive to which model family
    happens to be chosen, whereas larger pools are much more stable.
    Since increasing $H$ only adds base forecasts to the MLpol pool and
    has negligible computational cost relative to fitting the base
    models, this variability favours using as broad a base pool as is
    available.
\end{itemize}

\begin{figure}[ht]
\centering
\includegraphics[width=0.88\linewidth]{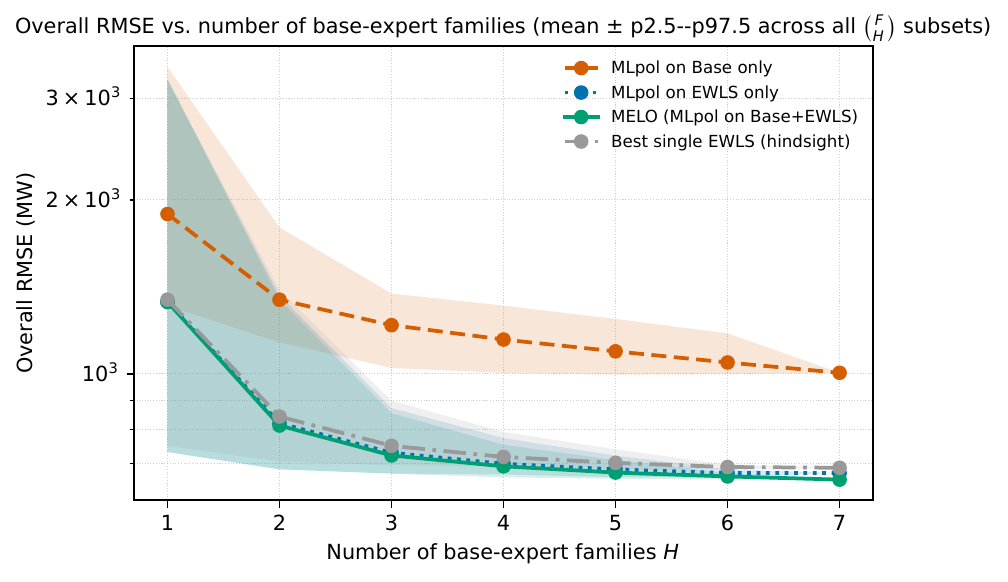}
\caption{Overall RMSE as a function of the number of base-expert
families $H$ (log scale). Each curve is the mean over all
$\binom{7}{H}$ size-$H$ subsets at that level; the shaded band is the
$2.5\%$--$97.5\%$ percentile range across subsets. MLpol on Base only
(vermilion) needs pool heterogeneity to become competitive; both
EWLS-augmented methods (blue, green) plateau within $35$\,MW of their
$H=7$ value by $H=3$. The grey dash-dot line is the hindsight-best
standalone single $\gamma$ per subset, shown as a reference for what
a ``best single memory length'' would achieve without aggregation.}
\label{fig:het_rmse}
\end{figure}

\paragraph{The two sources of diversity are substitutable, not
additive.}
Figure~\ref{fig:het_gain} decomposes the aggregation gain into three
pairwise contrasts. Two observations:

(i)~\emph{EWLS substitutes for family heterogeneity, monotonically.}
The gain Base+EWLS~$-$~Base-only is $568$\,MW at $H = 1$ and falls
monotonically to $325$\,MW at $H = 7$ (the main-experiment value).
The EWLS-only vs.\ Base-only gain traces the same curve, offset by a
near-constant small amount. This is the core substitutability signal:
as family heterogeneity accumulates in the base pool, the marginal
value of adding the EWLS layer declines---precisely because the
base pool itself is now contributing the diversity EWLS would
otherwise provide.

(ii)~\emph{Base experts contribute an orthogonal signal of roughly
constant size.} The gain Base+EWLS~$-$~EWLS-only is $\approx 12$\,MW
for all $H \geq 2$, with $p2.5$--$p97.5$ bands tightly concentrated
around the mean. The base-expert channel contributes a small but
persistent signal that is not captured by any convex combination of
single-memory EWLS experts, and whose magnitude is essentially
independent of how many families are in the pool.

Together (i) and (ii) characterise a clear trade-off: \emph{most} of
the value of the two-layer framework at low $H$ comes from EWLS (it
can rescue a homogeneous pool), and a roughly constant residual value
at every $H$ comes from the complementary base channel (orthogonal to
memory-length aggregation).

\begin{figure}[ht]
\centering
\includegraphics[width=\linewidth]{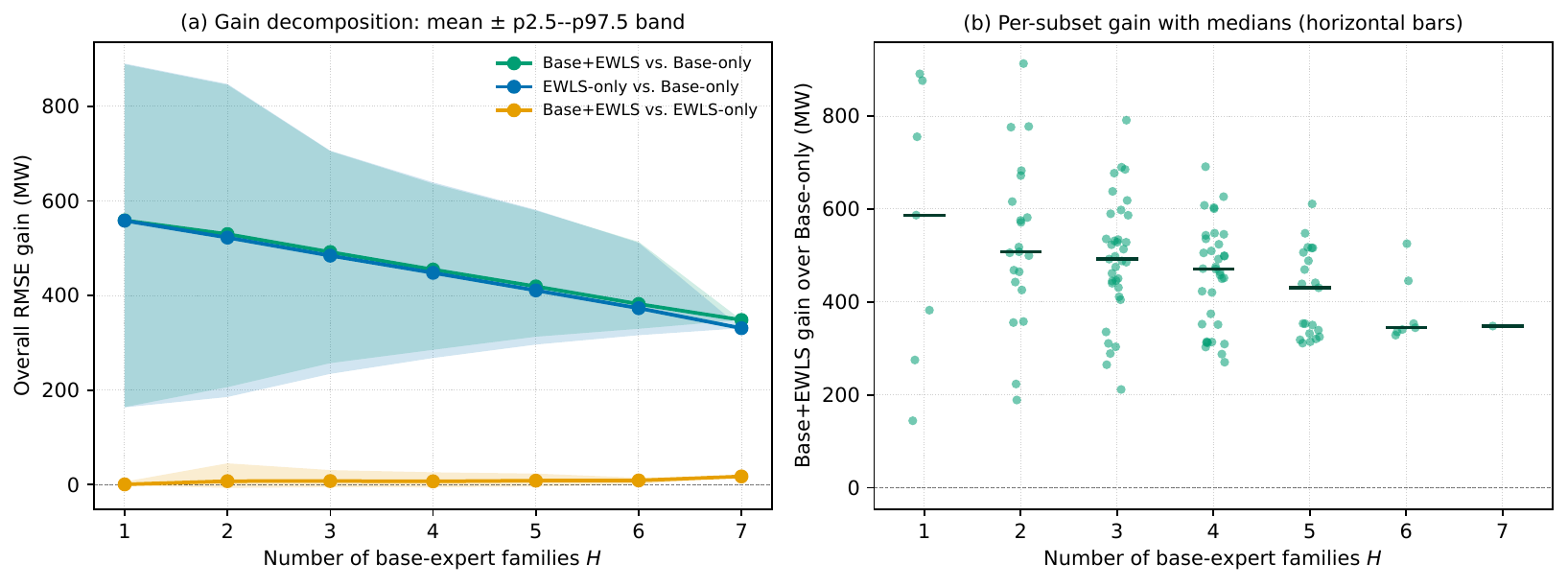}
\caption{\emph{(a)} Decomposition of the aggregation gain into three
pairwise contrasts as a function of $H$. Base+EWLS vs.\ Base-only
(green) and EWLS-only vs.\ Base-only (blue) trace a near-identical
monotone decline from $\sim\!570$\,MW at $H = 1$ to $\sim\!320$\,MW at
$H = 7$, the signature of substitutability between EWLS and family
heterogeneity. Base+EWLS vs.\ EWLS-only (orange) is flat at
$\approx 12$\,MW for all $H \geq 2$: base experts contribute a small
orthogonal signal whose magnitude is independent of pool diversity.
\emph{(b)} Per-subset values of the Base+EWLS gain (green scatter) with
per-$H$ medians (horizontal bars). Positive at every subset we
evaluate.}
\label{fig:het_gain}
\end{figure}

\paragraph{Family heterogeneity does not replace adaptation under drift.} 
Figure~\ref{fig:het_regime} splits the results by regime. The
pre-lockdown and post-lockdown panels show the same
diminishing-returns pattern as the overall metric: Base-only catches
up as $H$ grows, while Base+EWLS is close to asymptote by $H = 3$.
The lockdown panel is qualitatively different---in the structural
break, Base+EWLS RMSE is essentially \emph{flat} in $H$ (median
$1151$\,MW at $H=2$; $1099$\,MW at $H=7$, a $5\%$ reduction), while
Base-only plateaus at around $2400$\,MW even at $H = 7$. This matches
the theoretical picture of Section~\ref{sec:theory}:
under heavy drift, static base experts tend to fail together even when diverse, 
so adding families contributes no tracking
capability---only
the EWLS layer provides explicit online adaptation. Family heterogeneity is a
partial remedy for poor stable-regime predictions; it is not a
substitute for multi-time-scale adaptation under drift.

\begin{figure}[ht]
\centering
\includegraphics[width=\linewidth]{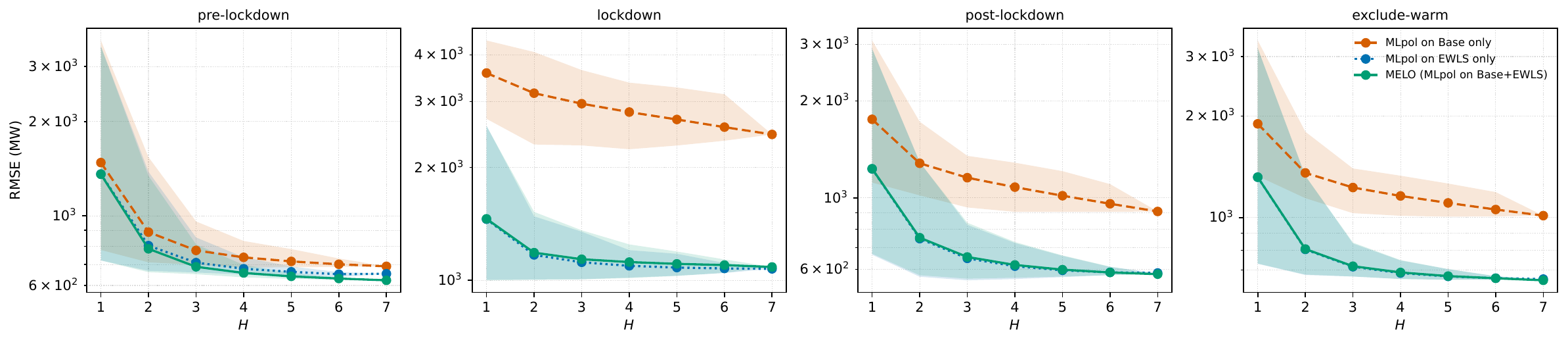}
\caption{Per-regime RMSE as a function of $H$ (log scale). In stable
regimes (pre- and post-lockdown), Base-only catches up as
heterogeneity grows. Under the lockdown structural break, the
Base+EWLS curve is nearly flat in $H$ while Base-only plateaus at
$\sim\!2400$\,MW: the drift-tracking contribution of the framework
comes from EWLS and cannot be replaced by pool heterogeneity.}
\label{fig:het_regime}
\end{figure}

\paragraph{Summary.}
Sweeping the $F=7$ base families yields three findings that together
characterise how the two-layer framework interacts with base-pool
construction:

\begin{enumerate}[leftmargin=1.4em,itemsep=1pt,topsep=2pt]
  \item \textbf{EWLS and family heterogeneity are partial
    substitutes.} The main-experiment Base+EWLS~$-$~Base-only gain of
    $325$\,MW grows monotonically to $568$\,MW as the pool narrows to
    a single family. EWLS provides an alternative source of diversity, expressed along the
    time-scale axis rather than the model-family axis.
  \item \textbf{A homogeneous EWLS-augmented pool outperforms a
    heterogeneous static one.} A two-family Base+EWLS pool already
    beats the seven-family Base-only pool overall. Practitioners
    without access to a rich model zoo can still obtain most of the
    benefit reported in the main experiment from a small pool, as
    long as the EWLS layer is present.
  \item \textbf{Base experts contribute a small, approximately $H$-independent complementary signal.} Adding base experts to an EWLS-only pool improves RMSE
    by $\approx 12$\,MW at any pool diversity, so the two layers are
    complementary rather than redundant---even when diversity is
    abundant.
\end{enumerate}
The headline aggregation gain reported in the main experiment is
therefore not driven by the structural richness of the heterogeneous
base pool: 
it is present at every pool composition tested, and is especially large
in the low-diversity configurations and drift regimes where static
predictors struggle most.
Because the identity of the chosen family matters substantially at
small $H$, and because admitting additional available families is cheap
at aggregation time, the sweep also supports the practical default of
using the broadest causal base pool available.

% -----------------------------------------------------------------------------
\section{Weight dynamics: per-expert details}
\label{app:weight_details}
% -----------------------------------------------------------------------------

Section~\ref{sec:adaptation_diagnostics} presented the MLpol weight trajectory with
base experts aggregated into a single band and EWLS experts shown as
coloured strata, which emphasises the \emph{across-pool}
base/EWLS reallocation. Two further phenomena are important for
interpreting the regret dynamics in Figure~\ref{fig:regret_and_weights}
but cannot be read off that view: (i)~the \emph{within-base}
reallocation, where Lag-1 absorbs much of the remaining base-expert
mass under the break, and (ii)~the \emph{within-EWLS}
reshuffling, where the weight mass moves across time scales from slow
to fast and back. 

% ---------------------------------------------------------------------
\paragraph{Fully disaggregated view.}
Figure~\ref{fig:weights_full_detail} shows the individual weight
trajectory of all $N\!=\!M\!+\!K\!=\!23$ experts, with base experts
coloured individually and EWLS experts coloured by $\gamma$ on the
RdYlBu ramp. The three macro-level claims of
Section~\ref{sec:adaptation_diagnostics}---base dominance in pre-lockdown, abrupt
lockdown reallocation, and medium-memory dominance in the recovery---are
all visible, but two finer patterns become apparent as well.
First, the within-base mass at lockdown onset is not simply spread
uniformly across the original leaders: Lag-1, which had negligible
weight before the break, becomes the dominant member of the remaining
base allocation. Second, within the EWLS strata the reddest bands
(smallest $\gamma$) widen abruptly at lockdown onset at the expense of
the bluest bands (near-static $\gamma\!\to\!1$), consistent with the
optimal memory length $h^{\ast}$ shrinking under drift
(Theorem~\ref{thm:rls-dynamic}). We unpack each of these two axes next.

\begin{figure}[t]
  \centering
  \includegraphics[width=\linewidth]{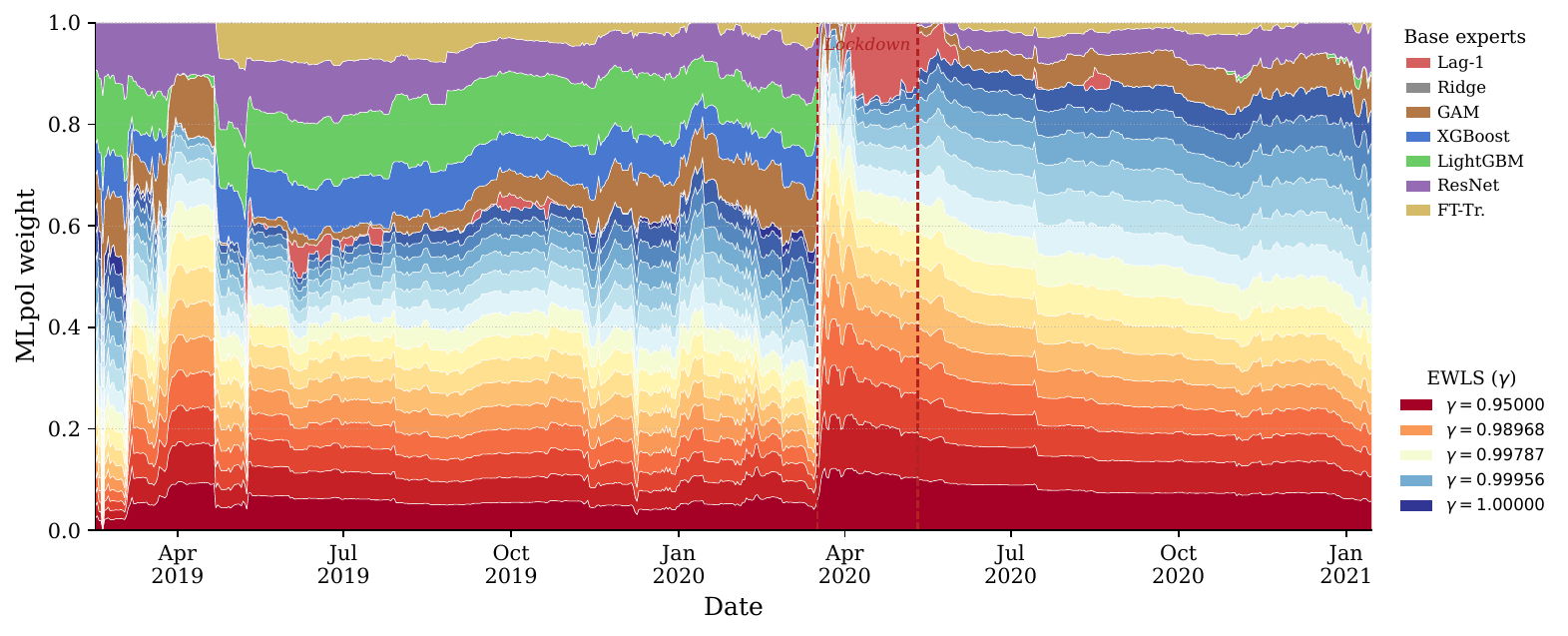}
  \caption{\textbf{Fully disaggregated MLpol weight trajectory.} All
    $N\!=\!M\!+\!K\!=\!23$ experts are shown as individual strata.
    Base experts (bottom of the stack) are coloured individually; EWLS
    experts (top) are coloured by forgetting factor $\gamma$ on the
    RdYlBu ramp (red $=$ fast forgetting, $\gamma\!=\!0.95$; blue $=$
    slow / near-static, $\gamma\!\to\!1$). Dashed red vertical lines
    mark the COVID-19 lockdown window (2020-03-17 to 2020-05-11). The
    plot begins on 2019-02-15 for readability, omitting the initial
    uniform-weight transient. The EWLS legend shows five representative
    $\gamma$'s; all $K\!=\!16$ values appear in the plot.}
  \label{fig:weights_full_detail}
\end{figure}

% ---------------------------------------------------------------------
\paragraph{Within-base reallocation: Lag-1 takes over during the break.}
Figure~\ref{fig:base_pool_detail} isolates the $M\!=\!7$ base experts.
The top panel shows their raw MLpol weight; the bottom panel
normalises to the base sub-simplex (each expert's share of the total
base weight), which removes the confounding effect of base-pool
shrinkage during the lockdown and isolates \emph{internal} reallocation.

\begin{figure}[t]
  \centering
  \includegraphics[width=\linewidth]{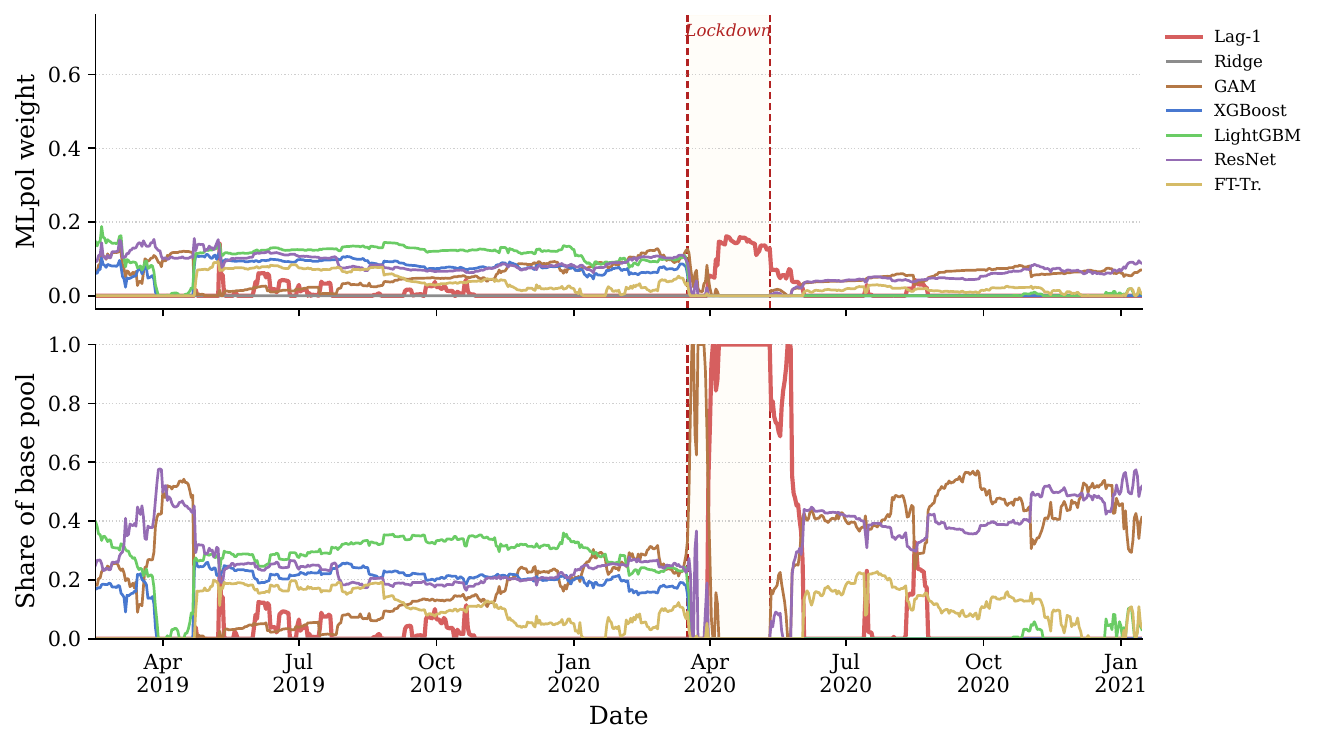}
  \caption{\textbf{Within-base reallocation.} \textit{Top:} raw MLpol
    weight of each of the $M\!=\!7$ base experts. \textit{Bottom:} the
    same weights normalised to sum to one across the base pool, so
    one reads each expert's share of the base allocation independently
    of the base-pool total. The first six weeks covering the
    uniform-weight warm-up transient are clipped. Dashed red vertical
    lines mark the COVID-19 lockdown window. Under the structural
    break the structurally-trained base experts
    (GAM/XGBoost/LightGBM/ResNet/FT-Transformer) collapse jointly
    while Lag-1, which had been receiving essentially zero weight in
    pre-lockdown, takes over the majority of the base sub-simplex.}
  \label{fig:base_pool_detail}
\end{figure}

Three regularities stand out:
\begin{itemize}[leftmargin=1.4em,itemsep=1pt,topsep=2pt]
  \item In the stable pre-lockdown regime, the base allocation is
    dominated by the structurally trained models, roughly in the order
    LightGBM $>$ ResNet $>$ XGBoost $>$ GAM, each receiving
    6--13\% of the total MLpol weight. Lag-1
    and Ridge sit at essentially zero.

  \item During lockdown, the structurally trained base experts collapse
    jointly: GAM, XGBoost, LightGBM, ResNet, and FT-Transformer fall
    from a combined 41.3\% pre-lockdown weight to 2.4\% during
    lockdown. Lag-1 moves in the opposite direction, rising from
    0.5\% to 9.7\%. Thus the base channel does not disappear entirely;
    rather, most of the remaining base-expert mass is reassigned to
    the one-step autoregressive baseline. On the base sub-simplex,
    Lag-1 accounts for roughly 80\% of the lockdown base allocation.
    This is consistent with the role of Lag-1 as an extremely
    short-memory raw expert, analogous in the base channel to the
    fast-forgetting EWLS experts in the correction channel.

  \item Post-lockdown, the structurally trained experts partially
    recover, but not to their pre-lockdown levels: their combined
    weight rises to 12.0\%, compared with 41.3\% before lockdown.
    GAM and ResNet recover most visibly, while XGBoost and LightGBM
    remain near zero. The base pool has learned, in effect, that its
    offline-trained members carry less reliable structure after the
    regime shift.
\end{itemize}

% -----------------------------------------------------------------------------
\section{Extensibility: signal-informed forecasts as additional experts}
\label{app:extensibility}
% -----------------------------------------------------------------------------

The main experiments deliberately use frozen pre-lockdown base models and no
external regime signal, in order to isolate the effect of the online correction
layer. The framework is not tied to this particular base pool. Any causal
forecast whose loss can be evaluated online can be added as an expert in the
final MLpol layer, and can optionally be included in the EWLS design vector.

This gives a simple oracle-inequality interpretation of extensibility. Suppose
that a new causal predictor \(m_X\) is added to an existing pool of \(N\)
experts. Applying Lemma~\ref{lem:mlpol-regret} to the enlarged pool gives an
aggregate \(\hat y^{N+1}\) satisfying
\[
    \sum_{t=1}^T \ell_t(\hat y^{N+1}_t)
    \;\le\;
    \sum_{t=1}^T \ell_t(m_X(t)) + G\sqrt{(N+1)T},
\]
where \(G\) depends on the clipping radius, under the same boundedness conditions as in the
MLpol oracle inequality. Thus adding a signal-informed forecast gives the
aggregate a new comparator, at the cost of increasing the finite-pool overhead
from order \(\sqrt{NT}\) to order \(\sqrt{(N+1)T}\). This does not imply
finite-sample monotonic improvement over the previous pool, but it means that
useful auxiliary signals can be incorporated without changing the algorithmic
structure.

Table~\ref{tab:tabicl-extension} tests this extensibility by adding a strong
daily-retrained TabICL forecast augmented with the Oxford Government Response
Index (TabICL+GRI). This is a stronger-information setting than the main
experiment, so we report it as a diagnostic rather than as the primary
comparison. Three increasingly rich integrations all reduce overall RMSE:
adding TabICL+GRI to a base-only MLpol pool reduces RMSE from \(1004.03\) to
\(669.14\); adding it only as an extra raw expert to the Base+EWLS pool reduces
RMSE from \(655.78\) to \(621.06\); and additionally including it in the EWLS
design vector reaches \(601.86\), with the largest gain during lockdown.

\begin{table}[ht]
    \caption{Extensibility experiment with a regime-informed foundation-model
    forecast. TabICL+GRI denotes the daily-retrained TabICL forecast augmented
    with the Oxford Government Response Index. ``Raw only'' adds TabICL+GRI only
    as an expert in the final MLpol pool. ``Via EWLS'' also includes TabICL+GRI
    among the inputs to the EWLS correction layer. Lower is better.}
    \label{tab:tabicl-extension}
    \centering
    \small
    \setlength{\tabcolsep}{4.5pt}
    \begin{tabular}{lrrrr}
        \toprule
        Method & Pre-lockdown & Lockdown & Post-lockdown & Overall \\
        \midrule
        MLpol on Base only                         & 690.54 & 2452.70 & 907.04 & 1004.03 \\
        TabICL+GRI                                 & 667.96 & 1132.62 & 699.46 & 723.43 \\
        MLpol on Base + TabICL+GRI                 & 607.81 & 1192.50 & 606.08 & 669.14 \\
        MLpol on Base+EWLS                         & 623.10 & 1086.07 & 579.25 & 655.78 \\
        MLpol on Base+EWLS + TabICL+GRI raw only   & 594.90 & 980.20  & 559.19 & 621.06 \\
        MLpol on Base+EWLS + TabICL+GRI via EWLS   & 582.98 & 895.87  & 550.27 & 601.86 \\
        \bottomrule
    \end{tabular}
\end{table}

This supports the intended deployment interpretation of \melo{}: it is not a
closed ensemble of weak static models, but an online adaptation layer around any
available causal forecasts. Stronger base forecasts can be incorporated
directly, and the EWLS layer can still exploit residual structure in their joint
prediction vector.

\end{document}